\documentclass{article}

\usepackage[final]{neurips_2024}




\usepackage[utf8]{inputenc} 
\usepackage[T1]{fontenc}    
\usepackage{hyperref}       
\usepackage{url}            
\usepackage{booktabs}       
\usepackage{amsfonts}       
\usepackage{nicefrac}       
\usepackage{microtype}      
\usepackage{xcolor}         

\usepackage{algorithmic}
\usepackage{graphicx}
\usepackage{textcomp}

\usepackage{multirow}
\usepackage{pifont}
\usepackage{bm}
\usepackage{amsmath} 
\usepackage{amssymb}
\usepackage{amsthm}

\usepackage{bbm}
\usepackage{graphicx}

\usepackage{caption}
\usepackage{float} 
\usepackage{subcaption}
\usepackage{algorithm}
\usepackage{algorithmic}

\usepackage[capitalize,noabbrev,nameinlink]{cleveref} 
\crefname{lem}{Lemma}{Lemmas}
\crefname{defin}{Definition}{Definitions}
\crefname{thm}{Theorem}{Theorems}

\newtheorem{theorem}{\bf Theorem}
\newtheorem{lemma}{\bf Lemma}

\newtheorem{definition}{\bf Definition}

\definecolor{lightgray}{gray}{0.9}

\title{L{\large AMBDA}: Learning Matchable Prior For Entity Alignment with Unlabeled Dangling Cases}

%

\author{%
	Hang Yin, Liyao Xiang\thanks{Corresponding author: Liyao Xiang, xiangliyao08@sjtu.edu.cn.}\\
	Shanghai Jiao Tong University\\
	Shanghai, China \\
	\texttt{\{yinhang\_SJTU, xiangliyao08\}@sjtu.edu.cn} \\
	\And
	Dong Ding\\
	Shanghai Jiao Tong University\\
	Shanghai, China \\
	\texttt{18916162516@sjtu.edu.cn}
	\And
	Yuheng He, Yihan Wu, Pengzhi Chu, Xinbing Wang\\
	Shanghai Jiao Tong University\\
	Shanghai, China \\
	\texttt{\{heyuheng, caracalla, pzchu, xwang8\}@sjtu.edu.cn}
	\And
	Chenghu Zhou\\ 
	Chinese Academy of Sciences\\
	Beijing, China \\
	\texttt{zhouchsjtu@gmail.com}
}

\begin{document}
	
	\maketitle
	
\begin{abstract}
	We investigate the entity alignment (EA) problem with unlabeled dangling cases, meaning that partial entities have no counterparts in the other knowledge graph (KG), yet these entities are unlabeled. The problem arises when the source and target graphs are of different scales, and it is much cheaper to label the matchable pairs than the dangling entities. To address this challenge, we propose the framework \textit{Lambda} for dangling detection and entity alignment. Lambda features a GNN-based encoder called KEESA with a spectral contrastive learning loss for EA and a positive-unlabeled learning algorithm called iPULE for dangling detection. Our dangling detection module offers theoretical guarantees of unbiasedness, uniform deviation bounds, and convergence. Experimental results demonstrate that each component contributes to overall performances that are superior to baselines, even when baselines additionally exploit 30\% of dangling entities labeled for training.
\end{abstract}

\section{Introduction}\label{Introduction}
Entity alignment is a problem that seeks entities referring to the same real-world identity across different knowledge graphs (KGs), and is widely deployed in fields such as knowledge fusion, question-answering, web mining, etc. To address the issue, embedding-based methods have been proposed to capture entity similarity in the embedding space through translation-based \cite{luo2022accurate, xu2019cross, liu2022selfkg} or graph neural network (GNN)-based \cite{wang2018cross, sun2020knowledge, mao2020relational, mao2020mraea, sun2018bootstrapping} models. Particularly, if the entities do not have counterparts on another KG, the entities are referred to as \textit{dangling entities}, as shown in Fig.~\ref{fig:dangling-alignment}. 

In many real-world scenarios, the labels for the dangling entities on KGs are often missing, as those labels are much harder to acquire. For example, in KG plagiarism detection, it is relatively easy to align entity pairs that both exist in KGs, but one would have to traverse all possible pairs to conclude an entity is not paired. Hence \textit{EA with unlabeled dangling entities} is a hard but realistic problem. The problem even worsens in EA on KGs of different scales where the dangling entities take a large proportion of all nodes.  

Despite that many works have been investigating the EA problem with dangling entities, few have focused on EA with unlabeled dangling cases. We list closely related works in Table~\ref{table:summary}. The work of \cite{sun2021knowing} extends the conventional EA methods MTransE \cite{chen2017multilingual} and AliNet \cite{sun2020knowledge} to the case with dangling entities, and thus require a portion of labeled dangling entities for training. Based on their works, MHP  \cite{liu2022dangling} has improved performance with additional knowledge, i.e., the high-order proximities information for alignment. Both UED \cite{luo2022accurate} and SoTead \cite{luo2022semi} are unsupervised schemes that rely on side information such as entity name/attribute as global alignment information. Different from prior works, we consider a stricter case where neither side information nor any labeled dangling entities are known, as side information often leads to name-bias \cite{liu2023unsupervised, zhang2020industry} while labels for dangling entities are hard to obtain.

\begin{figure}[htbp]
	\begin{minipage}{0.49\linewidth}
		\centering
		\includegraphics[height = 0.128\textheight, width = 0.7\textwidth]{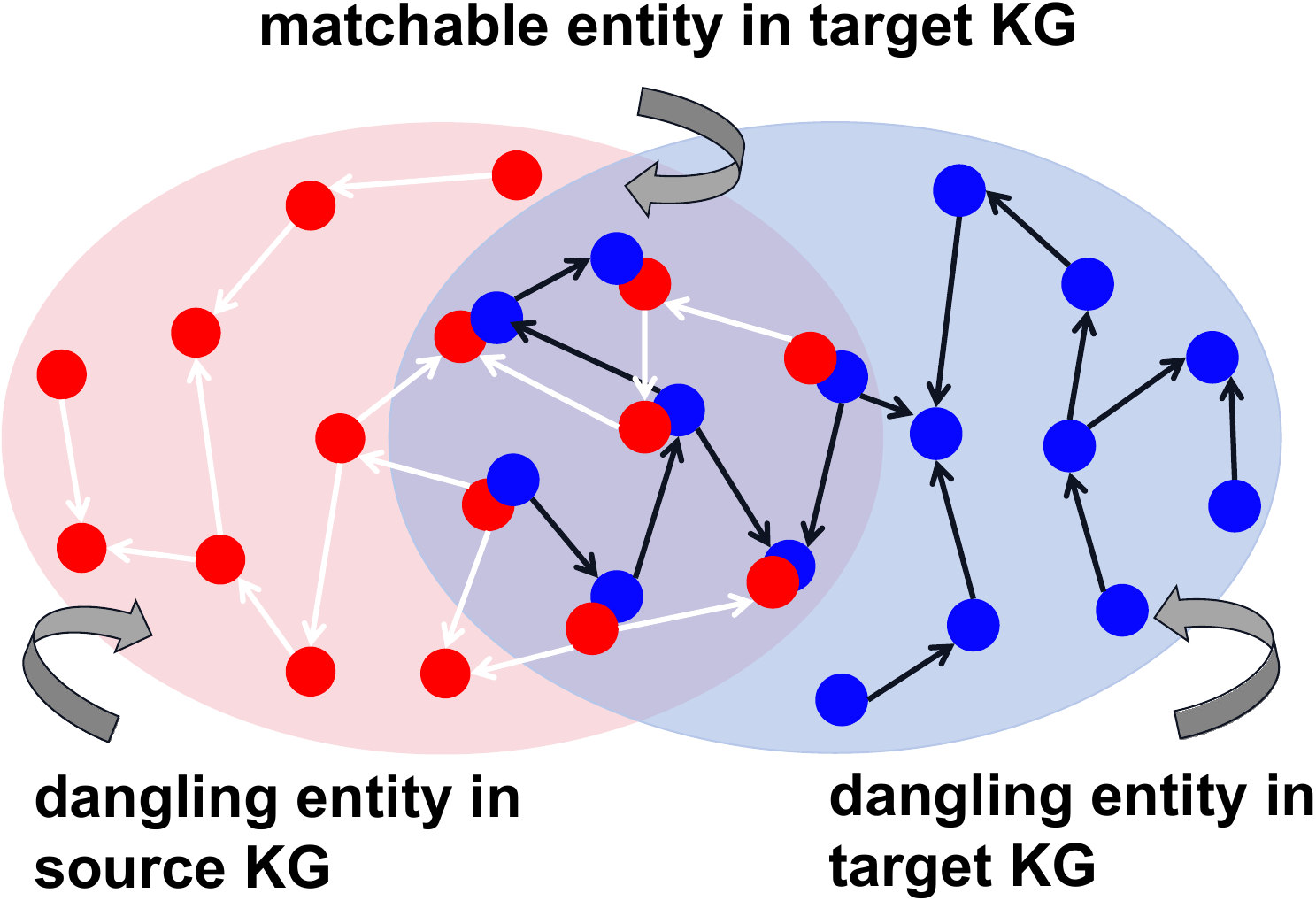}
		\caption{\centering Examples of dangling entities.}
		\label{fig:dangling-alignment}
	\end{minipage}
	\begin{minipage}{0.5\linewidth}
		\centering
		\renewcommand\arraystretch{1}
		\setlength{\tabcolsep}{3pt}
		\begin{tabular}{@{}lcc@{}}
			\toprule
			\multicolumn{1}{l}{Method} & \multicolumn{1}{c}{Side Info} & \multicolumn{1}{c}{Dangling Labels} \\ 
			\hline
			\cite{sun2021knowing} w/ MTransE &        \ding{55}               &        \ding{52}               \\
			\cite{sun2021knowing} w/ AliNet &         \ding{55}              &         \ding{52}              \\
			UED \cite{luo2022accurate}  &            \ding{52}           &           \ding{55}            \\
			SoTead \cite{luo2022semi}   &         \ding{52}              &         \ding{55}             \\
			MHP \cite{liu2022dangling}       &            \ding{55}             &      \ding{52} + high-order info    \\  
			Our Work   &       \ding{55}                 &      \ding{55}    \\
			\bottomrule                 
		\end{tabular}
		\captionof{table}{Different EA models with dangling cases.}\label{table:summary}
		\label{Tab: mic}
	\end{minipage}  
\end{figure}

The EA with unlabeled dangling entities faces unique challenges: \textit{first,} the unlabeled dangling entities would cause erroneous information to propagate through neighborhood aggregation if applying conventional GNN-based embedding methods, negatively affecting the dangling detection and alignment of matchable entities. \textit{Second,} the absence of labeled dangling entities makes its feature distribution non-observable, requiring the model to distinguish potential dangling entities while learning the representation of matchable entities. Hence the EA problem has to be solved with mere positive (matchable entities with labels) and unlabeled samples, yet without any prior knowledge of the distribution of the nodes.

We tackle the first challenge by proposing a novel GNN-based EA framework. To eliminate the `pollution' of dangling entities, the adaptive dangling indicator has been applied globally for selective aggregation. Relation projection attention is designed to combine both entity and relation information for a more comprehensive representation. The designed spectral contrastive learning loss disentangles the matchable entities from dangling ones while portraying a unified embedding space for entity alignment.

As to the second challenge, we first derive an unbiased risk estimator and a tighter uniform deviation bound for the positive-unlabeled (PU) learning loss. However, such an estimator still requires prior knowledge of the proportion of positive entities among all nodes. Thus we develop an iterative strategy to estimate such prior knowledge while training the classifier with a PU learning loss. The prior estimation could also be seen as dangling entity detection; if too few entities are determined to be matchable, one can stop all subsequent procedures and decide the two KGs cannot be aligned.

Our framework Lambda is provided in Fig.~\ref{fig:framework} where there are two phases corresponding to two trained models --- dangling detection and entity alignment. Both phases share one GNN-based encoder and the spectral contrastive learning loss. The dangling detection additionally uses a positive-unlabeled learning loss. The GNN-based encoder contains both the intra-graph and the cross-graph representation learning modules. After the dangling detection, the estimated proportion of matchable entities is figured for judging whether two KGs could be aligned. Only aligned KGs are sent for EA model training, and then only first-phase predicted matchable entity embeddings are obtained from the EA model for inference. Finally, we select pairs of entities that are mutually nearest by their embeddings as aligned pairs.  

Highlights of our contributions are as follows: we raise the challenging problem of EA with unlabeled dangling entities for the first time. To resolve the issue, we propose the framework Lambda featured by a GNN-based encoder called KEESA with spectral contrastive learning for EA and a positive-unlabeled learning algorithm for dangling detection called iPULE. We provide a theoretical analysis of PU learning on the unbiasedness, uniform deviation bound, and convergence. Experiments on a variety of real-world datasets have demonstrated our alignment performance is superior to baselines, even the baselines with 30\% labeled dangling entities. Our code is available on github\footnote{https://github.com/Handon112358/NeurIPS\_2024\_Learning-Matchable-Prior-For-Entity-Alignment-with-Unlabeled-Dangling-Cases}.

\begin{figure*}[t]
	\centering
	\includegraphics[height = 0.31\textheight, width = 0.95\textwidth]{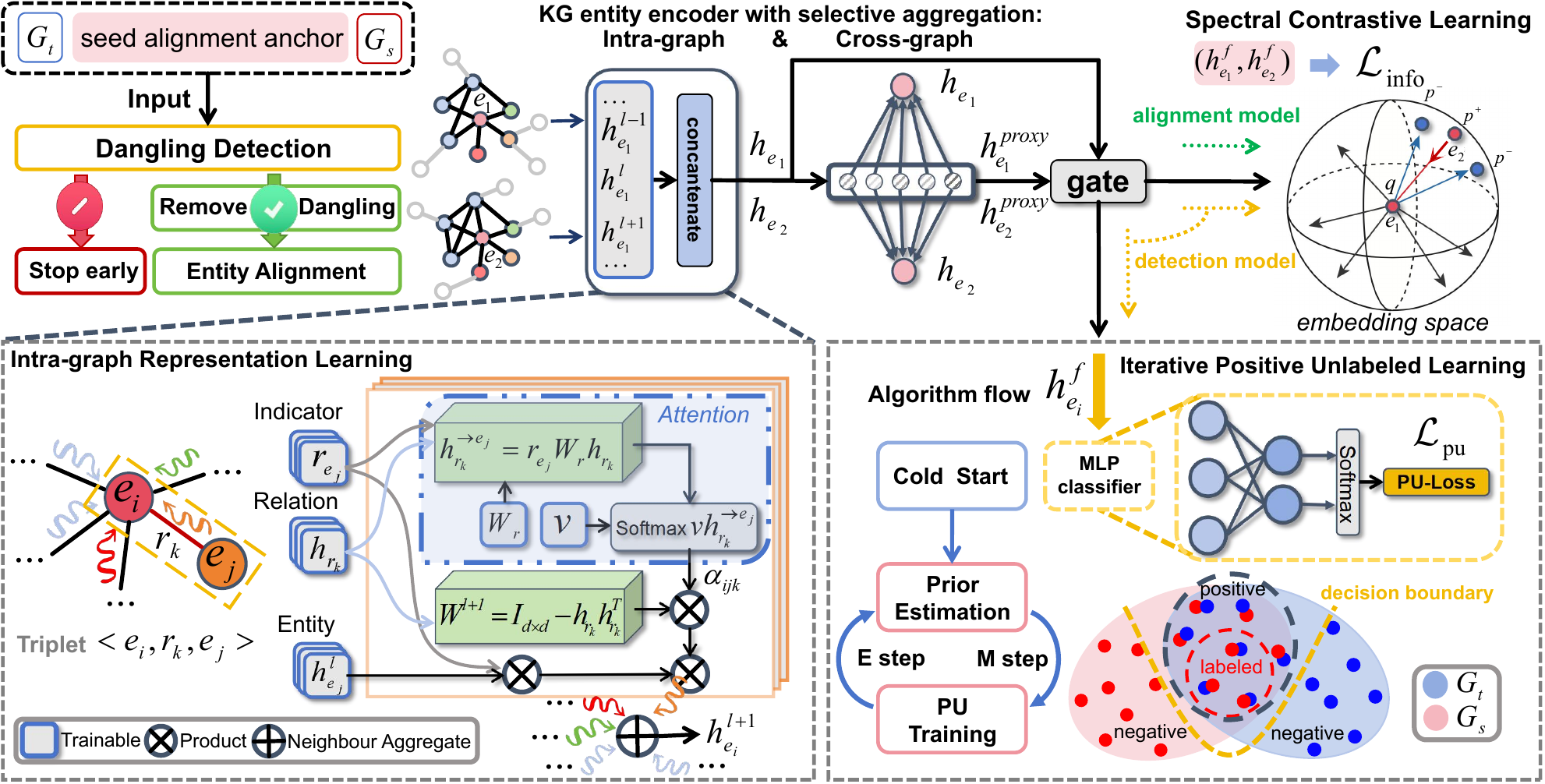}
	\caption{The illustration of our framework.}
	\label{fig:framework}
\end{figure*}

\section{Preliminaries and Related Work}

\subsection{Entity Alignment}
Embedding-based entity alignment methods have evolved rapidly and are gradually becoming the mainstream approach of EA in recent years due to their flexibility and effectiveness \cite{li2023vision}, which aim to encode KGs into low-dimensional embedding space to capture the similarities of entities \cite{li2023active, ji2021survey}. It could be divided into translation-based \cite{luo2022accurate, xu2019cross, liu2022selfkg} and GNN-based \cite{wang2018cross, sun2020knowledge, mao2020relational, mao2020mraea, sun2018bootstrapping}. 

Previous EA methods mostly assume a one-to-one correspondence between two KGs, but such an assumption does not always hold and thus leads to a performance drop in real-world cases \cite{sun2020benchmarking}. Notably, Sun \emph{et al.} \cite{sun2021knowing} as a pioneering work modeled it upon a supervised setting, i.e., a small set of aligned entities and labeled dangling entities. On this basis, MHP \cite{liu2022dangling} employed more dangling information concerning high-order proximities in both training and inference. UED \cite{luo2022accurate} and SoTead \cite{luo2021graph} propose an unsupervised translation-based method for joint entity alignment and dangling entity detection without labeled dangling entities, while the practical problem of matching cost is ignored.

\subsection{Positive-Unlabled Learning}
\label{sec:PU_setting}
Let $X\in\mathbb{R}^d, d\in\mathbb{N}, \mathrm{~and~}Y\in\{\pm1\}$ be the input and output random variables. We also define $p(x,y)$ to be the \textit{joint probability density of $(X, Y)$}, $p_\mathrm{p}(x)=p(x\mid y=+1)$, $p_\mathrm{n}(x)=p(x\mid y=-1)$ to be the \textit{P (Positive) and N (Negative) marginals (a.k.a., class-conditional densities)}, and $p_\mathrm{u}(x)$ be the \textit{U (Unlabeled) marginal}. The \textit{class-prior probability} is expressed as $\pi_{p}=p(y=+1)$, which is assumed to be known throughout the paper, and can be estimated from known datasets \cite{kiryo2017positive}.

The PU learning problem setting is as follows: the positive and unlabeled data are sampled independently from $p_\mathrm{p}(x)$ and $p_\mathrm{u}(x)$ as $\mathcal{X}_\mathrm{p}=\{x_i^\mathrm{p}\}_{i=1}^{n_\mathrm{p}}\sim p_\mathrm{p}(x)$ and $\mathcal{X}_\mathrm{u}=\{x_i^\mathrm{u}\}_{i=1}^{n_\mathrm{u}}\sim p_\mathrm{u}(x)$, and a classifier is trained from $\mathcal{X}_\mathrm{p}$ and $\mathcal{X}_\mathrm{u}$, in contrast to learning a classifier telling negative samples apart from positive ones. The general assumption of the previous work is to let the unlabeled distribution be equal to the overall data distribution, i.e., $p_\mathrm{u}(x) = p(x)$ since $p_\mathrm{u}(x)$ cannot be obtained, but the assumption hardly holds in many real-world scenarios, for example transductive learning, making methods in \cite{kiryo2017positive, niu2016theoretical} infeasible.

\section{Selective Aggregation with Spectral Contrastive Learning}\label{sec:KEESA}
\textbf{Notation:} Source and target KG $G_s=(E_s, R_s, T_s)$, $G_t=(E_t, R_t, T_t)$ stored in triples $<$entity, relation, entity$>$: entities $E$, relations $R$, and triples $T \subseteq E \times R \times E$, $E_s=D_s \cup M_s, E_t=D_t \cup M_t$, where $D$ denotes dangling and $M$ denotes matchable. A set of pre-aligned anchor node pairs are $S=\left\{(u,v)|u \in M_s, v \in M_t, u \equiv v\right\}$. (see appendix~\ref{app_0} for more details). 

We start by introducing the KEESA (\textbf{K}G \textbf{E}ntity \textbf{E}ncoder with \textbf{S}elective \textbf{A}ggregation).
\subsection{KG Entity Encoder with Selective Aggregation}
\textbf{Adaptive Dangling Indicator \& Relation Projection Attention.}
Real-world EA tasks often involve graphs with dangling distortion \cite{surisetty2022reps, belhajjame2023online}. Conventional GNN aggregation will `pollute' matchable entities' embeddings with dangling. However, a hard dangling indicator for the entity is over-confident as only approximate results can be obtained without labels. Incorrect indicators may lead to inappropriate aggregation and thus destruction of the KG structure.  Instead, we apply a learnable scalar weight $r_{e_j}$ for each $e_i$'s neighboring message:
\begin{equation}
	\bm{h}^{l+1}_{e_i} = \sigma\left( \sum_{e_j\in \mathcal{N}_{e_i} \cup \{e_i\}} \underbrace{\tanh(r_{e_j})}_{\text{adaptive dangling indicator}} \alpha_{i,j} W^{l+1} \bm{h}^{l}_{e_j}\right),\label{intra}
\end{equation}
where $\tanh$ serves to normalize $r_{e_j}$ to the range of $[-1, 1].$ The initialization of $r_{e_j}$ is critical, please see the implementation details for more.

As compressed feature of $e_j$ --- $r_{e_j}$ is a plain scalar, we link relation $r_k$'s embedding $\bm{h}_{r_k}$ to the associated entity $e_j$ by $\bm{h}_{r_k}^{\to e_j}$ for capturing more comprehensive attention. A matrix $W_r \in \mathbb{R}^{d \times d}$ with an orthogonal regularizer $L_o$ is applied to $\bm{h}_{r_k}$ to perform projection while preserving its norm for better convergence: 
\begin{equation}
	\bm{h}_{r_k}^{\to e_j} = r_{e_j} W_r \bm{h}_{r_k} \text{\quad and \quad} 
	L_o=\left\|W_r^\top W_r-I_{d \times d}\right\|_2^2.\notag
\end{equation}
The attention coefficient is obtained by the following equation, where $\bm{v}^\top$ is the attention vector:
\begin{equation}
	\label{attention}
	\alpha^{l}_{ijk} = \frac{\exp(\bm{v}^\top {\bm{h}_{r_k}^{\to e_j}})}{\sum_{{e_m\in \mathcal{N}_{e_i}},{<e_i, r_n, e_m> \in T_s \cup T_t}} \exp(\bm{v}^\top \bm{h}_{r_n}^{\to e_m})}
\end{equation}

\textbf{Intra- \& Cross-Graph Representation Learning.}
Based on the above, we can express the embedding of $e_i$ at the $(l+1)$-th layer $\bm{h}_{e_i}^{l+1}$ as Eq.~\ref{intra}, where $W^{l+1}$ is specified as $W^{l+1}=I_{d \times d}-2\bm{h}_{r_k}\bm{h}_{r_k}^{\top}$ by the triplet $<e_i, r_k, e_j>$ inclusive relation embedding $\bm{h}_{r_k}$. We adopt the $\tanh(\cdot)$ as the activation function. The Householder transformation $W^{l+1}$ is applied on the last layer embedding $\bm{h}_{e_i}^{l}$ to restore the useful relative positions of KG entities at each layer recursively. 

Overall, the \textit{intra-graph representation} $\bm{h}_{e_i}$ of $e_i$ is obtained by concatenating embeddings from all layers. Its \textit{cross-graph representation} $\bm{h}_{e_i}^{proxy}$ can be described by $\bm{h}_{e_i}$ and proxy $\bm{q}_j$, where the latter is generated by \emph{Proxy Matching Attention Layer} \cite{mao2021boosting} to align the embeddings across two graphs. With $S_{p}$ representing the set of proxy vectors, and $\textrm{sim}(\cdot)$ denoting the cosine similarity, we have:
\begin{equation}
	\bm{h}_{e_i} = [\bm{h}^0_{e_i}\|\bm{h}^1_{e_i}\|...\|\bm{h}^l_{e_i}]\quad\text{ and }\quad
	\bm{h}_{e_i}^{proxy} = \sum_{\bm{q}_j \in S_{p}} \frac{\exp(\textrm{sim}(\bm{h}_{e_i},\bm{q}_j))} {\sum_{\bm{q}_{k} \in S_{p}} \exp(\textrm{sim}(\bm{h}_{e_i},\bm{q}_{k}))}(\bm{h}_{e_i} - \bm{q}_{j}).\notag
\end{equation}
Finally, we employ a gating mechanism \cite{srivastava2015highway} to integrate both intra-graph representation $\bm{h}_{e_i}$ and cross-graph representation $\bm{h}^{proxy}_{e_i}$ into $\bm{h}^{f}_{e_i}$:
\begin{equation}
	\bm{\theta}_{e_i} = \textrm{sigmoid}(\bm{W}_{g}\bm{h}^{proxy}_{e_i}+\bm{b}),\quad\quad
	\bm{h}_{e_i}^{f} = [(\bm{\theta}_{e_i}\cdot \bm{h}_{e_i} + (1-\bm{\theta}_{e_i})\cdot \bm{h}^{proxy}_{e_i}) \|r_{e_i}],\notag
\end{equation}
where $\bm{W}_{g}$ and $\bm{b}$ are the gate weight and gate bias, respectively. The learnable weight of $e_i$ is also attached to the embedding. It is worth noticing that for each entity on either $G_s$ or $G_t$, they are encoded by one shared KEESA with below spectral contrastive learning for a unified representation space.

\subsection{Spectral Contrastive Learning}
In this part, we propose the spectral contrastive learning loss $\mathcal{L}_{\mathrm{info}}$ with high-quality negative sample mining, which serves both tasks (entity alignment and dangling detection) at the same time. Specifically, given a pre-aligned matchable entity $e_i\in \mathcal{X}_p$, let there be a paired positive sample entity $e^i_{+} \in \mathcal{X}_p$, such that $(e_i, e^i_{+}) \in S$, and $N$ sampled entity $\{e^i_{j}\}^{N}$ as negative samples $(e_i, e^i_{j}) \notin S$. The spectral contrastive learning loss is one specific form of alignment loss $H(\cdot)$:
\begin{equation}
	\mathcal{L}_{\mathrm{info}} = \sum_{e_i\in \mathcal{X}_p} \log\left[1 + \sum^N_{j} \exp(\lambda~ H(e_i, e^i_{+}, e^i_{j}))\right].
	\label{L1}
\end{equation}

\textbf{Unified Representation for Entity Alignment.} We expect a unified feature space where the distance between aligned anchor node pairs is as close as possible, while the unaligned is on the contrary. To satisfy this intuition, we introduce an alignment loss:
\begin{equation}
	H(e_i, e^i_{+}, e^i_j) = [ \textrm{sim}(e_i, e^i_{j}) - \textrm{sim}(e_i, e^i_{+}) + \gamma ]_+,
	\label{eq:hh}
\end{equation}
where $[x]_+$ represents $\max(0,x)$ and $\textrm{sim}(\cdot)$ indicates $L_2$-norm distance between the embeddings. A soft margin $\gamma$ is involved to discourage trivial solutions, e.g., $\textrm{sim}(e_i, e^{i}_{j}) = \textrm{sim}(e_i, e^{i}_{+}) = 0$.

\textbf{Discrimination for Dangling Detection.} For our proposed dangling detection, the vital task is to discriminate the dangling from the matchable ones with unlabeled dangling entities. Hence unsupervised method of spectral clustering is exploited to separate two types of entities. We design the loss function according to Lemma~\ref{spec_clus} to achieve its equivalent effect.
\begin{lemma}
	\label{spec_clus}
	Given one positive sample $p^{+}$ for $q$, and $N$ negative samples $\{p^{-}_j\}^N$ (node set: $\{q, p^{+}\} \cup \{p^{-}_j\}^N$), employing the following loss function is equivalent to conducting spectral clustering on similarity graph $\pi$ with the temperature hyper-parameter $\lambda$:
	\begin{equation}
		\begin{aligned}
			\textit{infoNCE}(q, p^{+}, \{p^{-}\}^{N}) &= - \log \frac{\exp(\lambda~ \textrm{sim}(q, p^{+}))}{ \exp(\lambda~ \textrm{sim}(q, p^{+}))+ \sum_{j}^N \exp(\lambda~ \textrm{sim}(q, p^{-}_j))}.\\
		\end{aligned}
	\end{equation}
\end{lemma}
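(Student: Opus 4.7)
The plan is to identify the infoNCE minimisation problem with matrix factorisation of the normalised adjacency of $\pi$, and then invoke the classical fact (Eckart--Young together with Ng--Jordan--Weiss) that such a factorisation returns the spectral embedding used by spectral clustering.

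First I would formalise $\pi$ on the vertex set $V=\{q,p^{+}\}\cup\{p^{-}_j\}_{j=1}^{N}$ by declaring edge weights $w_{xy}$ proportional to the probability of drawing $(x,y)$ as an anchor--positive pair in a contrastive batch. Writing $A=(w_{xy})$, $D$ the diagonal degree matrix, and $\bar A = D^{-1/2} A D^{-1/2}$, spectral clustering means computing the top-$k$ eigenvectors of $\bar A$, which by Eckart--Young are equivalent to the minimiser $F^{\star}$ of $\|\bar A - F F^{\top}\|_{F}^{2}$.

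Next I would unfold infoNCE. Using $-\log \frac{e^{a}}{e^{a}+\sum_j e^{b_j}} = -a + \log\bigl(e^{a}+\sum_j e^{b_j}\bigr)$, the first piece, after taking expectations over the sampling distribution of $(q,p^{+})$, collapses to $-\lambda\sum_{x,y}w_{xy}\,\textrm{sim}(x,y)$. For the log-sum-exp piece, a second-order Taylor expansion at small $\lambda$ (or, equivalently, a large-$N$ asymptotic) produces a quadratic $\propto \sum_{x,y} d_{x}d_{y}\,\textrm{sim}(x,y)^{2}$ weighted by marginal degrees. Substituting $\textrm{sim}(x,y)=f(x)^{\top}f(y)$, these two pieces reproduce the spectral contrastive loss of HaoChen et al.\ (2021), which is shown there to be, up to additive constants, the Frobenius loss $\|\bar A - FF^{\top}\|_{F}^{2}$. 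Applying Eckart--Young then identifies the minimiser $F^{\star}$ with the top-$k$ eigenvectors of $\bar A$ (up to orthogonal rotation), i.e.\ precisely the spectral embedding on which $k$-means is subsequently run in Ng--Jordan--Weiss spectral clustering.

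The main obstacle will be bridging the log-sum-exp of infoNCE and the quadratic of the spectral contrastive loss, since the two agree only after a Taylor step. I would handle this either by bounding the remainder (exploiting that $\textrm{sim}(q,p^{-}_j)$ concentrates near $0$ at optima under a unit-norm constraint on $f$), or by a direct KKT analysis showing that the stationarity conditions of infoNCE coincide with the eigenvalue equation $\bar A f=\mu f$ up to a $\lambda$-dependent scaling; either route is sufficient to promote the limit-level equivalence into an equivalence of optimisers, which is what the lemma asserts.
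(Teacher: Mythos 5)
You should first be aware of what the paper actually does here: it does not prove Lemma~\ref{spec_clus} from first principles. It treats the spectral-clustering equivalence as a known result, citing Tan et al.~\cite{tan2023contrastive} and the probabilistic graph-coupling framework of van Assel et al.~\cite{van2022probabilistic}, and the only derivation supplied in the text is the elementary algebraic rewriting $\textit{infoNCE}(q,p^{+},\{p^{-}\}^{N})=\log\bigl[1+\sum_{j}^{N}\exp\bigl(\lambda\,\textrm{sim}(q,p^{-}_j)-\lambda\,\textrm{sim}(q,p^{+})\bigr)\bigr]$, whose purpose is to connect infoNCE to the loss in Eq.~\eqref{L1}, not to say anything about eigenvectors. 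You are therefore attempting something strictly harder than the paper, and your general territory is right: a similarity graph built from the anchor--positive sampling distribution, the normalised adjacency $\bar A=D^{-1/2}AD^{-1/2}$, and Eckart--Young for the low-rank factorisation are all ingredients of the cited results.

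The genuine gap is the bridge from the log-sum-exp of infoNCE to the quadratic term of HaoChen et al.'s spectral contrastive loss. That identification holds only after a second-order Taylor expansion, i.e.\ in a small-$\lambda$ or concentrated-similarity asymptotic regime, whereas the lemma asserts an equivalence for the loss as written, with $\lambda$ a fixed temperature (set to $30$ in this paper's experiments --- far outside any Taylor regime). Neither of your proposed patches closes this: the concentration argument fails because at an infoNCE optimum the negative similarities $\textrm{sim}(q,p^{-}_j)$ need not be near zero (and the Taylor remainder is controlled by $\lambda\cdot\textrm{sim}$, not by $\textrm{sim}$ alone), and the KKT route fails because the stationarity condition of infoNCE involves softmax weights that themselves depend on the current embedding, so the fixed-point equation is nonlinear and does not reduce to $\bar A f=\mu f$. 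The route that delivers an \emph{exact} equivalence --- and the one the cited reference~\cite{tan2023contrastive} actually takes --- is to rewrite the population infoNCE as a cross-entropy between the similarity-graph distribution and a Gibbs distribution over graphs induced by the learned embeddings (the coupling of~\cite{van2022probabilistic}), and then to show that minimising that cross-entropy is precisely the low-rank factorisation problem solved by spectral clustering; no Taylor step is needed. If you want a self-contained proof of the lemma, you should reproduce that argument rather than pass through the HaoChen approximation.
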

The equivalence has been discussed in previous studies \cite{tan2023contrastive, van2022probabilistic, chen2020simple, sun2020circle}. Regarding our proposed problem, the positive samples are the corresponding pairs whereas the negative samples are those sampled unaligned pairs. The equivalence is derived as follows: 
\begin{equation}
	\begin{aligned}
		\textit{infoNCE}(q, p^{+}, \{p^{-}\}^{N}) &=\log\frac{\exp(\lambda\text{sim}(q, p^{+}))+\sum_{j}^N\exp(\lambda\text{sim}(q, p^{-}_j))}{\exp(\lambda~ \textrm{sim}(q, p^{+}))}\\
		&=\log [1+ \frac{ \sum_{j}^N \exp(\lambda~ \textrm{sim}(q, p^{-}_j))}{\exp(\lambda~ \textrm{sim}(q, p^{+}))}]\\
		&=\log\left[1 + \sum^N_{j} \exp(\lambda~ \textrm{sim}(q, p^{-}_j)-\lambda~ \textrm{sim}(q, p^{+}))\right].\notag
	\end{aligned}
\end{equation}
The spectral contrastive learning loss could be obtained by replacing the exponent term with Eq.~\ref{eq:hh}.

\noindent\textbf{Remark.} In the alignment loss function, we observe that dangling entities are only in the negative samples, and the entities in the positive samples are all matchable. Such a stark asymmetry provides an advantage in distinguishing between dangling and matchable entities.


We also prove that $\mathcal{L}_{\mathrm{info}}$ (Eq.~\ref{L1}) can promote model learning by Lemma~\ref{spec_hard} (see appendix~\ref{app_5} for proof).
\begin{lemma}
	\textit{The loss $\mathcal{L}_{\mathrm{info}}$ can mine high-quality negative samples, which we show has an equivalent effect to \textit{truncated uniform negative sampling (TUNS)} in \cite{sun2018bootstrapping}.}
	\label{spec_hard}
\end{lemma}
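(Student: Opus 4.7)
The plan is to show that $\mathcal{L}_{\mathrm{info}}$ propagates gradient only through a subset of negatives that matches the filtered set selected by TUNS, so that training with $\mathcal{L}_{\mathrm{info}}$ is operationally indistinguishable from applying TUNS to a uniformly drawn batch of candidates.

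First I would decompose, for each anchor $e_i$, the $N$ candidate negatives into an \emph{easy set} $\mathcal{E}_i=\{\,j:\textrm{sim}(e_i,e^i_j)-\textrm{sim}(e_i,e^i_+)+\gamma\le 0\,\}$, on which $H$ is clamped to zero by $[\cdot]_+$, and its complement, the \emph{hard set} $\mathcal{H}_i$. On $\mathcal{E}_i$ the term $\exp(\lambda H_{ij})$ collapses to $1$ and $\nabla_\theta H_{ij}\equiv 0$ since the clamp is locally flat, so easy negatives contribute only a constant offset inside the logarithm and no gradient at all.

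Second, I would differentiate $\mathcal{L}_{\mathrm{info}}$ with respect to the model parameters $\theta$. By the chain rule the gradient collapses to
\begin{equation*}
\nabla_\theta \mathcal{L}_{\mathrm{info}} = \sum_{e_i\in\mathcal{X}_p}\sum_{j\in\mathcal{H}_i} \frac{\lambda\,\exp(\lambda H_{ij})}{1+|\mathcal{E}_i|+\sum_{k\in\mathcal{H}_i}\exp(\lambda H_{ik})}\,\nabla_\theta H_{ij},
\end{equation*}
so that only hard negatives steer the optimization. This is precisely the qualitative behavior of TUNS, which discards every negative lying outside a margin-based truncation window before updating. The softmax-style coefficient $\exp(\lambda H_{ij})/Z_i$ moreover concentrates mass on the hardest members of $\mathcal{H}_i$: taking $\lambda$ large recovers the hardest-negative limit, while for moderate $\lambda$ it interpolates smoothly towards the uniform reweighting used inside the retained TUNS pool. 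I would then quote the truncation description from \cite{sun2018bootstrapping} and exhibit the bijection between its retained pool and $\mathcal{H}_i$.

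The main obstacle is to formalize what \emph{equivalent effect} means, because $\mathcal{L}_{\mathrm{info}}$ and a literal TUNS surrogate are not functionally equal. I plan to argue the equivalence at the level of the active-negative set and of the direction of the parameter update: both rules activate exactly the margin-violating negatives for each anchor, and any residual difference is a strictly positive scalar reweighting that can be absorbed into a per-sample effective learning rate. A minor subtlety is to ensure that no active weight vanishes pathologically, which is immediate since $\exp(\lambda H_{ij})>0$ for every $j\in\mathcal{H}_i$; the nontrivial work lies in making the large-$\lambda$/moderate-$\lambda$ matching explicit enough to be called an equivalence rather than a heuristic analogy.
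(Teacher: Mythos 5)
Your argument is essentially correct at the (informal) level of rigor the lemma demands, but it takes a different route from the paper. The paper starts from the TUNS side: it writes the $K=1$ TUNS loss as $\mathcal{L}_{\mathrm{TUNS}}=\sum_{e_i\in\mathcal{X}_p}\max_j H(e_i,e^i_+,e^i_j)$, replaces the $\max$ by its LogSumExp smoothing $\frac{1}{\lambda}\log\bigl(\sum_j\exp(\lambda H_{ij})\bigr)$, and observes that minimizing this is the same as minimizing $\mathcal{L}_{\mathrm{info}}$ of Eq.~\eqref{L1}; the whole argument lives at the level of loss values. You instead start from $\mathcal{L}_{\mathrm{info}}$ and work at the level of gradients, splitting negatives into the clamped set $\mathcal{E}_i$ and the margin-violating set $\mathcal{H}_i$ and showing that only the latter receives gradient, with softmax weights that concentrate on the hardest negatives as $\lambda$ grows. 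The two arguments share the same kernel (LogSumExp $\to\max$ as $\lambda\to\infty$ is exactly your ``hardest-negative limit''), but your version additionally makes explicit the role of the hinge $[\cdot]_+$ in silencing easy negatives, which the paper never discusses, and it clarifies in what sense the equivalence should be read (same active set and update direction up to positive reweighting). The one place you should be careful is the claimed ``bijection'' between $\mathcal{H}_i$ and the TUNS-retained pool: TUNS truncates by nearest-neighbor rank (a fixed $K$ or fraction), whereas $\mathcal{H}_i$ is cut by a margin threshold, so the two sets generally differ in cardinality and only coincide qualitatively; the paper sidesteps this mismatch by specializing to $K=1$ and smoothing the resulting $\max$, and you would need either to do the same or to weaken ``bijection'' to an inclusion/correspondence of the hardest negatives.
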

Minimizing the spectral contrastive loss of Eq.~\eqref{L1} maps matchable and dangling entities into a unified but distinguishable feature space for improved entity alignment while facilitating dangling detection. In practice, we adopt the loss normalization trick \cite{gao2022clusterea} on $H(\cdot)$ to speed up training.

\section{Iterative Positive-Unlabeled Learning for Dangling Detection}\label{sec:iPULE}
We expect to avoid any additional computational overhead for EA if few entities are matchable for the source and target KG. Thus, we address a more challenging problem in EA: given partial pre-aligned matchable entities as positive samples (i.e., $\mathcal{X}_p$), how to jointly predict the proportion of matchable entities in the unlabeled nodes (i.e., $\pi^\mathrm{u}_{\mathrm{p}}$) and identify those entities? If $\pi^\mathrm{u}_{\mathrm{p}}$ could be predicted, it could serve as an indicator whether we should proceed to EA. We propose to address the problem by PU learning.

\textbf{Unbiased Risk Estimator.}
First, we propose a new unbiased estimation for PU learning without any constraint (i.e., $p_\mathrm{u}(x) = p(x)$ in \cite{kiryo2017positive, niu2016theoretical}) on unlabeled samples distribution $p_\mathrm{u}(x)$ concerning the overall distribution $p(x)$. Assuming that $\pi_{\mathrm{p}}$ and $\pi^\mathrm{u}_{\mathrm{p}}$ are known (estimation strategy would be given later), we have:
\begin{theorem}
	Suppose that $g \in \mathcal{G}:\mathbb{R}^d\to\mathbb{R}$ is a binary classifier, and $\ell:\mathbb{R}\times\{\pm1\}\to\mathbb{R}$ is the loss function by which $\ell(t,y)$ means the loss incurred by predicting an output $t$ when the ground truth is $y$. $\widehat{R}_{\mathrm{pu}}(g)$ is the \textbf{unbiased risk estimator} of $R(g)$:
	\begin{equation}
		\widehat{R}_{\mathrm{pu}}(g)= \pi_{\mathrm{p}} \widehat{R}_{\mathrm{p}}^+(g)+ \frac{\pi_\mathrm{n}}{\pi^\mathrm{u}_\mathrm{n}} \cdot \left[ \widehat{R}_{\mathrm{u}}^-(g)-\pi^\mathrm{u}_{\mathrm{p}}\widehat{R}_{\mathrm{p}}^-(g) \right],
		\label{Rpu}
	\end{equation}
	where $\pi_\mathrm{n} = p(y=-1)$ and $\pi^\mathrm{u}_\mathrm{n}=p_\mathrm{u}(y=-1)$ are estimable class priors given $\pi_{\mathrm{p}}$ and $\pi^\mathrm{u}_{\mathrm{p}}$, $R_p^+(g)= \mathbb{E}_{X\sim p_\mathrm{p}(x)} [\ell(g(X),+1)]$ and $R_n^-(g)=\mathbb{E}_{X\sim p_\mathrm{n}(x)}[\ell(g(X),-1)]$ (\text{see appendix~\ref{app_2} for proof}).  
	\label{theorem_unbiased}
\end{theorem}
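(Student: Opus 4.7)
The plan is to derive the unbiased estimator by decomposing the true risk $R(g)=\pi_{\mathrm{p}}R_{\mathrm{p}}^{+}(g)+\pi_{\mathrm{n}}R_{\mathrm{n}}^{-}(g)$ and rewriting the inaccessible term $R_{\mathrm{n}}^{-}(g)$ using only distributions we can sample from, namely $p_{\mathrm{p}}(x)$ and $p_{\mathrm{u}}(x)$. The key observation is that, under the natural assumption that the class-conditional densities do not depend on whether a sample was labeled, the unlabeled marginal admits the mixture decomposition
\begin{equation}
p_{\mathrm{u}}(x)=\pi_{\mathrm{p}}^{\mathrm{u}}p_{\mathrm{p}}(x)+\pi_{\mathrm{n}}^{\mathrm{u}}p_{\mathrm{n}}(x),
\end{equation}
which is exactly the relaxation of the usual $p_{\mathrm{u}}(x)=p(x)$ assumption discussed in Section~\ref{sec:PU_setting}.

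First, I would solve the mixture equation for $p_{\mathrm{n}}(x)$, obtaining $\pi_{\mathrm{n}}^{\mathrm{u}}p_{\mathrm{n}}(x)=p_{\mathrm{u}}(x)-\pi_{\mathrm{p}}^{\mathrm{u}}p_{\mathrm{p}}(x)$. Plugging this into the integral definition $R_{\mathrm{n}}^{-}(g)=\int\ell(g(x),-1)p_{\mathrm{n}}(x)\,dx$ and pulling out the factor $1/\pi_{\mathrm{n}}^{\mathrm{u}}$, the integral splits into two expectations, yielding
\begin{equation}
R_{\mathrm{n}}^{-}(g)=\frac{1}{\pi_{\mathrm{n}}^{\mathrm{u}}}\bigl[R_{\mathrm{u}}^{-}(g)-\pi_{\mathrm{p}}^{\mathrm{u}}R_{\mathrm{p}}^{-}(g)\bigr],
\end{equation}
where $R_{\mathrm{u}}^{-}(g)=\mathbb{E}_{X\sim p_{\mathrm{u}}}[\ell(g(X),-1)]$ and $R_{\mathrm{p}}^{-}(g)=\mathbb{E}_{X\sim p_{\mathrm{p}}}[\ell(g(X),-1)]$. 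Substituting back into $R(g)$ produces the population version of the claimed formula.

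Second, I would pass from populations to samples. Replacing each expectation over $p_{\mathrm{p}}$ by the sample mean over $\mathcal{X}_{\mathrm{p}}$ and each expectation over $p_{\mathrm{u}}$ by the sample mean over $\mathcal{X}_{\mathrm{u}}$ yields $\widehat{R}_{\mathrm{pu}}(g)$ as stated. Because sample means are unbiased estimators of the corresponding expectations and the coefficients $\pi_{\mathrm{p}},\pi_{\mathrm{n}},\pi_{\mathrm{p}}^{\mathrm{u}},\pi_{\mathrm{n}}^{\mathrm{u}}$ are deterministic, linearity of expectation gives $\mathbb{E}[\widehat{R}_{\mathrm{pu}}(g)]=R(g)$, establishing unbiasedness. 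Noting that $\pi_{\mathrm{n}}=1-\pi_{\mathrm{p}}$ and $\pi_{\mathrm{n}}^{\mathrm{u}}=1-\pi_{\mathrm{p}}^{\mathrm{u}}$ shows these priors are indeed estimable once $\pi_{\mathrm{p}}$ and $\pi_{\mathrm{p}}^{\mathrm{u}}$ are available, justifying the ``estimable'' phrasing in the statement.

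The only delicate point, and the one I would state carefully up front, is the mixture decomposition of $p_{\mathrm{u}}(x)$: it is tacitly assumed that conditioning on the class label $y$ yields the same density whether or not the example is labeled, i.e.\ $p_{\mathrm{u}}(x\mid y)=p(x\mid y)$. Everything else is a routine rearrangement plus linearity of expectation, so the main obstacle is conceptual rather than computational: one must be explicit that this is the weaker substitute for the $p_{\mathrm{u}}(x)=p(x)$ assumption used in prior work, which is precisely what makes the estimator applicable in the transductive EA setting studied here.
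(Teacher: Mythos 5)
Your proposal is correct and follows essentially the same route as the paper's proof in Appendix~\ref{app_2}: write $R(g)=\pi_{\mathrm{p}}R_{\mathrm{p}}^{+}(g)+\pi_{\mathrm{n}}R_{\mathrm{n}}^{-}(g)$, use the mixture $p_{\mathrm{u}}(x)=\pi_{\mathrm{p}}^{\mathrm{u}}p_{\mathrm{p}}(x)+\pi_{\mathrm{n}}^{\mathrm{u}}p_{\mathrm{n}}(x)$ to express $\pi_{\mathrm{n}}R_{\mathrm{n}}^{-}(g)$ via $R_{\mathrm{u}}^{-}$ and $R_{\mathrm{p}}^{-}$, and substitute empirical means. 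If anything, you are more explicit than the paper on two points it leaves tacit --- the assumption $p_{\mathrm{u}}(x\mid y)=p(x\mid y)$ underlying the mixture decomposition, and the final linearity-of-expectation step establishing unbiasedness --- which is a welcome addition rather than a deviation.
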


By our proof, $\widehat{R}_{\mathrm{pu}}(g)$ is an unbiased risk bound for the PU learning. More importantly, the bound provided by Thm.~\ref{deviation_bound} is a tighter uniform deviation bound than the classic \textit{Non-negative Risk Estimator} \cite{kiryo2017positive}:
\newtheorem{thm}{Theorem}
\begin{theorem}
	Let $\mathrm{Var}(R)$ denote the uniform deviation bound of risk estimator $R$, and Non-negative Risk Estimator be $\widehat{R'}_{\mathrm{pu}}(g)$, then:$\mathrm{Var}(\widehat{R}_{\mathrm{pu}}(g)) \,< \, \mathrm{Var}(\widehat{R'}_{\mathrm{pu}}(g)) (\text{see appendix~\ref{app_3} for proof}).\notag$
	\label{deviation_bound}
\end{theorem}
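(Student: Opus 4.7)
My plan is to bound $\sup_{g\in\mathcal{G}}\lvert \widehat{R}_{\mathrm{pu}}(g)-R(g)\rvert$ and $\sup_{g\in\mathcal{G}}\lvert\widehat{R'}_{\mathrm{pu}}(g)-R(g)\rvert$ separately through the standard McDiarmid $+$ symmetrization $+$ Rademacher complexity pipeline, and then read off the strict inequality by a term-by-term comparison. First, I would observe that $\widehat{R}_{\mathrm{pu}}(g)$ is a fixed linear combination of the three empirical means $\widehat{R}_{\mathrm{p}}^+(g), \widehat{R}_{\mathrm{p}}^-(g), \widehat{R}_{\mathrm{u}}^-(g)$, so its uniform deviation admits a clean triangle-inequality decomposition
\begin{equation*}
	\sup_{g}\bigl|\widehat{R}_{\mathrm{pu}}(g)-R(g)\bigr| \;\le\; \pi_{\mathrm{p}}\,\Delta^+_{\mathrm{p}} \;+\; \tfrac{\pi_{\mathrm{n}}\pi^{\mathrm{u}}_{\mathrm{p}}}{\pi^{\mathrm{u}}_{\mathrm{n}}}\,\Delta^-_{\mathrm{p}} \;+\; \tfrac{\pi_{\mathrm{n}}}{\pi^{\mathrm{u}}_{\mathrm{n}}}\,\Delta^-_{\mathrm{u}},
\end{equation*}
where each $\Delta$ is controlled by the corresponding Rademacher complexity and an $O(1/\sqrt{n})$ concentration term, following the template of \cite{niu2016theoretical}.

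For $\widehat{R'}_{\mathrm{pu}}$, I would reproduce the analysis of \cite{kiryo2017positive}, in which the $\max\{0,\cdot\}$ clipping is handled by first invoking the $1$-Lipschitz contraction property of the clip and then accounting separately for the event that the negative-risk part falls below zero. The upshot is that the Kiryo-style uniform deviation bound contains, in addition to the same Rademacher and McDiarmid terms as in my bound, an extra correction term that is essentially the probability (or expected magnitude) of the clip being active on the sample. This term is exactly the source of looseness of the non-negative estimator when viewed as a uniform deviation bound.

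After both bounds are in hand, the final step is coefficient matching. Under the assumption $p_{\mathrm{u}}=p$ used implicitly by the non-negative estimator, one has $\pi^{\mathrm{u}}_{\mathrm{n}}=\pi_{\mathrm{n}}$ and $\pi^{\mathrm{u}}_{\mathrm{p}}=\pi_{\mathrm{p}}$, so the scaling factors in front of $\Delta^-_{\mathrm{p}}$ and $\Delta^-_{\mathrm{u}}$ in my decomposition collapse to exactly the coefficients that appear inside the $\max$ of $\widehat{R'}_{\mathrm{pu}}$. The Rademacher contributions therefore coincide in the two bounds, and the residual gap between $\mathrm{Var}(\widehat{R'}_{\mathrm{pu}}(g))$ and $\mathrm{Var}(\widehat{R}_{\mathrm{pu}}(g))$ is precisely the clipping correction above, which is strictly positive whenever $\mathcal{G}$ is expressive enough that the clip can be activated with positive probability over the draw of $\mathcal{X}_{\mathrm{p}}$ and $\mathcal{X}_{\mathrm{u}}$.

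The step I expect to be the main obstacle is showing that this residual clipping term is \emph{strictly} positive (and not merely non-negative) in the regime of interest, rather than waving at the empirical risk-going-negative phenomenon. Concretely, one must argue that for any reasonably rich hypothesis class and bounded loss used in PU learning, there is a positive-measure subset of $g\in\mathcal{G}$ for which $\widehat{R}_{\mathrm{u}}^-(g)-\pi_{\mathrm{p}}\widehat{R}_{\mathrm{p}}^-(g)$ becomes negative with positive sample probability; this is the well-documented overfitting pathology that motivated \cite{kiryo2017positive} in the first place, and I would formalize it with a mild non-degeneracy assumption on $\mathcal{G}$. A secondary technical point is that the comparison must be uniform in $g$, so the clipping step has to be interleaved with the Rademacher symmetrization (Talagrand-style contraction applied to $\max\{0,\cdot\}$) instead of invoked pointwise.
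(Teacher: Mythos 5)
Your opening decomposition is essentially the paper's: the appendix also writes the uniform deviation of $\widehat{R}_{\mathrm{pu}}$ as a linear combination of per-sample-set deviation terms $M_+$ (positive samples) and $M_-$ (unlabeled samples), obtaining $\mathrm{Var}(\widehat{R}_{\mathrm{pu}}(g)) = (\pi_{\mathrm{p}}+\tfrac{\pi_{\mathrm{n}}\pi_{\mathrm{p}}^{\mathrm{u}}}{\pi_{\mathrm{n}}^{\mathrm{u}}})M_+ + \tfrac{\pi_{\mathrm{n}}}{\pi_{\mathrm{n}}^{\mathrm{u}}}M_-$ versus $\mathrm{Var}(\widehat{R'}_{\mathrm{pu}}(g)) = 2\pi_{\mathrm{p}}M_+ + M_-$ for the baseline, both via the Rademacher bounds of \cite{niu2016theoretical}. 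Where you diverge is in locating the source of the strict inequality, and there you go wrong. You set $p_{\mathrm{u}}=p$, conclude that the coefficients of the two estimators collapse to the same values, and then try to manufacture the gap out of a ``clipping correction'' for the $\max\{0,\cdot\}$ operator. But the entire point of the theorem is that $p_{\mathrm{u}}\neq p$ in this setting: the labeled positives are excluded from the unlabeled pool, so the negative class is over-represented there, giving $\pi_{\mathrm{n}}^{\mathrm{u}}>\pi_{\mathrm{n}}$, hence $\tfrac{\pi_{\mathrm{n}}}{\pi_{\mathrm{n}}^{\mathrm{u}}}<1$ and $\tfrac{\pi_{\mathrm{n}}\pi_{\mathrm{p}}^{\mathrm{u}}}{\pi_{\mathrm{n}}^{\mathrm{u}}}<\pi_{\mathrm{p}}$ (equivalently $\tfrac{\pi_{\mathrm{n}}}{\pi_{\mathrm{p}}}<\tfrac{\pi_{\mathrm{n}}^{\mathrm{u}}}{\pi_{\mathrm{p}}^{\mathrm{u}}}$). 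The strict inequality then follows immediately by comparing the coefficients of $M_+$ and $M_-$ term by term; the clipping plays no role in the paper's argument at all.

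The step you flag as your ``main obstacle'' --- proving that the clip is activated with strictly positive probability under a non-degeneracy assumption on $\mathcal{G}$ --- is therefore not a technical hurdle to be overcome but a symptom of having taken the wrong fork. Beyond requiring assumptions the theorem does not make, the direction of that argument is doubtful: $\max\{0,\cdot\}$ is $1$-Lipschitz, so contraction gives a deviation bound for the clipped estimator that is no \emph{larger} than the unclipped one, and the analysis of \cite{kiryo2017positive} shows the clipping bias decays exponentially; you would be trying to prove strict looseness from an operation that, by the standard tools you propose to use, can only help. To repair the proof, drop the $p_{\mathrm{u}}=p$ identification for the new estimator, keep it only for the baseline $\widehat{R'}_{\mathrm{pu}}$, and derive the strict coefficient inequalities from the observation that every negative sample lies in the unlabeled set.
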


\textbf{Positive Unlabeled Loss Function.}
Since it is evident that all negative samples exist in unlabeled data, we have $\frac{\pi_\mathrm{n}}{\pi^\mathrm{u}_\mathrm{n}} < 1$ and thus we apply a hyper-parameter $\alpha=\frac{\pi^\mathrm{u}_\mathrm{n}}{\pi_\mathrm{n}}$ to scale $\pi_{\mathrm{p}} \widehat{R}_{\mathrm{p}}^+(g)$ equivalently and $\max(\cdot)$ to restrict the estimated $\pi_\mathrm{n} R_\mathrm{n}^-(g) \ge 0$. The PU learning loss function is formulated as: 
\begin{equation}
	\mathcal{L}_{\mathrm{pu}}=\alpha\pi_{\mathrm{p}} \widehat{R}_{\mathrm{p}}^+(g)+ \max\{0,\widehat{R}_{\mathrm{u}}^-(g)-\pi^\mathrm{u}_{\mathrm{p}}\widehat{R}_{\mathrm{p}}^-(g) \}\\,
	\notag
	\label{pnrisk}
\end{equation}
We specify the corresponding risk function using cross-entropy losses as below:
\begin{equation}
	\begin{aligned}
		\widehat{R}_{\mathrm{p}}^+(g)= \frac{1}{|\mathcal{X}_p|}\sum_{e_i \in \mathcal{X}_p} \log\hat{y}_{i}(+1),
		\widehat{R}_{\mathrm{u}}^-(g)= \frac{1}{|\mathcal{X}_u|}\sum_{e_i \in \mathcal{X}_u} \log\hat{y}_{i}(-1),
		\widehat{R}_{\mathrm{p}}^-(g)= \frac{1}{|\mathcal{X}_p|}\sum_{e_i \in \mathcal{X}_p} \log\hat{y}_{i}(-1)
		\notag     
		\label{p_risk}
	\end{aligned}
\end{equation}
where the output logit for $e_i \in E_s \cup E_t$, being labeled as a state $u \in \{+1,-1\}$, is $\hat{y}_{i}(u) = softmax(\mathbf{MLP}(h_{e_i}^f))$, based on KEESA output $h_{e_i}^f$. Hence each term in the final loss can be calculated or estimated without the negative labels.

\textbf{Iterative PU Learning with Prior Estimator}
\label{sec:EM-PU_chapter}
How could we estimate prior $\pi_{\mathrm{p}}$ and $\pi^\mathrm{u}_{\mathrm{p}}$? Inspired by \cite{jiang2024learning}, we introduce a hidden variable in the model as well as an iterative approach. We adopt a variational approximation strategy and a warm-up phase to tackle the cold start problem, as shown in Alg.~\ref{EM-PU}. First, we estimate and fix the class prior $\pi_{\mathrm{p}}$ and $\pi^\mathrm{u}_{\mathrm{p}}$ by the ratio of the anchor points in the training set. $\mathcal{L}_{\mathrm{info}}$ is optimized together with $\mathcal{L}_{\mathrm{pu}}$ for a discriminative embedding space in the warm-up phase. Finally, we minimize $\mathcal{L}_{\mathrm{pu}}$ to update the class prior and the model parameters alternately till convergence.

The convergence guarantee is provided in Thm.~\ref{theorem_Q_Rpu}, which mostly follows the convergence of EM algorithm. We collect the proof in appendix~\ref{app_1}.

\begin{algorithm}
	\caption{iPULE (\textbf{i}terative \textbf{PU} \textbf{L}earning with Prior \textbf{E}stimator)}
	\begin{algorithmic}[1]
		\REQUIRE
		$G_s$ and $G_t$ are treated as one input graph $G=(\mathcal{V},\mathcal{E})$, positive-node set $\mathcal{P}=\mathcal{X}_p$, unlabeled-node set $\mathcal{U}=\mathcal{X}_u$, classifier $f$ with initial parameters $\theta_0$, KEESA $\text{Enc}(G, \psi)$ with initial parameters $\psi_0$ and warm-up epoch $N$. $\mathcal{L}$ represents training loss. 
		\ENSURE Model parameters $\theta, \psi$ and estimated prior $\hat{\pi}_\mathrm{p}$ and $\hat{\pi}_\mathrm{p}^\mathrm{u}$
		\STATE $l\leftarrow\infty,\quad\hat{\pi}_\mathrm{p}^\mathrm{u} \leftarrow \hat{\pi}_\mathrm{p}\leftarrow \frac{|\mathcal{P}|}{|\mathcal{P}|+|\mathcal{U}|},\quad i \leftarrow 0,\quad\beta =\beta_0;${\hfill //Initial value}
		\STATE $\mathcal{L} \leftarrow \beta \cdot \mathcal{L}_\mathrm{info}+(1-\beta)\cdot \mathcal{L}_\mathrm{pu};${\hfill //Loss of warm-up}
		\REPEAT
		\STATE $\mathbf{X} \leftarrow \text{Enc}(G,\psi);$ {\hfill //Entity embedding matrix \textbf{X}}
		\STATE $\theta, \psi \leftarrow \arg\min_{\theta, \psi}\mathcal{L}(\theta;\mathbf{X},\mathbf{y},\mathcal{P},\mathcal{U});${\hfill //Optimize Enc($\cdot$) and $f$ jointly}
		\STATE $l\leftarrow\mathcal{L}(\theta;\mathbf{X},\mathbf{y},\mathcal{P},\mathcal{U});$\\
		\UNTIL{N epochs is over}{\hfill //Warm-up phase to solve cold start}
		\STATE $\mathcal{L} \leftarrow \mathcal{L}_\mathrm{pu};$
		\REPEAT
		\STATE $\mathbf{X} \leftarrow \text{Enc}(G,\psi), \quad \hat{y}_{i}\leftarrow f(\mathbf{X},i;\theta) \,\,\mathrm{ for\,\, all \,\,} i\in\mathcal{V};$ 
		\STATE $\hat{\pi}_{\mathrm{p}}^{\mathrm{u}}\leftarrow|\mathcal{U}|^{-1}\sum_{i\in\mathcal{U}}\mathbb{I}[\hat{y}_{i}(+1)>0.5],\quad \hat{\pi}_{\mathrm{p}}\leftarrow \frac{|\mathcal{P}|+|\mathcal{U}| \cdot \hat{\pi}_{\mathrm{p}}^{\mathrm{u}}}{|\mathcal{P}|+|\mathcal{U}|};${\hfill //E step}\\
		\STATE $l^{\prime} \leftarrow l,\quad l\leftarrow\mathcal{L}(\theta;\mathbf{X},\mathbf{y},\mathcal{P},\mathcal{U});$\\
		\STATE $\theta, \psi \leftarrow \arg\max_{\theta, \psi}-\mathcal{L}(\theta;\mathbf{X},\mathbf{y},\mathcal{P},\mathcal{U});${\hfill //M step}
		\UNTIL{$| l^{\prime} - l |$ converge \textbf{OR} $\hat{\pi}_{\mathrm{p}}$ converge}
		\RETURN 
	\end{algorithmic}
	\label{EM-PU}
\end{algorithm}

\begin{theorem}
	\textit{Given the assumptions of marginalization in Eq.~\ref{z_margin} and Eq.~\ref{y_z_margin}, the objective function of $-\mathcal{L}_{\mathrm{pu}}$ is the same as the expectation function Q of Eq.~\ref{Q_func} where the loss function is the cross entropy function $CE(\bar{y}_{i}, \hat{y}_{i}) = -\bar{y}_{i}(+1)\log\hat{y}_{i}(+1) - \bar{y}_i(-1)\log\hat{y}_i(-1)$ on the \textbf{preference condition}: $\sum_{j\in\mathcal{U}}\frac{1}{|\mathcal{U}|}\log\frac{\hat{y}_j(+1)}{\hat{y}_j(-1)} \approx \sum_{i\in\mathcal{P}}\frac{1}{|\mathcal{P}|}\log\frac{\hat{y}_i(+1)}{\hat{y}_i(-1)}$.}
	\label{theorem_Q_Rpu}
\end{theorem}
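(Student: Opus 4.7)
The plan is to recognize $-\mathcal{L}_{\mathrm{pu}}$ as the EM Q-function for a latent-label generative model, in which each node's true label is a hidden variable whose posterior is governed by the marginalization assumptions Eq.~\ref{z_margin} and Eq.~\ref{y_z_margin}. First I would instantiate Eq.~\ref{Q_func} by taking the per-sample loss to be the cross-entropy $CE(\bar y_i,\hat y_i)$, so that
$$Q \;=\; \sum_i\bigl[\bar y_i(+1)\log\hat y_i(+1)+\bar y_i(-1)\log\hat y_i(-1)\bigr].$$
The two marginalization assumptions then pin down the posterior $\bar y_i$: on the anchor set $\mathcal P$ it is the hard label $(1,0)$ (positives are observed), while on $\mathcal U$ it collapses to the class priors $(\pi_p^u,\pi_n^u)$ because no additional information about $y$ is available beyond $p(z)$. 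Splitting the sum accordingly gives a $\mathcal P$-part equal to $\sum_{i\in\mathcal P}\log\hat y_i(+1)$ and a $\mathcal U$-part equal to $\sum_{j\in\mathcal U}\bigl[\pi_p^u\log\hat y_j(+1)+\pi_n^u\log\hat y_j(-1)\bigr]$.

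Next I would rewrite the $\mathcal U$-part through the identity
$$\pi_p^u\log\hat y_j(+1)+\pi_n^u\log\hat y_j(-1)=\log\hat y_j(-1)+\pi_p^u\log\frac{\hat y_j(+1)}{\hat y_j(-1)},$$
which immediately isolates the $\widehat R_u^-(g)$ piece of $\mathcal{L}_{\mathrm{pu}}$. The remaining log-odds average over $\mathcal U$ is the exact quantity that the preference condition allows me to replace with the corresponding average over $\mathcal P$, producing a contribution proportional to $\sum_{i\in\mathcal P}\log\hat y_i(+1)-\sum_{i\in\mathcal P}\log\hat y_i(-1)$. Combining with the $\mathcal P$-part and reconciling the cardinalities with the class-prior bookkeeping $\alpha=\pi_n^u/\pi_n$ together with $\hat\pi_p=(|\mathcal P|+|\mathcal U|\hat\pi_p^u)/(|\mathcal P|+|\mathcal U|)$ from line~11 of Alg.~\ref{EM-PU} recovers exactly $\alpha\pi_p\widehat R_p^+(g)+\widehat R_u^-(g)-\pi_p^u\widehat R_p^-(g)$, i.e.\ $-\mathcal{L}_{\mathrm{pu}}$ in the regime where the $\max\{0,\cdot\}$ clip is inactive.

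The main obstacle will be the preference condition itself, which is only an approximate identity. I would justify it by appealing to the PU covariate-shift hypothesis that the positive class-conditional density $p_\mathrm{p}(x)$ is the same whether $x$ is drawn from $\mathcal P$ or from the hidden positives inside $\mathcal U$; since both empirical averages in the condition are estimates of the same population quantity $\mathbb{E}_{X\sim p_\mathrm{p}}[\log(\hat y(+1)/\hat y(-1))]$, they should converge to a common limit as $|\mathcal P|,|\mathcal U|$ grow. Two lesser obstacles are the scaling bookkeeping (verifying that the $|\mathcal U|/|\mathcal P|$ prefactor produced by the preference substitution combines with $\pi_p^u,\pi_n^u,\pi_p,\pi_n$ to yield exactly the $\alpha\pi_p$ and $\pi_p^u$ coefficients of $\mathcal{L}_{\mathrm{pu}}$) and the $\max\{0,\cdot\}$ clamp, which has no natural EM counterpart and hence restricts the equivalence to the interior regime where the unbiased risk component is nonnegative.
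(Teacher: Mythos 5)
Your proposal follows essentially the same route as the paper's proof: the same factorized posterior from the two marginalization assumptions, the same split into a $\mathcal{P}$-part and a $\mathcal{U}$-part, the same log-odds identity isolating $\widehat{R}_{\mathrm{u}}^-(g)$, the same substitution of the $\mathcal{U}$-average log-odds by the $\mathcal{P}$-average via the preference condition, and the same class-prior bookkeeping at the end. Your explicit remarks that the equivalence only holds where the $\max\{0,\cdot\}$ clip is inactive and that the $\alpha$ rescaling must be reconciled are points the paper's own derivation glosses over, but they do not change the argument.
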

The iterative process of our method is a special case of the EM algorithm. We hold the same assumptions as the EM algorithm and we further assume the training of $f$ is able to find the globally optimal $\theta$. Although the assumptions seem to be too strict, the algorithm typically converges in practice as we verified in the experimental section.

\section{Experiments}\label{exp_begin}
Our method is evaluated on datasets GA16K, DBP2.0, and GA-DBP15K. DBP2.0 and GA-DBP15K are used for the verification of iPULE. To address incomparability caused by inconsistent metrics, we adopt the GA16K dataset to enable compromised comparison of the Dangling-Entities-Unaware EA method. We further compare our method with dangling aware baselines on DBP2.0. \textit{Statistics of experimental dataset} in appendix~\ref{app_7}, and \textit{additional experiment} in appendix~\ref{app_8}. 

\textbf{Datasets.} The training/test sets for each dataset are generated using a fixed random seed. For entity alignment, 30\% of matchable entity pairs constitute the training set, while the remaining form the test set. For dangling entity detection, we did not utilize any labeled dangling entity data, in contrast to prior work which labels an extra 30\% of the dangling entities for training \cite{sun2021knowing, liu2022dangling}.

\textbf{Baselines.} Since our work does not take advantage of any side information, we emphasize its comparison with the previous methods purely depending on graph structures. These works majorly incorporate two types:

\textit{Dangling-Entities-Unaware.} We include advanced entity alignment methods in recent years: GCN-Align \cite{wang2018cross}, RSNs \cite{guo2019learning}, MuGNN \cite{cao2019multi}, KECG \cite{li2019semi}. Methods with bootstrapping to generate semi-supervised structure data are also adopted: BootEA \cite{sun2018bootstrapping}, TransEdge \cite{sun2019transedge}, MRAEA \cite{mao2020mraea}, AliNet \cite{sun2020knowledge}, and Dual-AMN \cite{mao2021boosting}.

\textit{Dangling-Entities-Aware.} To the best of our knowledge, the method of \cite{sun2021knowing} is the most fairly comparable baseline which is based on MTransE \cite{chen2017multilingual} and AliNet \cite{sun2020knowledge}. Because MHP \cite{liu2022dangling} over-emphasized more use of labeled dangling data like high-order similarity information which is also based on the above two methods, while SoTead \cite{luo2021graph} and UED \cite{luo2022accurate} utilize additional side-information. SoTead \cite{luo2021graph} and UED \cite{luo2022accurate} can only execute the degraded version on DBP2.0 cause no side-information is available on that. We exclude them from baselines for our methods. \cite{sun2021knowing} introduces three techniques to address the dangling entity issue: nearest neighbor (NN) classification, marginal ranking (MR), and background ranking (BR).

\textbf{Implementation Detail.}
We use the {\tt Keras} framework for developing our approach. Our experiments are conducted on a workstation with an NVIDIA Tesla A100 GPU, and 80GB memory.

By default, the embedding dimension is set to 128 with the depth of GNN set to 2 and a dropout rate of 0.3. A total of 64 proxy vectors are used and margin $\gamma$ = 1. RMSprop optimizer is adopted with a learning rate of 0.005 and batch size of 5120. $\lambda$ is set to be 30. $\beta$ is set as 1e-3 for all datasets. CSLS \cite{lample2018word} is adopted as the distance metric for alignment. As we found, the $\tanh$ function changes rapidly in the region close to $0$ but stays stable in the region beyond $[-3, 3]$. Hence we initialize the $r_{e_{j}}$ to $1$ to prevent gradients oscillation or near-zero gradients.

\subsection{Experiments of iPULE Convergence and Class Prior Estimation}\label{sec:IPUPE} 
DBP2.0 $\pi_{\mathrm{p}}$ between 20\%-50\% contains more entities to be aligned, trained by pre-aligned 9\%-15\% nodes then judged by iPULE as aligned KGs. GA-DBP15K $\pi_{\mathrm{p}}$ between 10\%-25\% are treated as unaligned KGs ignoring the pre-aligned part, trained by all pre-aligned 10\%-25\% nodes. We get accurate estimation and convergence results as shown in Fig.~\ref{fig:visual_EM_PU_addition}. As iPULE progresses, the estimated class prior gradually approaches the true value as the first row for GA-DBP15K and the second for DBP2.0. $\pi_\mathrm{p}^\mathrm{u}=0$ for GA-DBP15K while DBP2.0's are given by red dotted line respectively. The $\pi_{\mathrm{p}}$ for GA-DBP15K is stably consistent as pre-aligned proportion due to accurate estimation of its $\pi_\mathrm{p}^\mathrm{u}$. As common in PU learning \cite{yoo2022graph}, iPULE treats more nodes as positive when $\pi_{\mathrm{p}}\approx 50\%$ in FR-EN.
\begin{figure}[ht]
	\centering
	\begin{minipage}[b]{0.96\textwidth}
		\centering
		\includegraphics[width=0.8\textwidth]{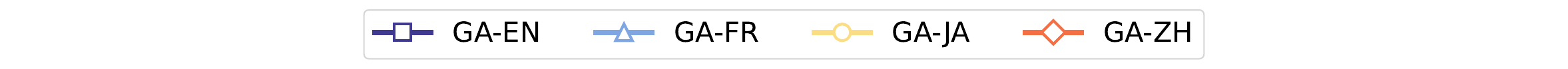}
	\end{minipage}
	\begin{minipage}[b]{0.24\textwidth}
		\centering
		\includegraphics[height = 0.13\textheight,width=\textwidth]{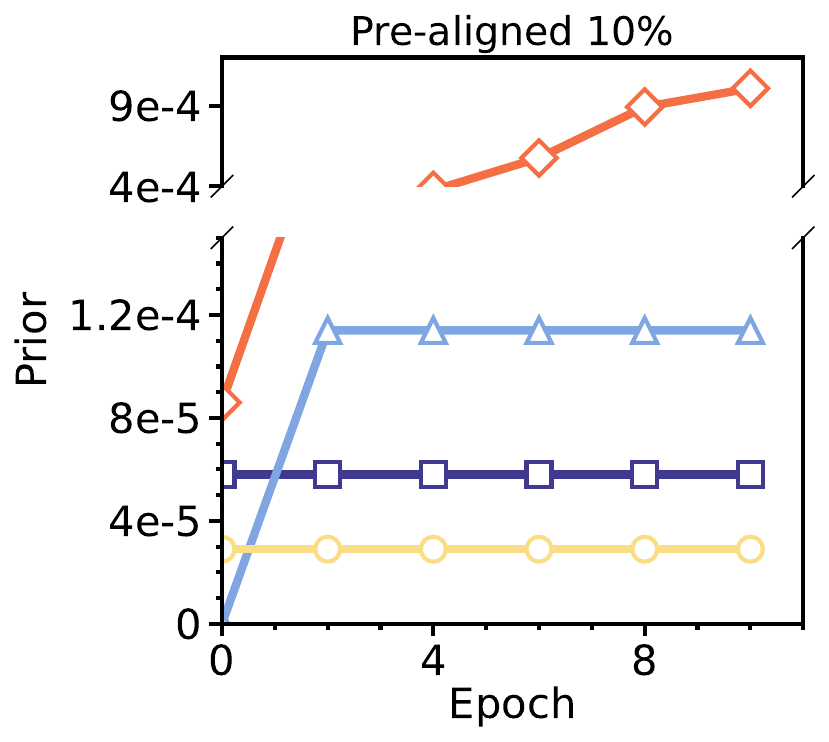}
	\end{minipage}
	\hfill
	\begin{minipage}[b]{0.24\textwidth}
		\centering
		\includegraphics[height = 0.13\textheight,width=\textwidth]{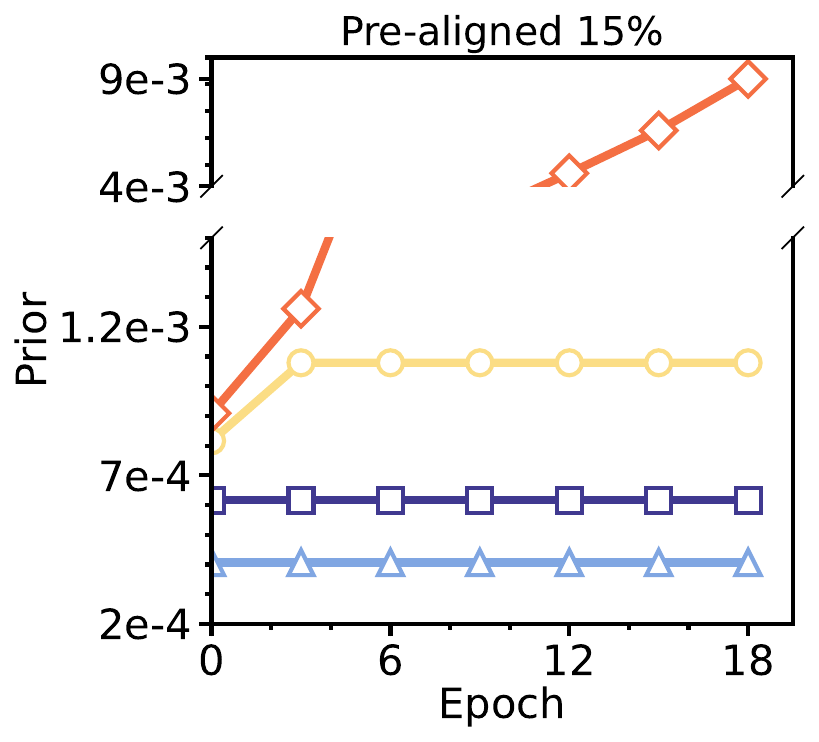}
	\end{minipage}
	\hfill
	\begin{minipage}[b]{0.23\textwidth}
		\centering
		\includegraphics[height = 0.13\textheight,width=\textwidth]{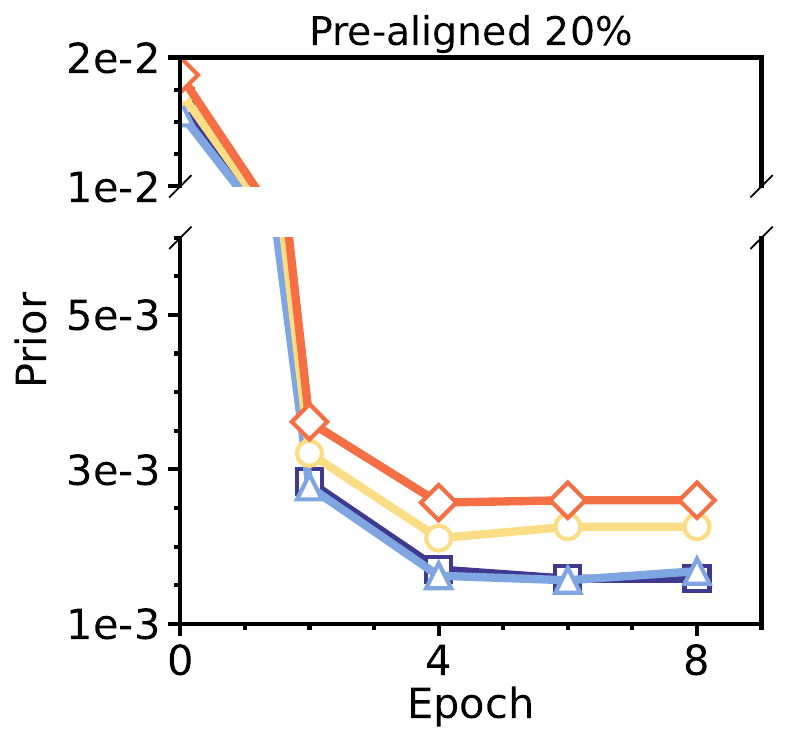}
	\end{minipage}
	\hfill
	\begin{minipage}[b]{0.23\textwidth}
		\centering
		\includegraphics[height = 0.13\textheight,width=\textwidth]{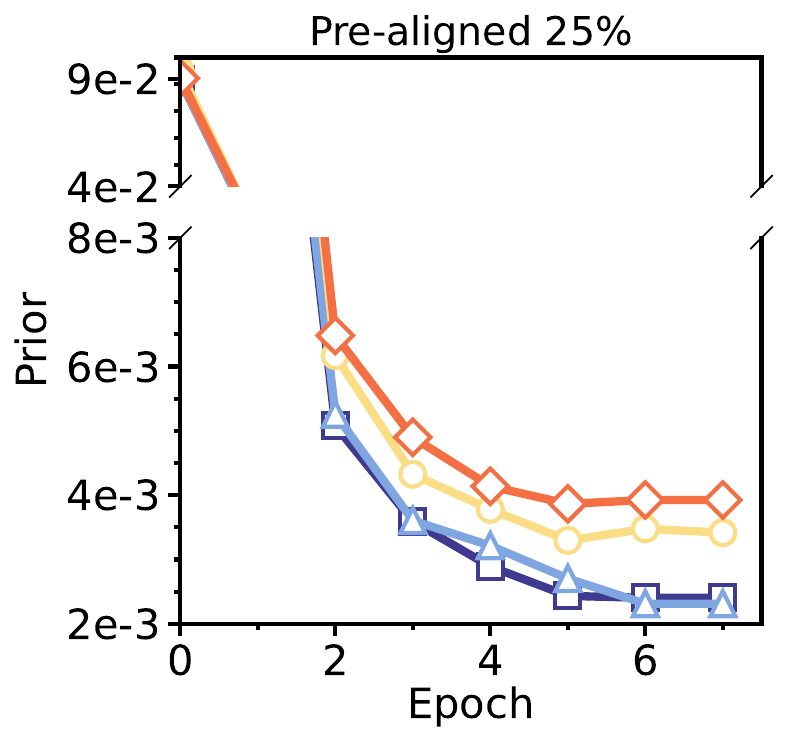}
	\end{minipage}
	
	\begin{minipage}[b]{0.96\textwidth}
		\centering
		\includegraphics[width=0.8\textwidth]{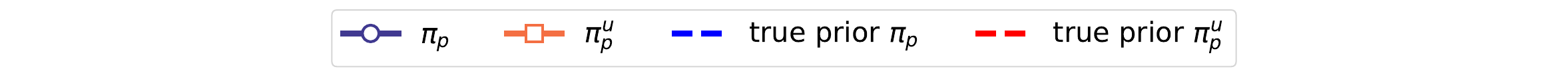}
	\end{minipage}
	\begin{minipage}[b]{0.24\textwidth}
		\centering
		\includegraphics[height = 0.13\textheight,width=\textwidth]{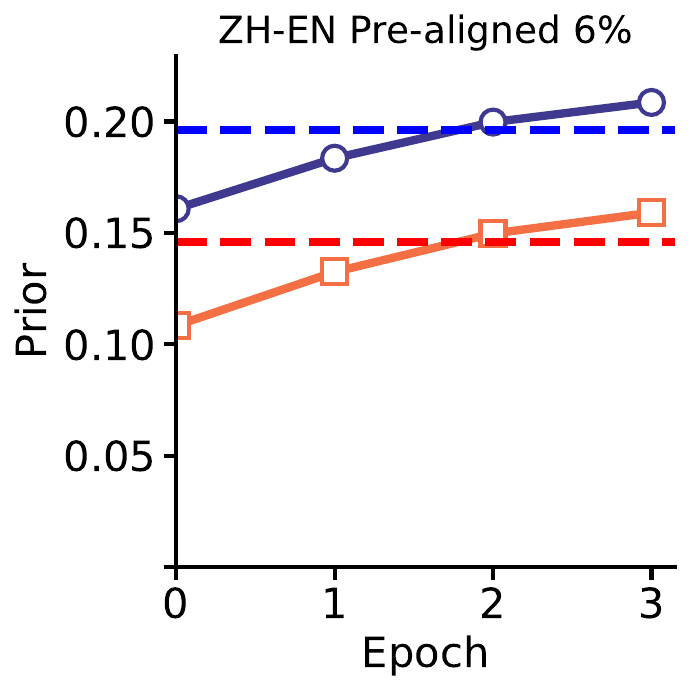}
	\end{minipage}
	\hfill
	\begin{minipage}[b]{0.24\textwidth}
		\centering
		\includegraphics[height = 0.13\textheight,width=\textwidth]{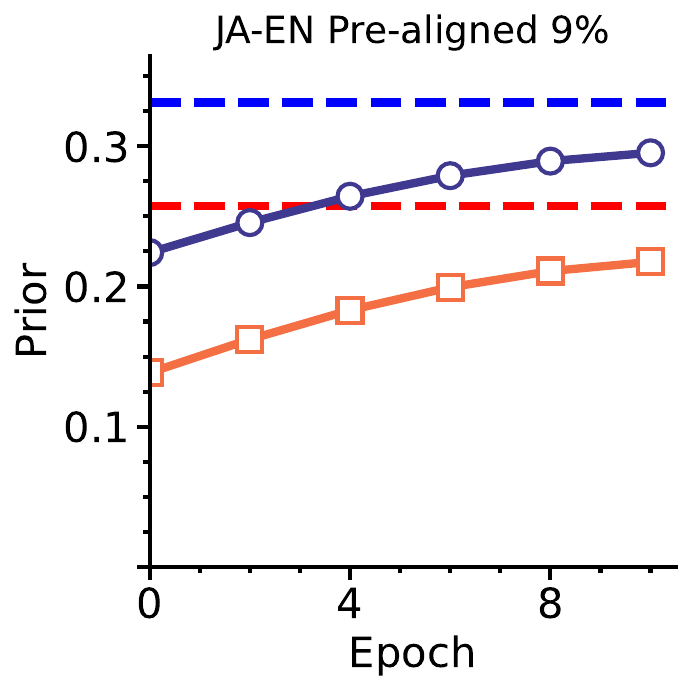}
	\end{minipage}
	\hfill
	\begin{minipage}[b]{0.24\textwidth}
		\centering
		\includegraphics[height = 0.13\textheight,width=\textwidth]{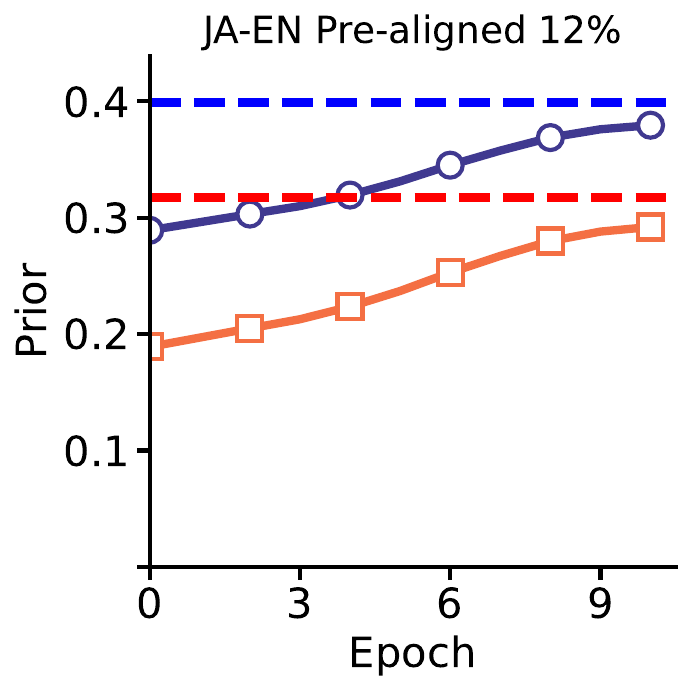}
	\end{minipage}
	\hfill
	\begin{minipage}[b]{0.24\textwidth}
		\centering
		\includegraphics[height = 0.13\textheight,width=\textwidth]{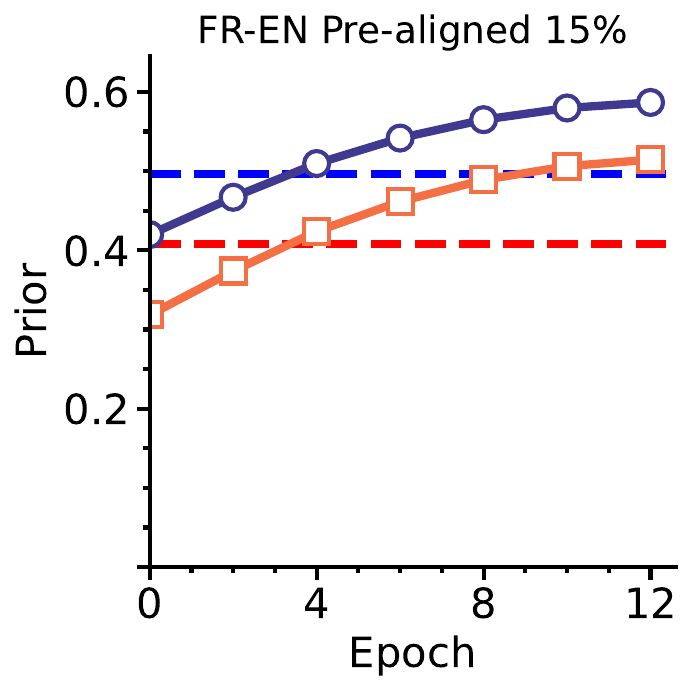}
	\end{minipage}
	\caption{Prior estimation GA-DBP15K and DBP2.0. (loss convergence in appendix~\ref{app_7}).}
	\label{fig:visual_EM_PU_addition}
\end{figure}

\subsection{Experiments Unaware of Dangling Entities}\label{sec:unaware}
We show the experiments on baselines without considering dangling entities in this section. 

\textbf{Dangling-Entities-Unaware Baselines Comparison.} 
The direct comparison between our method and the dangling-entities-unaware baselines is unavailable due to inconsistent metrics used. Hence, we adopt the GA16K dataset as a compromise and do not remove any detected dangling entities for entity alignment. Thus the ranking list $S$ in Hits@K only contains (matchable) entities in the source graph since GA16K only contains dangling entities in the target KG. In Tab.~\ref{tab:GA16K_ent_alignment}, Dual-AMN demonstrates a competitive performance but is inferior to ours at Hits@1. MRAEA performs similarly to Dual-AMN since the latter is built on the former. TransEdge performs poorly since the method adopts semi-supervised bootstrapping to mine anchor entities iteratively. The presence of dangling entities could lead to false anchors and spread of error. Meanwhile, it is also a relation-centric approach that suffers from insufficient relation information on GA16K. Other baselines exhibit up-to-par performance but our method delivers consistently superior or state-of-the-art Hits@Ks.

\begin{figure}[htbp]
	\begin{minipage}{0.49\linewidth}
		\begin{minipage}{0.49\linewidth}
			\centering
			\includegraphics[height = 0.11\textheight, width = 0.95\textwidth]{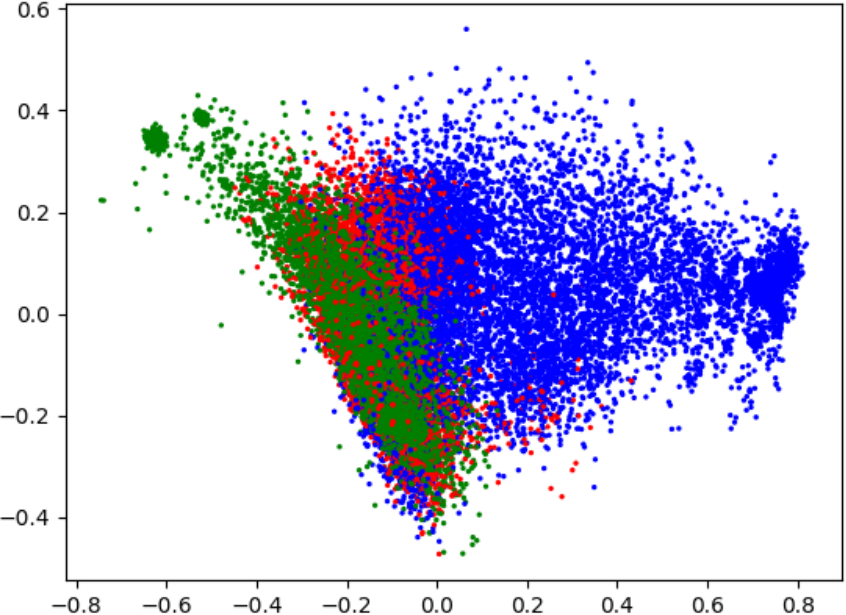}
			\put(-82,63){\textbf{a}}
			\label{a}
		\end{minipage}
		\begin{minipage}{0.5\linewidth}
			\centering
			\includegraphics[height = 0.11\textheight, width = 0.95\textwidth]{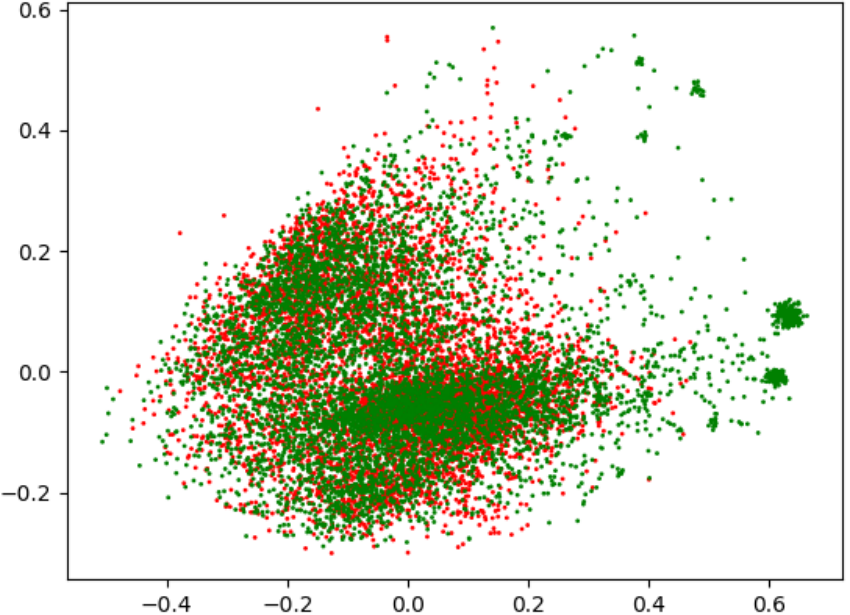}
			\put(-82,63){\textbf{b}}
			\label{b}
		\end{minipage}
		\begin{minipage}{0.49\linewidth}
			\centering
			\includegraphics[height = 0.11\textheight, width = 0.95\textwidth]{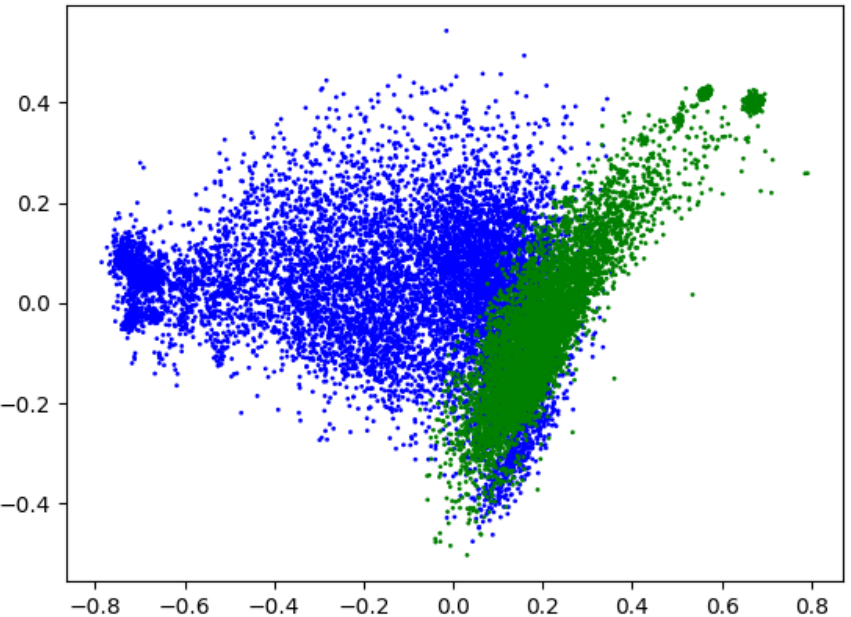}
			\put(-82,63){\textbf{c}}
			\label{c}
		\end{minipage}
		\begin{minipage}{0.5\linewidth}
			\centering
			\includegraphics[height = 0.11\textheight, width = 0.95\textwidth]{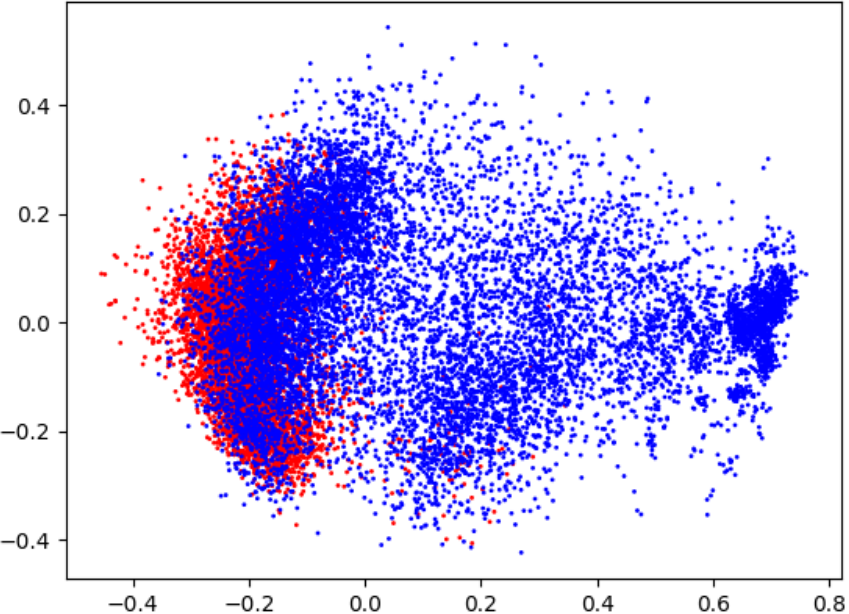}
			\put(-82,63){\textbf{d}}
			\label{d}
		\end{minipage}
		\caption{Visualization of entity representations learned by our method on GA16K dataset.}
		\label{fig:visual}
	\end{minipage}
	\begin{minipage}{0.5\linewidth}
		\centering
		\renewcommand\arraystretch{1.045}
		\setlength{\tabcolsep}{6pt}
		\begin{tabular}{llll}
			\toprule
			\multirow{2}{*}{Method} & \multicolumn{3}{c}{$\rm{GA16K}$}\\
			& H@1 & H@10 & H@50\\
			\hline
			BootEA & 13.95 & 37.25 & 49.08  \\
			TransEdge & 0.03 & 0.12 & 0.14   \\
			MRAEA & 63.97 & 76.64 & 81.06 \\
			GCN-Align & 29.48  & 45.64 & 57.15  \\
			RSNs&  9.40 & 42.70 & 46.70 \\
			MuGNN &  62.17 & 76.25 & 80.87 \\
			KECG & 44.18 & 57.73 & 63.41  \\
			AliNet& 48.53 & 67.72 & 74.50  \\
			Dual-AMN& \underline{64.49} & \textbf{80.55} & \textbf{84.67} \\
			Ours& \textbf{67.59} & \underline{80.33} & \underline{84.35} \\
			\bottomrule
		\end{tabular}
		\captionof{table}{Performance comparison with dangling-entities-unaware baselines on GA16K.}
		\label{tab:GA16K_ent_alignment}
	\end{minipage}  
\end{figure}
\begin{table*}[hbtp]
	\centering
	\renewcommand\arraystretch{0.8}
	\resizebox{.99\textwidth}{!}{\small
		{
			\small
			\setlength{\tabcolsep}{3pt}
			\begin{tabular}{llcccccccccccccccccc}
				\toprule
				\multicolumn{2}{c}{\multirow{2}{*}{Methods}} &
				\multicolumn{3}{c}{ZH-EN} & \multicolumn{3}{c}{EN-ZH} & \multicolumn{3}{c}{JA-EN} & \multicolumn{3}{c}{EN-JA} &  \multicolumn{3}{c}{FR-EN} & \multicolumn{3}{c}{EN-FR}\\
				\cmidrule(lr){3-5} \cmidrule(lr){6-8} \cmidrule(lr){9-11} \cmidrule(lr){12-14} \cmidrule(lr){15-17} \cmidrule(lr){18-20}
				&& Prec. & Rec. & F1 & Prec. & Rec. & F1 & Prec. & Rec. & F1 & Prec. & Rec. & F1 & Prec. & Rec. & F1 & Prec. & Rec. & F1 \\ 
				\midrule
				\parbox[t]{2mm}{\multirow{3}{*}{\rotatebox[origin=c]{90}{{\small AliNet}}}} 
				& NNC & .676 & .419 & .517 & .738 & .558 & .634 & .597 & .482 & .534 & .761 & .120 & .207 & .466 & .365 & .409 & .545 & .162 & .250 \\
				& MR & .752 & .538 & .627 & .828 & .505 & .627 & .779 & .580 & .665 & .854 & .543 & .664 & .552 & .570 & .561 & .686 & .549 & .609 \\
				& BR & .762 & .556 & .643 & .829 & .515 & .635 & .783 & .591 & .673 & .846 & .546 & .663 & \underline{.547} & .556 & .552 & .674 & .556 & .609 \\
				\midrule			
				\parbox[t]{2mm}{\multirow{3}{*}{\rotatebox[origin=c]{90}{{\scriptsize MTransE}}}} 
				& NNC & .604 & .485 & .538 & .719 & .511 & .598 & .622 & .491 & .549 & .686 & .506 & .583 & .459 & .447 & .453 & .557 & .543 & .550 \\
				& MR & \underline{.781} & .702 & .740 & \underline{.866} & .675 & .759 & .799 & .708 & .751 & .864 & .653 & .744 & .482 & .575 & .524 & .639 & .613 & .625 \\
				& BR & \textbf{.811} & \underline{.728} & \underline{.767} & \textbf{.892} & \underline{.700} & \underline{.785} & \textbf{.816} & \underline{.733} & \underline{.772} & \textbf{.888} & \underline{.731} & \underline{.801} & .539 & \underline{.686} & \underline{.604} & \underline{.692} & \underline{.735} & \underline{.713}\\
				\midrule
				\multicolumn{2}{c}{Ours} & .763 & \textbf{.925} & \textbf{.836} & .844 & \textbf{.909} & \textbf{.875} & \underline{.807} & \textbf{.836} & \textbf{.821} & \underline{.880} & \textbf{.809} & \textbf{.843} & \textbf{.615} & \textbf{.772} & \textbf{.685} & \textbf{.732} & \textbf{.749} & \textbf{.740}\\
				\bottomrule
	\end{tabular}}}
	\caption{Dangling detection results on DBP2.0 in the consolidated setting.}
	\label{tab:our_dangling_ent_alignment}
\end{table*}

\subsection{Experiments Aware of Dangling Entities}\label{sec:aware}
We provide a comparison of dangling detection and entity alignment with baselines aware of dangling.

\textbf{Dangling Entities Detection Performance.}
We test our method's dangling detection performance compared with baselines aware of dangling entities. The results on DBP2.0 in the consolidated setting are reported in Tab.~\ref{tab:our_dangling_ent_alignment}. Note that the comparison is unfair as we don't use $30\%$ of the labeled dangling entities as the baselines. Nevertheless, our approach maintains SOTA performance across all six datasets, excelling in almost every metric except for a slightly inferior precision.

\textbf{Dangling-Entities-Aware Baselines Comparison.} Tab.~\ref{tab:alignment_ent_alignment} reports the entity alignment performance comparison in the consolidated setting on DBP2.0. The precision, recall, and F1 scores are computed according to Eq.~\eqref{eq:precAlign},~\eqref{eq:recAlign},~\eqref{eq:f1Align} in \textbf{Metric} part of appendix~\ref{app_7}, respectively. We test the entity alignment performance of our method in comparison with baselines that are aware of dangling entities. Our method still maintains almost state-of-the-art performance, but there is still a slightly inferior precision problem. It makes us wonder about the reasons behind it.

\textbf{How does our method work?}
\label{sec:visual}
To understand why our method works and its precision slightly suffers, we visualized all entity embeddings of GA16K in Fig.~\ref{fig:visual}. As shown above, matchable entities are denoted as red and green in source and target KG respectively, while dangling as blue. The distribution of three types of entity in Fig.~\ref{fig:visual}(a) suggests our method maps all nodes into a unified embedding space where matchable entities exhibit considerable overlap and are appropriately aligned (shown in Fig.~\ref{fig:visual}(b)). Fig.~\ref{fig:visual}(c)(d) depicts that a part of the dangling entities is intertwined with the matchable ones, suggesting that this part resides at the decision boundary and easily leads to false positives which explain the lesser precision of our method.

\begin{table*}[!t]
	\centering
	\renewcommand\arraystretch{0.8}
	\resizebox{.99\textwidth}{!}{\small
		{
			\small
			\setlength{\tabcolsep}{3pt}
			\begin{tabular}{llcccccccccccccccccc}
				\toprule
				\multicolumn{2}{c}{\multirow{2}{*}{Methods}} &
				\multicolumn{3}{c}{ZH-EN} & \multicolumn{3}{c}{EN-ZH} & \multicolumn{3}{c}{JA-EN} & \multicolumn{3}{c}{EN-JA} &  \multicolumn{3}{c}{FR-EN} & \multicolumn{3}{c}{EN-FR}\\
				\cmidrule(lr){3-5} \cmidrule(lr){6-8} \cmidrule(lr){9-11} \cmidrule(lr){12-14} \cmidrule(lr){15-17} \cmidrule(lr){18-20}
				&& Prec. & Rec. & F1 & Prec. & Rec. & F1 & Prec. & Rec. & F1 & Prec. & Rec. & F1 & Prec. & Rec. & F1 & Prec. & Rec. & F1 \\ 
				\midrule
				\parbox[t]{2mm}{\multirow{3}{*}{\rotatebox[origin=c]{90}{{\small AliNet}}}} 
				& NNC & .121 & .193 & .149 & .085 & .138 & .105 & .113 & .146 & .127 & .067 & .208 & .101 & .126 & .148 & .136 & .086 & .161 & .112 \\
				& MR & .207 & .299 & .245 & .159 & .320 & .213 & .231 & .321 & .269 & .178 & .340 & .234 & .195 & .190 & .193 & .160 & .200 & .178 \\
				& BR & .203 & .286 & .238 & .155 & .308 & .207 & .223 & .306 & .258 & .170 & .321 & .222 & .183 & .181 & .182 & .164 & .200 & .180 \\
				\midrule
				\parbox[t]{2mm}{\multirow{3}{*}{\rotatebox[origin=c]{90}{{\scriptsize MTransE}}}} 
				& NNC & .164 & .215 & .186 & .118 & .207 & .150 & .180 & .238 & .205 & .101 & .167 & .125 & .185 & .189 & .187 & .135 & .140 & .138 \\
				& MR & \underline{.302} & .349 & .324 & \underline{.231} & \.362 & .282 & .313 & \underline{.367} & \underline{.338} & .227 & \underline{.366} & .280 & \underline{.260} & \underline{.220} & \underline{.238} & \underline{.213} & \underline{.224} & .218 \\
				& BR & \textbf{.312} & \underline{.362} & \underline{.335} & \textbf{.241} & \underline{.376} & \underline{.294} & \underline{.314} & .363 & .336 & \textbf{.251} & .358 & \underline{.295} & \textbf{.265} & .208 & .233 & \textbf{.231} & .213 & \underline{.222} \\
				\midrule
				\multicolumn{2}{c}{Ours} & .279 & \textbf{.447} & \textbf{.344} & .219 & \textbf{.489} & \textbf{.303} & \textbf{.324} & \textbf{.409} & \textbf{.362} & \underline{.234} & \textbf{.460} & \textbf{.310} & .234 & \textbf{.320} & \textbf{.271} & .192 & \textbf{.363} & \textbf{.251}\\		
				\bottomrule
			\end{tabular}
		}
	}
	\caption{Entity alignment results on DBP2.0 in the consolidated setting.}
	\label{tab:alignment_ent_alignment}
\end{table*}

\subsection{Ablation Studies and Varying Anchor Nodes} \label{sec:anchor}
We conduct ablation studies to show each module's impact, and similarly pre-aligned entities' impact.

\textbf{Ablation Studies}
The impact of adaptive dangling indicator and relation projection attention in our method are investigated. We denote the counterpart removing $r_{e_i}$ as $w/o\,\, r_{e_i}$, and replacing $\bm{h}_{r_k}^{\to e_j}$ with the original $\bm{h}_{r_k}$ as $w/o\,\, \bm{h}_{r_k}^{\to e_j}$. Fig.~\ref{fig:ablation} gives the ablation study results on DBP2.0, where `Ours' represents an all-inclusive model. We observe that the $\bm{h}_{r_k}^{\to e_j}$ has a more substantial impact than $r_{e_i}$ to the alignment performance. As to why the $r_{e_i}$ has a minor impact on the alignment, we consider it may be attributed to the lower degrees of dangling entities on DBP2.0. The degrees of dangling entities are generally lower than that of matchable ones, indicating that the dangling is more isolated in the graph and thus has less impact on matchable nodes in the neighborhood aggregation.

\begin{figure*}[htbp]
	\centering
	\includegraphics[height = 0.13\textheight, width = 1\textwidth]{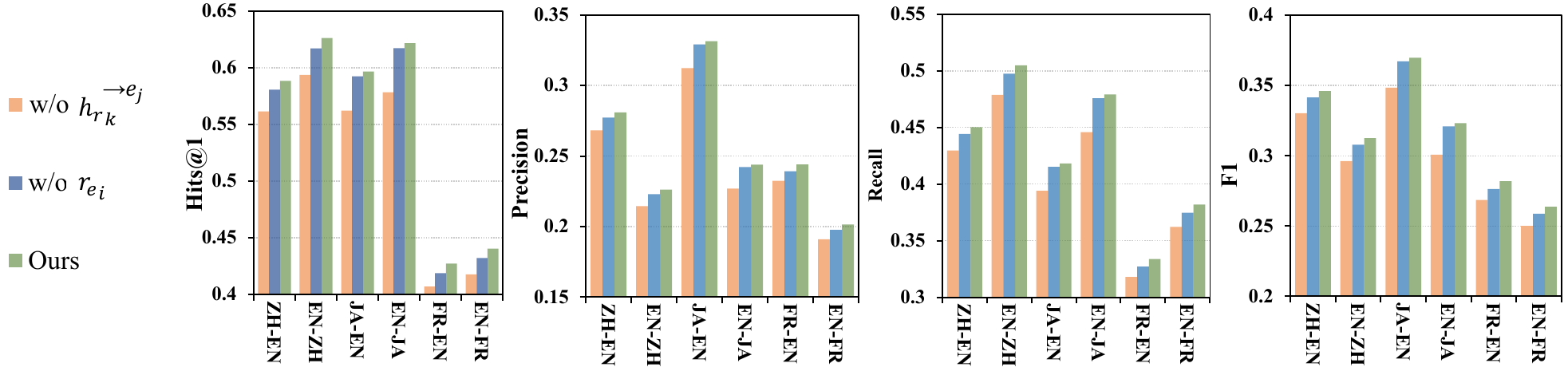}
	\caption{The ablation study of entity alignment performance in the consolidated setting on DBP2.0.}
	\label{fig:ablation}
\end{figure*}

\begin{figure*}[htbp]
	\centering
	\includegraphics[height = 0.12\textheight, width = 1\textwidth]{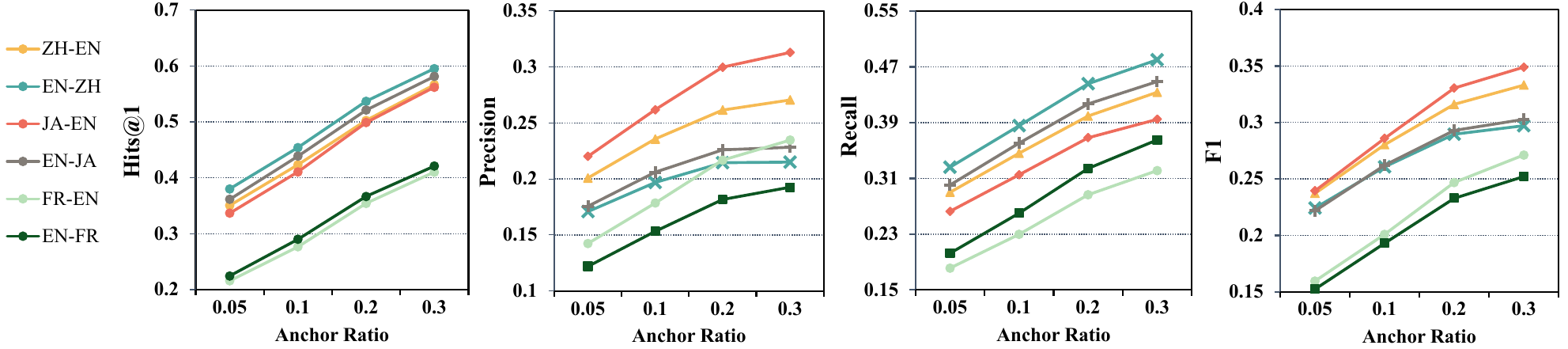}
	\caption{The entity alignment performance on varying pre-aligned anchor nodes ratios on DBP2.0.}
	\label{fig:ratios}
\end{figure*}

\textbf{Varying Anchor Nodes.}
Pre-aligned entities may be far scarce in reality. The sensitivity of our method to the proportion variation of anchor nodes is investigated. As the proportion increases, the alignment performance enhances as provided in Fig.~\ref{fig:ratios}.

Notably, even with an anchor ratio as low as 5\%, our alignment accuracy still well exceeds 30\% on most datasets except for FR-EN and EN-FR. Cause they contain twice as many entities and triples as ZH-EN and JA-EN, which introduces intricate dependencies among entities and thus greater challenges in alignment. Moreover, a larger graph may require a higher dimension of representations to learn, but the embedding dimension is restricted to merely 96 due to the out-of-memory problem.

\section{Conclusion}
We found that previous EA methods suffer from great performance decline if dangling entities are considered. Our goal is to address the EA problem with unlabeled dangling entities. A novel framework Lambda for detecting dangling entities and then pairing alignment is proposed. The core idea is to perform selective aggregation with spectral contrastive learning and to adopt theoretically guaranteed PU learning to relieve the dependence on the labeled dangling entities. Experimental results on multiple representative datasets demonstrate the effectiveness of our proposed approach. This work also has important implications for real-world applications, such as EA of different scales, KG plagiarism detection, etc.

\newpage
\begin{ack}
The research was supported in part by NSF China (No. 61960206002, 62272306, 62032020, 62136006).

The authors would like to thank the reviewers for their constructive comments and appreciate the Student Innovation Center of SJTU for providing GPUs. Hang Yin personally thanks Chenyu Liu, Yuting Feng, Jingyuan Zhou, and Qingyang Liu for feedbacks on early versions of this paper. Hang Yin would also like to thank Professor Yuan Luo for his inspiration in the information theory course (CS7317) of Shanghai Jiao Tong University.
\end{ack}

\bibliographystyle{plain}

\newpage
\appendix

\begin{appendix}
	\thispagestyle{plain}
	\begin{center}
		{\Large \bf Appendix}
	\end{center}
\end{appendix}


\section{Notation}\label{app_0}
\subsection{Definitions}
\begin{definition}[Knowledge graph] Knowledge graph (KG) is a directed graph $G=(E,R,T)$ comprising three distinct sets: entities $E$, relations $R$, and triples $T \subseteq E \times R \times E$. KG is stored in the form of triples $<$entity, relation, entity$>$, with entities denoted by nodes and the relation between entities defined by edges. 
\end{definition}

\begin{definition}[Entity alignment]
	Given source KG and target KG, corresponding to $G_s=(E_s, R_s, T_s)$ and $G_t=(E_t, R_t, T_t)$ respectively, and $A=\left\{(u,v)|u \in E_s, \newline v \in E_t, u \equiv v \right\}$ a set of pre-aligned anchor node pairs, where $\equiv$ indicates equivalence, the goal of entity alignment is to identify additional pairs of potentially equivalent entities using information from $G_s$, $G_t$, and $A$. This task typically assumes a one-to-one correspondence between $E_s$ and $E_t$.
\end{definition}

\begin{definition}[Entity alignment with dangling cases]
	Let entities in the source and target graphs be composed of two types of nodes: $E_s=D_s \cup M_s, E_t=D_t \cup M_t$, where $D_s, D_t$ denote dangling sets that contain entities that have no counterparts, and $M_s, M_t$ are matchable sets. A set of pre-aligned anchor node pairs are $S=\left\{(u,v)|u \in M_s, v \in M_t, u \equiv v\right\}$. The task seeks to discover the remaining aligned entities given $G_s$, $G_t$, and $S$. 
\end{definition}

\subsection{Transductive Learning:}
Transductive learning models are trained from observed, specific (training) cases to specific (test) cases, employing both training and test information except for test labels. In contrast, the inductive learning model is reasoning from observed training cases to general rules, which are then applied to the test cases. Let's get down to the EA task. As KG structure is accessible through given triples, which can accurately describe the connections between entities. When we attempt to figure out entity alignment tasks, the KG structure information of the whole source and target KGs is what we could exploit, covering potentially (test) equivalent entities' relative positions. That's why we confine the problem field to transductive learning.

\subsection{Graph Convolutional Networks}
For graph convolutional networks (GCN) \cite{kipf2016semi}, the embedding $\bm{h}^{l+1}_{e_i}$ of node ${e_i}$ at the $l+1$-th layer is updated iteratively by aggregating node features of the neighboring nodes $\mathcal{N}_{e_i}$ from the prior layer:
\begin{equation}
	\bm{h}^{l+1}_{e_i} = \sigma\left( \sum_{e_j\in \mathcal{N}_{e_i} \cup \{e_i\}} \alpha_{i,j} W^{l+1} \bm{h}^{l}_{e_j}\right),
\end{equation}
where each embedding $\bm{h}^{l}_{e_i}$ represents the $d$-dimensional embedding vector of $e_i$, $\alpha_{i,j}$ denotes the weight coefficient between $e_i$ and $e_j$, $W^{l+1}$ being the transformation matrix of the $(l+1)$-th GNN layer, and $\sigma$ being the activation function.

\section{Proof for Lemma~\ref{spec_hard}}\label{app_5}
\begin{proof}
	
	The $\mathcal{L}_{\mathrm{info}}$ also has a good effect on mining high-quality negative samples, which we show has an equivalent effect to \textit{truncated uniform negative sampling (TUNS)} in \cite{sun2018bootstrapping}. TUNS points out that negative samples obtained by uniform random sampling are highly redundant since only high-quality negative samples improve the model. Thus TUNS chooses the K-nearest neighbors of the $e_i$ as the negative samples, which are most challenging to distinguish. In the special case of $K = 1$, the loss can be written as $\mathcal{L}_{\mathrm{TUNS}} = \sum_{e_i\in \mathcal{X}_p} \max_{j}(H(e_i, e^i_{+}, e^i_{j})).$ If we approximate the $\max$ function by the LogSumExp, the contrastive loss function turns to
	\begin{equation}
		\mathcal{L}_{\mathrm{TUNS}}  \approx \sum_{e_i\in \mathcal{X}_p} \frac{1}{\lambda} \log\left(\sum_j^{N}\exp(\lambda~ H(e_i, e^i_{+}, e^i_{j}))\right),
	\end{equation}
	minimizing which is equivalent to minimizing Eq.~\eqref{L1}. Hence our contrastive learning loss is actually a special form of TUNS. For randomly sampled negative samples, $\mathcal{L}_{\mathrm{info}}$ can play a role in preferentially optimizing high-quality negative samples.\\
\end{proof}

\section{Proof for Theorem~\ref{theorem_unbiased}}\label{app_2}
\begin{proof}
	
	The risk of $g$ is $R(g)=\mathbb{E}_{(X,Y)\sim p(x,y)}[\ell(g(X),Y)]=\pi_p R_\mathrm{p}^+(g)+\pi_n R_\mathrm{n}^-(g)$ in positive negative learning problems. In positive-unlabeled learning where $\mathcal{X}_\mathrm{n}$ is unavailable, we can only approximate $R(g)$ by positive samples and unlabeled samples. We represent the unlabeled distribution as $p_\mathrm{u}(x) = \pi^\mathrm{u}_\mathrm{n} p_\mathrm{n}(x) + \pi^\mathrm{u}_\mathrm{p} p_\mathrm{p}(x)$, so that the negative distribution can be written as $\pi_\mathrm{n} p_\mathrm{n}(x) =\frac{\pi_\mathrm{n}}{\pi^u_\mathrm{n}} \cdot \left[p_\mathrm{u}(x) - \pi^u_\mathrm{p} p_\mathrm{p}(x)\right]$. Provided $R_\mathrm{p}^-(g)=\mathbb{E}_{X\sim p_\mathrm{p}(x)}[\ell(g(X),-1)]$ and $R_\mathrm{u}^-(g)=\mathbb{E}_{X\sim p_\mathrm{u}(x)}[\ell(g(X),-1)]$, we obtain that 
	\begin{equation}
		\pi_\mathrm{n} R_\mathrm{n}^-(g) = \frac{\pi_\mathrm{n}}{\pi^\mathrm{u}_\mathrm{n}} \cdot \left[ R_\mathrm{u}^-(g)-\pi^\mathrm{u}_\mathrm{p} R_\mathrm{p}^-(g) \right],
	\end{equation}
	and 
	\begin{equation}
		\widehat{R}_{\mathrm{pu}}(g)= \pi_{\mathrm{p}} \widehat{R}_{\mathrm{p}}^+(g)+ \frac{\pi_\mathrm{n}}{\pi^\mathrm{u}_\mathrm{n}} \cdot \left[ \widehat{R}_{\mathrm{u}}^-(g)-\pi^\mathrm{u}_{\mathrm{p}}\widehat{R}_{\mathrm{p}}^-(g) \right].
		\label{proof_Rpu}
	\end{equation}
	Specifically, $\pi_\mathrm{n} = 1 - \pi_\mathrm{p}$, $\pi^\mathrm{u}_\mathrm{n}=1-\pi^\mathrm{u}_\mathrm{p}$ could be derived as \textit{class-prior probability given $\pi_\mathrm{p}$ and $\pi^\mathrm{u}_\mathrm{p}$}. In particular, the ratio of labeled positive samples could be precisely figured out as $\pi^{tr}_{\mathrm{p}}$ in transductive learning, given which $\pi^\mathrm{u}_\mathrm{p} = \frac{\pi_\mathrm{p}-\pi^{tr}_{\mathrm{p}}}{1-\pi^{tr}_\mathrm{p}}$, $\pi^\mathrm{u}_\mathrm{n}=1-\pi^\mathrm{u}_\mathrm{p}$ could be derived as \textit{class-prior probability}.
	
\end{proof}


\section{Proof for Theorem~\ref{deviation_bound}}\label{app_3}
\begin{proof}
	
	$\mathfrak{R}_{n,q}$ is defined as the \textit{Rademacher complexity} of the class of classifiers $\mathcal{G}$ for the sampling of size $n$ from distribution $q(x)$. From \cite{niu2016theoretical} we have that with probability at least $1-\delta/2$, the uniform deviation bounds below hold separately:
	\begin{equation}
		\begin{aligned}
			\sup_{g\in\mathcal{G}}|\widehat{R}_+(g)-R_+(g)|&\le2L_\ell\mathfrak{R}_{n_+,p_\mathrm{p}}(\mathcal{G})+\sqrt{\frac{\ln(4/\delta)}{2n_+}}\\
			&\triangleq M_+>0,\\
			\sup_{g\in\mathcal{G}}|\widehat{R}_{\mathrm{u},-}(g)-R_{\mathrm{u},-}(g)|&\le2L_\ell\mathfrak{R}_{n_\mathrm{u},p_\mathrm{u}}(\mathcal{G})+\sqrt{\frac{\ln(4/\delta)}{2n_\mathrm{u}}}\\
			&\triangleq M_->0,
		\end{aligned}
		\notag
	\end{equation}
	where $L_\ell$ is the \textit{Lipschitz constant} of loss $\ell$ in its first parameter and $n_+$ is the number of positive samples while $n_u$ is that of unlabeled. Following the \textit{symmetric condition assumption} in \cite{niu2016theoretical}, it is obviously holds that:
	\begin{equation}
		\begin{aligned}
			\mathrm{Var}(\widehat{R^{'}}_{\mathrm{pu}}(g)) =\, & 2\pi_\mathrm{p} M_{+} + M_{-}, \textrm{while}\\  
			\mathrm{Var}(\widehat{R}_{\mathrm{pu}}(g)) =\, & (\pi_\mathrm{p}+\frac{\pi_\mathrm{n} \cdot \pi_\mathrm{p}^\mathrm{u}}{\pi_\mathrm{n}^\mathrm{u}}) M_{+} +\frac{\pi_\mathrm{n}}{\pi_\mathrm{n}^\mathrm{u}} M_{-}
		\end{aligned}
		\notag
	\end{equation}
	holds in our setting. Then it is evident that $\frac{\pi_\mathrm{n}}{\pi_\mathrm{p}} < \frac{\pi_\mathrm{n}^u}{\pi_\mathrm{p}^u}$, i.e., $\frac{\pi_\mathrm{n} \cdot \pi_\mathrm{p}^\mathrm{u}}{\pi_\mathrm{n}^\mathrm{u}} < \pi_\mathrm{p}$ and $\frac{\pi_\mathrm{n}}{\pi_\mathrm{n}^u}<1$. 
	
	Consequently, comparing the coefficients of $M_+$ and $M_-$ leads to the conclusion that $\widehat{R}_{\mathrm{pu}}(g)$ could possess tighter uniform deviation bound than that of \textit{Non-negative Risk Estimator} \cite{kiryo2017positive}.
	
\end{proof}

\section{Convergence Proof}\label{app_1}
The expectation–maximization (EM) algorithm is an iterative approach to maximize the likelihood $p(\mathbf{y}|\mathbf{X};\theta)$ of target variables $y$ over input variables $\mathbf{X}$ and parameters $\theta$. The EM algorithm works iteratively, and each iteration consists of an expectation (E) step and a maximization (M) step. The E step computes the expectation of the log-likelihood concerning the conditional distribution of the latent variable $\mathbf{z}$ given the current parameters $\theta^{(t)}$ at the $t$-th iteration:

\begin{equation}
	\begin{aligned}
		&Q(\theta\mid\theta^{(t)})=\mathbb{E}_{\mathbf{z}\sim p(\mathbf{z}\mid\mathbf{X},\mathbf{y},\theta^{(t)})}[\log p(\mathbf{y},\mathbf{z}\mid\mathbf{X},\theta)].\\
	\end{aligned}
	\label{Q_func}
\end{equation}

Then, the M step finds a set of parameters that maximizes the computed expectation:
\begin{equation}
	\begin{aligned}
		&\theta^{(t+1)}=\arg\max_{\theta}Q(\theta\mid\theta^{(t)}).\\
	\end{aligned}
\end{equation}
Due to the maximization step, we get the following inequality naturally:
\begin{equation}
	\begin{aligned}
		&Q(\theta^{(t+1)} \mid \theta^{(t)})-Q(\theta^{(t)} \mid \theta^{(t)})\ge 0.
	\end{aligned}
\end{equation}

\begin{lemma}[Convergence of EM algorithm \cite{murphy2012machine}]
	It is guaranteed that the EM algorithm always improves $\log p(\mathbf{y}\mid\mathbf{X},\theta)$ by increasing the value of $Q(\theta\mid\theta^{(t)})$. Since $\log p(\mathbf{y}\mid\mathbf{X},\theta)$ is monotonically bounded, the EM must converge.
	\label{theorem_converge}
\end{lemma}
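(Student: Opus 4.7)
The plan is to establish the two classical facts separately: (i) a single EM cycle monotonically improves $\log p(\mathbf{y}\mid\mathbf{X},\theta)$, and (ii) the resulting monotone bounded sequence of log-likelihoods must converge. For (i), I would first introduce the auxiliary quantity $H(\theta\mid\theta^{(t)}) := \mathbb{E}_{\mathbf{z}\sim p(\mathbf{z}\mid\mathbf{X},\mathbf{y},\theta^{(t)})}[\log p(\mathbf{z}\mid\mathbf{X},\mathbf{y},\theta)]$ and derive the identity
\begin{equation}
\log p(\mathbf{y}\mid\mathbf{X},\theta) \;=\; Q(\theta\mid\theta^{(t)}) \;-\; H(\theta\mid\theta^{(t)})
\end{equation}
by writing $\log p(\mathbf{y}\mid\mathbf{X},\theta) = \log p(\mathbf{y},\mathbf{z}\mid\mathbf{X},\theta) - \log p(\mathbf{z}\mid\mathbf{X},\mathbf{y},\theta)$ pointwise in $\mathbf{z}$ and taking the expectation under $p(\mathbf{z}\mid\mathbf{X},\mathbf{y},\theta^{(t)})$; the left-hand side is constant in $\mathbf{z}$ and so is unchanged, while the two right-hand terms become $Q(\theta\mid\theta^{(t)})$ and $H(\theta\mid\theta^{(t)})$ by definition.

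Next I would evaluate this identity at $\theta=\theta^{(t+1)}$ and at $\theta=\theta^{(t)}$, subtract, and collect the two increments as
\begin{equation}
\log p(\mathbf{y}\mid\mathbf{X},\theta^{(t+1)}) - \log p(\mathbf{y}\mid\mathbf{X},\theta^{(t)}) \;=\; \bigl[Q(\theta^{(t+1)}\mid\theta^{(t)}) - Q(\theta^{(t)}\mid\theta^{(t)})\bigr] + \bigl[H(\theta^{(t)}\mid\theta^{(t)}) - H(\theta^{(t+1)}\mid\theta^{(t)})\bigr].
\end{equation}
The first bracket is $\ge 0$ directly from the M-step definition $\theta^{(t+1)} = \arg\max_{\theta}Q(\theta\mid\theta^{(t)})$. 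The second bracket is precisely the Kullback--Leibler divergence $\mathrm{KL}\bigl(p(\mathbf{z}\mid\mathbf{X},\mathbf{y},\theta^{(t)}) \,\Vert\, p(\mathbf{z}\mid\mathbf{X},\mathbf{y},\theta^{(t+1)})\bigr)$, which is non-negative by Gibbs' inequality. Both brackets being non-negative immediately yields the first half of the lemma: increasing $Q$ in the M-step provably increases (or at worst preserves) $\log p(\mathbf{y}\mid\mathbf{X},\theta)$.

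For (ii), I would observe that $p(\mathbf{y}\mid\mathbf{X},\theta)$ is a probability (or, in the continuous case, an integrable density with bounded total mass), so the sequence $\{\log p(\mathbf{y}\mid\mathbf{X},\theta^{(t)})\}_{t\ge 0}$ is bounded above. Combined with the monotonicity of (i), the monotone convergence theorem for real sequences forces convergence to a finite limit, completing the conclusion of the lemma.

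The main subtlety will be the KL step in the second bracket: while Gibbs' inequality is standard, its application tacitly requires $p(\mathbf{z}\mid\mathbf{X},\mathbf{y},\theta)$ to be a well-defined conditional density/p.m.f.\ at every iterate $\theta^{(t)}$ together with mutual absolute continuity between the two conditional distributions, which is part of the usual regularity assumed in the EM framework (and inherited here by the remark that iPULE is ``a special case of the EM algorithm''). Apart from this, the argument is purely an algebraic manipulation of expectations plus a one-line appeal to monotone convergence.
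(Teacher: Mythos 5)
Your proposal is correct and follows essentially the same route as the paper's proof: the same decomposition $\log p(\mathbf{y}\mid\mathbf{X},\theta)=Q(\theta\mid\theta^{(t)})-H(\theta\mid\theta^{(t)})$ obtained by taking the expectation of the Bayes identity under $p(\mathbf{z}\mid\mathbf{X},\mathbf{y},\theta^{(t)})$, the M-step for the $Q$-increment, and Gibbs' inequality for the entropy increment (which you merely rename as a KL divergence), followed by monotone-bounded convergence. The only differences are a sign convention on $H$ and your added remark on absolute-continuity regularity, neither of which changes the argument.
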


\begin{proof}
	The following equation holds for any z:
	\begin{equation}
		\begin{aligned}
			\log p(\mathbf{y}\mid\mathbf{X},\theta)=\log p(\mathbf{y},\mathbf{z}\mid\mathbf{X},\theta)-\log p(\mathbf{z}\mid\mathbf{X},\mathbf{y},\theta).
			\notag
		\end{aligned}
		\label{log_p}
	\end{equation}
	
	We take the expectation over $p(\mathbf{z}\mid\mathbf{X},\mathbf{y},\theta^{(t)})$ for both sides as follows:
	\begin{equation}
		\begin{aligned}
			&\quad\mathbb{E}_{p(\mathbf{z}|\mathbf{X},\mathbf{y},\theta^{(t)})}[\log p(\mathbf{y}\mid\mathbf{X},\theta)] \\
			&=\mathbb{E}_{p(\mathbf{z}|\mathbf{X},\mathbf{y},\theta^{(t)})}[\log p(\mathbf{y},\mathbf{z}\mid\mathbf{X},\theta)]-\mathbb{E}_{p(\mathbf{z}|\mathbf{X},\mathbf{y},\theta^{(t)})}[\log p(\mathbf{z}\mid\mathbf{X},\mathbf{y},\theta)] \\
			&=Q(\theta\mid\theta^{(t)})+H(p(\mathbf{z}\mid\mathbf{X},\mathbf{y},\theta)\mid p(\mathbf{z}\mid\mathbf{X},\mathbf{y},\theta^{(t)}))\\
			&\triangleq Q(\theta\mid\theta^{(t)})+H(p_\theta\mid p_{\theta^{(t)}})
			\notag
		\end{aligned}
		\label{E_theta_t}
	\end{equation}
	where $H$ stands for the entropy. If we substitute $\theta^{(t)}$ for $\theta$, we get the following:
	\begin{equation}
		\log p(\mathbf{y}\mid\mathbf{X},\theta^{(t)})=Q(\theta^{(t)}\mid\theta^{(t)})+H(p_{\theta^{(t)}}\mid p_{\theta^{(t)}}).
		\label{theta_t}
	\end{equation}
	
	Gibbs’ inequality states that $H(q\mid p)-H(p\mid p)\geq0$ always holds for any distribution $p$ and $q$. Hence we have:
	\begin{equation}
		\begin{aligned}
			&\quad\log p(\mathbf{y}\mid\mathbf{X},\theta^{(t+1)})-\log p(\mathbf{y}\mid\mathbf{X},\theta^{(t)}) \\
			&= Q(\theta^{(t+1)}\mid\theta^{(t)})+H(p_{\theta^{(t+1)}}\mid p_{\theta^{(t)}})-Q(\theta^{(t)}\mid\theta^{(t)})-H(p_{\theta^{(t)}}\mid p_{\theta^{(t)}}) \\
			&=Q(\theta^{(t+1)}\mid\theta^{(t)})-Q(\theta^{(t)}\mid\theta^{(t)})+H(p_{\theta^{(t+1)}}\mid p_{\theta^{(t)}})-H(p_{\theta^{(t)}}\mid p_{\theta^{(t)}}) \\
			&\geq Q(\theta^{(t+1)}\mid\theta^{(t)})-Q(\theta^{(t)}\mid\theta^{(t)}) = Q(\theta^{(t+1)}\mid\theta^{(t)})-Q(\theta^{(t)}) \ge 0.\notag
		\end{aligned}
		\label{final_conclusion}
	\end{equation}
\end{proof}

Now we provide the proof for Theorem~\ref{theorem_Q_Rpu}.
\begin{proof}

	According to Lemma~\ref{theorem_converge}, we have that the EM algorithm converges. Next, we model our problem and give the approximate equivalence between our algorithm and the EM algorithm to prove our algorithm converges.
	
	The latent variables $\mathbf{z}$ represent the true label distribution of unlabeled samples, where $\hat{y}_{i}(u)=f(\mathbf{X},i;\theta)$ is the probability of node $i$ being labeled as $u \in \{ +1 , -1 \}$ by the current classifier $f$. Given $\pi_\mathrm{p}^\mathrm{u}$, the label distribution of all unlabeled samples is:
	\begin{equation}
		\begin{aligned}
			p(z_i)=
			\begin{cases}
				\hat{\pi}_\mathrm{p}^\mathrm{u}&\text{if\quad}z_i=+1,\\
				1-\hat{\pi}_\mathrm{p}^\mathrm{u}&\text{if\quad}z_i=-1.
			\end{cases}
		\end{aligned}
		\label{z}
	\end{equation}
	
	First, the conditional distribution $p(\mathbf{z}\mid\mathbf{X},\mathbf{y},\theta^{(t)})$ of latent variables given the current parameters $\theta^{(t)}$ is approximated by:
	\begin{equation}
		\begin{aligned}
			&p(\mathbf{z}\mid\mathbf{X},\mathbf{y},\theta^{(t)})\approx\prod_{i\in\mathcal{U}}p(z_i).\\
		\end{aligned}
		\label{z_margin}
	\end{equation}
	
	Second, the joint distribution $p(\mathbf{y},\mathbf{z}\mid\mathbf{X},\theta)$ of labeled and unlabeled nodes with new parameters $\theta$ is approximated by the classifier $f$, which is also considered as a marginalization function that gives the label distribution of each node based on all given information:
	\begin{equation}
		\begin{aligned}
			&p(\mathbf{y},\mathbf{z}\mid\mathbf{X},\theta)\approx\prod_{i\in\mathcal{P}}\hat{y}_{i}(+1)\prod_{j\in\mathcal{U}}\hat{y}_{j}(z_{j})
		\end{aligned}
		\label{y_z_margin}
	\end{equation}
	
	We propose to use the average log-likelihood differences to measure the classification preference of the model. It can be transformed into the following form by $\Delta_{\mathcal{U}}$ and $\Delta_{\mathcal{P}}$, representing that on domain $\mathcal{U}$ and $\mathcal{P}$: 
	\begin{equation}
		\begin{aligned}
			&\Delta_{\mathcal{U}} = \frac{1}{|\mathcal{U}|}\sum_{j\in\mathcal{U}}\log \hat{y}_j(+1) - \log \hat{y}_j(-1)\\
			&\Delta_{\mathcal{P}} = \frac{1}{|\mathcal{P}|}\sum_{i\in\mathcal{P}}\log\hat{y}_i(+1) - \log\hat{y}_i(-1)\notag
		\end{aligned}
	\end{equation}
	
	If the mean value is positive, it means that the logarithmic probability of positive classes is higher than that of negative classes in the given domain, and vice versa. When we ignore the preference of the classification model, this actually describes the category feature bias in certain domains.
	
	Let's rethink the meaning of \textbf{\textit{preference condition}}. If the model has a similar classification preference in domain $\mathcal{U}$ and $\mathcal{P}$, we can express as $\Delta_{\mathcal{U}} \approx \Delta_{\mathcal{P}}$. This condition can be arranged as the mathematical expression of the condition given by Theorem~\ref{theorem_Q_Rpu} through the properties of the $\log$ function.
	
	\begin{equation}
		\begin{aligned}
			&\Delta_{\mathcal{U}} \approx \Delta_{\mathcal{P}} \equiv \sum_{j\in\mathcal{U}}\frac{1}{|\mathcal{U}|}\log\frac{\hat{y}_j(+1)}{\hat{y}_j(-1)} \approx \sum_{i\in\mathcal{P}}\frac{1}{|\mathcal{P}|}\log\frac{\hat{y}_i(+1)}{\hat{y}_i(-1)}\notag
		\end{aligned}
	\end{equation}
	
	We derive $-\mathcal{L}_{\mathrm{pu}}$ from Eq.(~\ref{pnrisk}), since the goal of training is to minimize the objective function:
	\begin{equation}
		\begin{aligned}
			&\quad\frac{1}{N}\mathbb{E}_{\mathbf{z}\sim p(\mathbf{z}|\mathbf{X},\mathbf{y},\theta^{(t)})}[\log p(\mathbf{y},\mathbf{z}\mid\mathbf{X},\theta)] \\
			&=\frac{1}{N}\sum_{\mathbf{z}}p(\mathbf{z}\mid\mathbf{X},\mathbf{y},\theta^{(t)})\log p(\mathbf{y},\mathbf{z}\mid\mathbf{X},\theta) \\
			&\approx\frac{1}{N}\sum_{\mathbf{z}}p(\mathbf{z}\mid\mathbf{X},\mathbf{y},\theta^{(t)})(\sum_{i\in\mathcal{P}}\log\hat{y}_{i}(+1)+\sum_{j\in\mathcal{U}} 
			\log\hat{y}_{j}(z_{j})) \\
			&=\frac{1}{N}\sum_{i\in\mathcal{P}}\log\hat{y}_{i}(+1)+\frac{1}{N}\sum_{j\in\mathcal{U}} 
			\sum_{z_{j}\in\pm1} p(z_j) \log\hat{y}_{j}(z_{j}) \\
			&=\frac{|\mathcal{P}|}{N}\cdot\frac{1}{|\mathcal{P}|}\sum_{i\in\mathcal{P}}\log\hat{y}_{i}(+1)+\frac{|\mathcal{U}|}{N}\cdot\frac{1}
			{|\mathcal{U}|}\sum_{j\in\mathcal{U}}\left(\hat{\pi}_\mathrm{p}^\mathrm{u}\log\hat{y}_{j}(+1)+(1-\hat{\pi}_\mathrm{p}^\mathrm{u})\log\hat{y}_{j}(-1)\right)\notag\\
			&=\frac{|\mathcal{P}|}{N}\cdot\frac{1}{|\mathcal{P}|}\sum_{i\in\mathcal{P}}\log\hat{y}_{i}(+1)+\frac{|\mathcal{U}|}{N}\cdot\frac{1}{|\mathcal{U}|}\sum_{j\in\mathcal{U}}\log\hat{y}_{j}(-1) + \frac{\hat{\pi}_\mathrm{p}^\mathrm{u}|\mathcal{U}|}{N}\Delta_{\mathcal{U}}\\
		\end{aligned}
	\end{equation}
	
	We replaced $\Delta_{\mathcal{U}}$ with $\Delta_{\mathcal{P}}$, and this process is completed by the above \textbf{\textit{preference condition}}.
	\begin{equation}
		\begin{aligned}
			&\approx\frac{|\mathcal{P}|}{N}\cdot\frac{1}{|\mathcal{P}|}\sum_{i\in\mathcal{P}}\log\hat{y}_{i}(+1)+\frac{|\mathcal{U}|}{N}\cdot\frac{1}
			{|\mathcal{U}|}\sum_{j\in\mathcal{U}}\log\hat{y}_{j}(-1) + \frac{|\mathcal{U}|\hat{\pi}_\mathrm{p}^\mathrm{u}}{N}\cdot\frac{1}
			{|\mathcal{P}|}\sum_{i\in\mathcal{P}}\log\hat{y}_{i}(+1)-\log\hat{y}_{i}(-1)\\
			&=-\frac{|\mathcal{P}|+|\mathcal{U}| \cdot \hat{\pi}_{\mathrm{p}}^{\mathrm{u}}}{|\mathcal{P}|+|\mathcal{U}|}\sum_{i\in\mathcal{P}}\frac{1}{|\mathcal{P}|}\log\hat{y}_{i}(+1)-\frac{\frac{|\mathcal{N}|}{|\mathcal{P}|+|\mathcal{U}|}}{\frac{|\mathcal{N}|}{|\mathcal{U}|}}  \left( \sum_{j\in\mathcal{U}}\frac{1}{|\mathcal{U}|}\log\hat{y}_{j}(-1)\right.\left.-\hat{\pi}_\mathrm{p}^\mathrm{u}\sum_{i\in\mathcal{P}}\frac{1}{|\mathcal{P}|}\log\hat{y}_{i}(-1) \right)
			\notag
		\end{aligned}
	\end{equation}
	
	Denote $\pi_{\mathrm{p}} = \frac{|\mathcal{P}|+|\mathcal{U}| \cdot \hat{\pi}_{\mathrm{p}}^{\mathrm{u}}}{|\mathcal{P}|+|\mathcal{U}|}, \pi_{\mathrm{n}} = \frac{|\mathcal{N}|}{|\mathcal{P}|+|\mathcal{U}|}$ and $\pi_{\mathrm{n}}^{\mathrm{u}} = \frac{|\mathcal{N}|}{|\mathcal{U}|}$. The above formula is equivalent to:
	\begin{equation}
		\begin{aligned}
			&\arg \max_\theta \frac{1}{N}\mathbb{E}_{\mathbf{z}\sim p(\mathbf{z}|\mathbf{X},\mathbf{y},\theta^{(t)})}[\log p(\mathbf{y},\mathbf{z}\mid\mathbf{X},\theta)] \\
			&= \arg \max_\theta \underbrace{-\pi_{\mathrm{p}} \widehat{R}_{\mathrm{p}}^+(g)- \frac{\pi_\mathrm{n}}{\pi^\mathrm{u}_\mathrm{n}} \cdot \left[ \widehat{R}_{\mathrm{u}}^-(g)-\pi^\mathrm{u}_{\mathrm{p}}\widehat{R}_{\mathrm{p}}^-(g) \right]}_{-\widehat{R}_{\mathrm{pu}}(g)}\\
			&= \arg \min_\theta\widehat{R}_{\mathrm{pu}}(g) = \arg \max_\theta-\mathcal{L}_{\mathrm{pu}}
			\label{proof_Rpu_EM}
		\end{aligned}
	\end{equation}
\end{proof}

\section{Statistics of Experimental Dataset and Baselines}\label{app_7}
\textbf{Datasets.} The training/test sets for each dataset are generated using a fixed random seed. For entity alignment, 30\% of matchable entity pairs constitute the training set, while the remaining form the test set. For dangling entity detection, we did not utilize any labeled dangling entity data, in contrast to prior work which labels 30\% of the dangling entities and matchable pairs respectively for training \cite{sun2021knowing}. Hence our method imposes minimal restrictions on annotated data. All datasets are briefly introduced in the following and some statistics are provided in Tab.~\ref{tab:static_1}, ~\ref{tab:static_2}, ~\ref{tab:static_3}. 

In addition to the existing datasets, we also constructed DBP2.0-minus \& -plus as supplementary to DBP2.0, GA16K enabling comparison between Dangling-Entities-Unaware baselines, and GA-DBP15K for evaluation of iPULE.

\begin{table}[t]
	\begin{center}
		\renewcommand\arraystretch{0.8}
		\setlength{\tabcolsep}{10pt}
		\begin{tabular}{p{1.55cm}c|cccccccc}
			\toprule
			\multicolumn{2}{c|}{Datasets} & \# Entities & \# Rel.  & \# Triples & \# Dang &\# Align \\
			\toprule
			\multirow{2}{1.3cm}{$\rm{DBP2.0_{ZH-EN}}$} & Chinese & 84,996 & 3,706& 286,067 & 51,813&\multirow{2}{0.7cm}{33,183}\\
			& English & 118,996 & 3,402 & 586,868 & 85,813\\
			\multirow{2}{1.3cm}{$\rm{DBP2.0_{JA-EN}}$} & Japanese & 100,860 & 3,243 & 347,204 & 61,090 &\multirow{2}{0.7cm}{39,770}\\
			& English & 139,304 & 3,396  & 668,341  & 99,534\\
			\multirow{2}{1.3cm}{$\rm{DBP2.0_{FR-EN}}$} & French & 221,327 & 2,841 & 802,678 & 97,375 &\multirow{2}{0.8cm}{123,952}\\
			& English & 278,411 & 4,598 & 1,287,231 & 154,459 \\
			\hline
			\multirow{2}{1.3cm}{$\rm{DBP15K_{ZH-EN}}$} & Chinese & 19,388 & 1,701& 70,414 & 4,388 &\multirow{2}{0.7cm}{15,000}\\
			& English & 19,572 & 1,323 & 95,142 & 4,572\\
			\multirow{2}{1.3cm}{$\rm{DBP15K_{JA-EN}}$} & Japanese & 19,814 & 1,299 & 77,214 & 4,814 &\multirow{2}{0.7cm}{15,000}\\
			& English & 19,780 & 1,153  & 93,484  & 4,780\\
			\multirow{2}{1.3cm}{$\rm{DBP15K_{FR-EN}}$} & French & 19,661 & 903 & 105,998 & 4,661 &\multirow{2}{0.7cm}{15,000}\\
			& English & 19,993 & 1,208 & 115,722 & 4,993 \\
			\hline
			\multirow{2}{1.3cm}{$\rm{GA16K}$} & None & 6,208 & 8 & 68,534 & 0 &\multirow{2}{0.7cm}{6,208}\\
			& None & 16,363 & 12 & 151,662 & 10,155 \\
			\bottomrule
		\end{tabular}
\end{center}
\caption{Statistics of DBP2.0, DBP15K and GA16K.}
\label{tab:static_1}
\end{table}

\begin{table}[t]
\begin{center}
	\renewcommand\arraystretch{0.9}
	\setlength{\tabcolsep}{10pt}
	\begin{tabular}{p{2.4cm}c|cccccccc}
		\toprule
		\multicolumn{2}{c|}{Datasets} & \# Entities & \# Rel.  & \# Triples & \# Dang &\# Align \\
		\toprule
		\multirow{2}{1.3cm}{$\rm{DBP2.0_{ZH-EN}Plus}$} & Chinese & 69,386 & 3,455 & 241,588 & 36,302 &\multirow{2}{0.7cm}{33,084}\\
		& English & 94,026 & 3,131 & 470,284 & 60,942\\
		\multirow{2}{1.3cm}{$\rm{DBP2.0_{JA-EN}Plus}$} & Japanese & 82,192 & 3,011 & 291,406 & 42,588 &\multirow{2}{0.7cm}{39,604}\\
		& English & 110,362 & 3,054 & 532,988  & 70,758\\
		\hline
		\multirow{2}{1.3cm}{$\rm{DBP2.0_{ZH-EN}Minus}$} & Chinese & 72,252 & 3,351 & 200,400 & 54,594 &\multirow{2}{0.7cm}{17,658}\\
		& English & 107,853 & 3,140 & 421,597 & 90,195\\
		\multirow{2}{1.3cm}{$\rm{DBP2.0_{JA-EN}Minus}$} & Japanese & 86,241 & 3,014 & 236,546 & 64,841 &\multirow{2}{0.7cm}{21,400}\\
		& English & 126,558 & 3,166 & 485,133  & 105,158\\
		\bottomrule
	\end{tabular}
\end{center}
\caption{Statistics of DBP2.0-Plus and DBP2.0-Minus}
\label{tab:static_2}
\end{table}

\textbf{DBP15K}\footnote{https://paperswithcode.com/dataset/dbp15k} \cite{sun2017cross}: DBP15K consists of three cross-lingual subsets constructed from DBpedia: English-French($\rm{DBP_{FR-EN}}$), English-Chinese ($\rm{DBP_{ZH-EN}}$), English-Japanese($\rm{DBP_{JA-EN}}$). Each subset contains 15,000 pre-aligned entity pairs. This dataset includes a small proportion of dangling entity samples which is yet mostly ignored in previous entity alignment tasks.

\textbf{DBP2.0}\footnote{https://github.com/nju-websoft/OpenEA/tree/master/dbp2.0} \cite{sun2021knowing}: DBP2.0 is an entity alignment dataset with a considerable proportion of dangling entities, constructed from the multilingual Infobox Data of DBpedia \cite{auer2007dbpedia}. The dataset contains three pairs of crosslingual KGs, ZH-EN (Chinese to English), JA-EN (Japanese to English), and FR-EN (French to English). Since there are dangling nodes in both the source and target graphs, we separately test source-to-target and target-to-source alignment, consistent with the established approach. A representative feature of the dataset is that the matchable and dangling entities exhibit similar degree distributions which are hard to distinguish, displaying a real-world challenge in aligning knowledge graphs. Based on DBP2.0, we extend the following -minus \& -plus datasets for verification of iPULE on different positive proportions between 20\%-50\%.

\textbf{DBP2.0-plus}: In the construction of the plus dataset, our goal is to construct the dataset that has a higher $\pi_{p}$, and we realize this by reducing a few existing dangling entities on ZH-EN and JA-EN. We randomly delete dangling entities from both source and target KG equally and remove triples containing them. The constructed DBP2.0-plus are reindexed and thus obtain a higher $\pi_{\mathrm{p}}$ value than the original dataset.

\textbf{DBP2.0-minus}: In contrast, to lower the $\pi_{\mathrm{p}}$ value. Given the constraint of preventing new dangling entities that could introduce false information to the KG, we can only reduce the number of matchable entities. Given source and target KG, removing one entity from a pair makes the remaining entity dangling. We randomly delete matchable entities from one side of the pair on both source and target KG uniformly. The constructed DBP2.0-minus are reindexed and thus obtain a lower $\pi_{\mathrm{p}}$ value than the original dataset.


\textbf{GA16K}: This dataset constructed by us exclusively contains dangling nodes in the target graph, facilitating a comparison between our work and baselines that neglect dangling entities. GA16K is extracted from GAKG\footnote{https://github.com/davendw49/gakg} \cite{deng2021gakg}, a Geoscience Academic Knowledge Graph. We first order each type of entity in GAKG according to their degrees and select the entities with a large degree into the entity set. A total of 16,363(16K) separate entities and their relations were extracted to compose the target graph. Then we extract 6,208 entities from the target graph to comprise the source graph. Hence there are 6,208 ground-truth matchable pairs between the source and the target. The remaining 10,155 entities in the target graph are regarded as dangling entities.

\begin{table}[t]
\begin{center}
\renewcommand\arraystretch{0.5}
\setlength{\tabcolsep}{15pt}
\begin{tabular}{p{1cm}c|cccccccc}
	\toprule
	\multicolumn{2}{c|}{GA-DBP15K} & Entities & Dang & Align \\
	\toprule
	\multirow{2}{1.3cm}{$\rm{GA-EN}$} & GA & 16,363 & 16,363 - Align &\multirow{2}{1cm}{16,363$\ast$c\%}\\
	& EN-share & 19,388 + Align & 19,388\\
	\multirow{2}{1.3cm}{$\rm{GA-ZH}$} & GA & 16,363 & 16,363 - Align &\multirow{2}{1cm}{16,363$\ast$c\%}\\
	& ZH-share & 19,572 + Align & 19,572\\
	\multirow{2}{1.3cm}{$\rm{GA-JA}$} & GA & 16,363 & 16,363 - Align &\multirow{2}{1cm}{16,363$\ast$c\%}\\
	& JA-share & 19,814 + Align & 19,814\\
	\multirow{2}{1.3cm}{$\rm{GA-FR}$} & GA & 19,388 & 16,363 - Align &\multirow{2}{1cm}{16,363$\ast$c\%}\\
	& FR-share & 19,661 + Align & 19,661\\
	\bottomrule
\end{tabular}
\end{center}
\caption{Statistics of GA-DBP15K. c = [25\%,20\%,15\%,10\%].}
\label{tab:static_3}
\end{table}

\textbf{GA-DBP15K}: The GA-DBP15K dataset is derived from a subset of entities within GA16K, along with their associated triples, which are then concatenated with the DBP15K dataset, such as EN, resulting in a new dataset pair that shares a proportion of common entities. To achieve the goal, we first extract a certain proportion of triples from GA16K. We then reindex all the entities from the extricated GA16K and DBP15K datasets. Finally, we update the entity and relation indices in the triples, replacing them with the newly assigned indices.

\textbf{Baselines.} Since our work does not take advantage of any side information, we emphasize its comparison with the previous methods purely depending on graph structures. These works majorly incorporate two types:

\textit{Dangling-Entities-Unaware.} We include advanced entity alignment methods in recent years: GCN-Align \cite{wang2018cross}, RSNs \cite{guo2019learning}, MuGNN \cite{cao2019multi}, KECG \cite{li2019semi}. Methods with bootstrapping to generate semi-supervised structure data are also adopted: BootEA \cite{sun2018bootstrapping}, TransEdge \cite{sun2019transedge}, MRAEA \cite{mao2020mraea}, AliNet \cite{sun2020knowledge}, and Dual-AMN \cite{mao2021boosting}.

\textit{Dangling-Entities-Aware.} To the best of our knowledge, the method of \cite{sun2021knowing} is the most fairly comparable baseline which is based on MTransE \cite{chen2017multilingual} and AliNet \cite{sun2020knowledge}. Because MHP \cite{liu2022dangling} over-emphasized more use of labeled dangling data like high-order similarity information which is also based on the above two methods, while SoTead \cite{luo2021graph} and UED \cite{luo2022accurate} utilize additional side-information. SoTead \cite{luo2021graph} and UED \cite{luo2022accurate} can only execute the degraded version on DBP2.0 cause no side-information is available on that. We exclude them from baselines for our methods. \cite{sun2021knowing} introduces three techniques to address the dangling entity issue: nearest neighbor (NN) classification, marginal ranking (MR), and background ranking (BR).

\textbf{Metrics} are set for the dangling entity detection task and the entity alignment task separately. For the entity detection, we evaluate the detection performance by the standard precision, recall, and F1 score. To align the previous dangling detection baselines, we detect dangling entities as `positive' samples and align matchable entities for entity alignment.

For the entity alignment, the metrics slightly differ in the dangling-entities-unaware and dangling-entities-aware settings. We evaluate the baselines unaware of the dangling entities by following their assumptions and using their metric Hits@K (K$ = 1, 10, 50$, H@K for short) on the ranking list $S$. This setting is referred to as \textit{relaxed setting} when $S$ is composed of all ground-truth entities without the dangling ones:
\begin{equation*}
Hits@K= \frac{1}{|S|} \sum_{k=1}^{|S|}\mathbbm{1}(\textrm{rank}_i \le k).
\end{equation*}

In contrast, we refer to a \textit{consolidated setting} for baselines aware of dangling entities. In this setting, the ranking list $S$ also contains all dangling entities. We use H@K in the consolidated setting to evaluate the performance of baselines aware of but not removing dangling entities in the alignment. For baselines where dangling entities are detected and removed before alignment, the direct use of H@K to evaluate entity alignment may not be precise, since errors are introduced in the detection phase. Thus we follow the convention of  \cite{sun2021knowing} to apply a set of metrics for evaluating the accuracy of entity alignment in the consolidated setting. Each of them is derived and introduced as follows.

The standard precision and recall is given as 
\begin{equation*}
\textrm{precision}=\frac{TP}{TP+FP} ,~~ \textrm{recall}=\frac{TP}{TP+FN}
\end{equation*}
for dangling entity detection. We denote the dangling entities as 0 and matchable ones as 1. The subscript $t_{1y}$ suggests that an entity with ground truth $y$ is classified as matchable. Likewise, $t_{y1}$ represents a matchable entity that is classified as $y$ by the detection classifier. If a source entity is dangling but not identified by the detection module, its alignment result is always considered incorrect, i.e., $H@K_{t_{10}}=0$. Hence we have the precision for entity alignment as
\begin{equation}\label{eq:precAlign}
\begin{aligned}
H@1_{t_{1y}}&= \textrm{precision} \cdot H@1_{t_{11}}+(1-\textrm{precision} ) \cdot H@1_{t_{10}}\\
&=\textrm{precision}  \cdot H@1_{t_{11}}.
\end{aligned}
\end{equation}
Similarly, if a matchable entity is falsely excluded by the dangling detection module, this test case is also regarded as incorrect $H@K_{t_{01}}=0$ since the alignment model has no chance to search for alignment. Hence we have the recall for entity alignment as
\begin{equation}\label{eq:recAlign}
\begin{aligned}
H@1_{t_{y1}}&=\textrm{recall} \cdot H@1_{t_{11}}+(1-\textrm{recall}) \cdot H@1_{t_{01}}\\
&=\textrm{recall} \cdot H@1_{t_{11}}.
\end{aligned}
\end{equation}
For the methods that are aware of dangling entities, we use $H@1_{t_{1y}}$ and $H@1_{t_{y1}}$ to denote the precision and recall of the entity alignment task. Similarly, we define the F1 score of the entity alignment as the harmonic average of precision and recall:
\begin{equation}\label{eq:f1Align}
F1=\frac{2 \cdot H@1_{t_{1y}} \cdot H@1_{t_{y1}}}{H@1_{t_{1y}} + H@1_{t_{y1}}}.
\end{equation}
Later, $H@1_{t_{1y}}$ and $H@1_{t_{y1}}$ are referred to as Prec. and Rec. in reporting alignment performance.

\section{Additional Experiment}\label{app_8}
\textbf{RQ1:} How do current network alignment methods perform in unlabeled dangling cases? (see appendix~\ref{Non_Neg})

\textbf{RQ2:} Loss convergence on GA-DBP15K and DBP2.0. (see appendix~\ref{class_prior})

\textbf{RQ3:} How do we select the embedding dimensions? (see appendix~\ref{embedding_select}) 

\textbf{RQ4:} What is the actual efficiency of our approach? (see appendix~\ref{efficiency})

\textbf{RQ5:} Baseline comparison under different pre-aligned seeds? (see appendix~\ref{pre-aligned_seeds})

\textbf{RQ6:} Additional experiments involved LightEA as a strong baseline? (see appendix~\ref{lightea})

\subsection{The Non-Negligibility of dangling Problem (RQ1).}
\label{Non_Neg}
We investigated the performance degradation of various existing EA methods in the face of the dangling problem, which shows that this problem is worth considering.

\begin{table*}[htbp]
\resizebox{\textwidth}{!}{
\renewcommand\arraystretch{1.1}
\begin{tabular}{llllllllll}
	\toprule
	\multirow{2}{*}{Method} & \multicolumn{3}{c}{$\rm{DBP15K_{ZH-EN}}$} & \multicolumn{3}{c}{$\rm{DBP15K_{JA-EN}}$} & \multicolumn{3}{c}{$\rm{DBP15K_{FR-EN}}$} \\
	& H@1 & H@10 & H@50 & H@1 & H@10 & H@50 & H@1 & H@10 & H@50        \\
	\hline
	BootEA & 31.30\textcolor{blue}{$\downarrow 20.96$} &59.70\textcolor{blue}{$\downarrow 16.18$} &71.51\textcolor{blue}{$\downarrow 12.91$} &33.77\textcolor{blue}{$\downarrow 15.27$} &62.66\textcolor{blue}{$\downarrow 11.64$} &73.09\textcolor{blue}{$\downarrow 10.29$} &23.11\textcolor{blue}{$\downarrow 26.72$} &58.39\textcolor{blue}{$\downarrow 18.77$} &71.54\textcolor{blue}{$\downarrow 14.00$} \\
	TransEdge & 49.91\textcolor{blue}{$\downarrow 15.21$} &76.62\textcolor{blue}{$\downarrow 9.79$} &83.44\textcolor{blue}{$\downarrow 7.16$} &54.07\textcolor{blue}{$\downarrow 13.42$} &78.01\textcolor{blue}{$\downarrow 8.25$} &84.00\textcolor{blue}{$\downarrow 6.21$} &48.23\textcolor{blue}{$\downarrow 17.34$} &79.32\textcolor{blue}{$\downarrow 9.70$} &86.69\textcolor{blue}{$\downarrow 6.24$} \\
	MRAEA & 59.45\textcolor{blue}{$\downarrow 5.62$} &83.04\textcolor{blue}{$\downarrow 2.53$} &88.68\textcolor{blue}{$\downarrow 1.56$} &61.60\textcolor{blue}{$\downarrow 4.45$} &83.48\textcolor{blue}{$\downarrow 2.21$} &88.65\textcolor{blue}{$\downarrow 1.50$} &61.55\textcolor{blue}{$\downarrow 6.62$} &85.85\textcolor{blue}{$\downarrow 2.61$} &90.79\textcolor{blue}{$\downarrow 1.69$} \\
	GCN-Align & 31.99\textcolor{blue}{$\downarrow 10.70$} &62.21\textcolor{blue}{$\downarrow 6.45$} &71.93\textcolor{blue}{$\downarrow 4.31$} &32.08\textcolor{blue}{$\downarrow 10.08$} &61.04\textcolor{blue}{$\downarrow 5.86$} &70.34\textcolor{blue}{$\downarrow 3.52$} &30.71\textcolor{blue}{$\downarrow 10.50$} &61.64\textcolor{blue}{$\downarrow 7.07$} &72.45\textcolor{blue}{$\downarrow 5.55$} \\
	RSNs & 43.00\textcolor{blue}{$\downarrow 8.50$} &62.90\textcolor{blue}{$\downarrow 8.00$} &69.70\textcolor{blue}{$\downarrow 7.00$} &20.60\textcolor{blue}{$\downarrow 31.60$} &44.60\textcolor{blue}{$\downarrow 26.60$} &53.20\textcolor{blue}{$\downarrow 23.60$} &36.30\textcolor{blue}{$\downarrow 15.30$} &63.30\textcolor{blue}{$\downarrow 10.10$} &71.70\textcolor{blue}{$\downarrow 7.80$} \\
	MuGNN & 34.66\textcolor{blue}{$\downarrow 14.75$} &68.48\textcolor{blue}{$\downarrow 9.32$} &80.53\textcolor{blue}{$\downarrow 5.69$} &32.93\textcolor{blue}{$\downarrow 14.68$} &66.68\textcolor{blue}{$\downarrow 8.82$} &78.63\textcolor{blue}{$\downarrow 5.67$} &34.93\textcolor{blue}{$\downarrow 14.02$} &68.88\textcolor{blue}{$\downarrow 9.69$} &81.67\textcolor{blue}{$\downarrow 5.32$} \\
	KECG & 35.92\textcolor{blue}{$\downarrow 12.87$} &65.70\textcolor{blue}{$\downarrow 10.35$} &76.44\textcolor{blue}{$\downarrow 8.06$} &32.31\textcolor{blue}{$\downarrow 15.48$} &63.19\textcolor{blue}{$\downarrow 11.96$} &74.42\textcolor{blue}{$\downarrow 9.29$} &32.84\textcolor{blue}{$\downarrow 15.47$} &64.78\textcolor{blue}{$\downarrow 11.98$} &76.70\textcolor{blue}{$\downarrow 8.35$} \\
	AliNet & 53.84\textcolor{blue}{$\downarrow 0.66$} &73.73\textcolor{blue}{$\downarrow 3.16$} &80.30\textcolor{blue}{$\downarrow 1.59$} &52.69\textcolor{blue}{$\downarrow 1.30$} &74.01\textcolor{blue}{$\downarrow 2.60$} &80.91\textcolor{blue}{$\downarrow 1.90$} &54.01\textcolor{blue}{$\downarrow 0.58$} &76.19\textcolor{blue}{$\downarrow 2.74$} &83.25\textcolor{blue}{$\downarrow 1.40$}\\     
	Dual-AMN & 60.72\textcolor{blue}{$\downarrow 12.20$} &83.93\textcolor{blue}{$\downarrow 5.22$} &89.45\textcolor{blue}{$\downarrow 3.54$} &62.29\textcolor{blue}{$\downarrow 10.62$} &83.38\textcolor{blue}{$\downarrow 5.35$} &88.80\textcolor{blue}{$\downarrow 3.21$} &65.33\textcolor{blue}{$\downarrow 10.48$} &87.76\textcolor{blue}{$\downarrow 4.17$} &92.47\textcolor{blue}{$\downarrow 2.24$} \\
	\bottomrule
\end{tabular}
}
\caption{Network alignment performance on DBP15K in the consolidated setting. The blue numbers suggest the drop from the relaxed setting (as with their original implementation).}
\label{tab:cur_dangling_ent_alignment}
\end{table*}

We reproduce the baselines unaware of dangling entities on DBP15K in the relaxed setting. On the same dataset, we rerun their methods but in a consolidated setting that takes the dangling entities into account. Even though DBP15K only comprises a small percentage of dangling entities, the drop in the consolidated setting is significant, as shown in Tab.~\ref{tab:cur_dangling_ent_alignment}.

The reason behind such a performance drop is mainly because most previous works remove dangling entities from the ground truth in measuring their alignment performance. In particular, Dual-AMN takes advantage of the bootstrapping module by incorporating labeled pairs in training. In the relaxed setting, such labeled pairs are ground-truth aligned pairs, but in the consolidated setting, the dangling entities could bring in erroneous alignment which contaminates the alignment of other pairs.

\subsection{Class Prior Estimation Supplementary Experiment (RQ2).}\label{class_prior}
We hope further to verify the estimation and convergence results of iPULE of loss convergence. We list the corresponding loss convergence results in Fig.~\ref{fig:visual_EM_PU_appendix}. The losses under different pre-aligned proportions ($0.25, 0.2, 0.15, 0.1$) on the GA-DBP15K constitute a group of statistical data, and the corresponding loss mean and standard deviation of this set of statistical data are displayed. 

On the other hand, the loss difference is a direct indication of convergence in iPULE's implementation. Thus, we plot the histogram figure of the DBP2.0 (w/ -minus \& -plus) of the corresponding loss difference for more comprehensive. With the progress of the algorithm, and the statistical number of the difference of the smaller loss function occupied the maximum. This shows the convergence of iPULE in this data set from another aspect.

It is worth noticing that, there are performance fluctuations during the constitution of an ideal embedding space during the cold start stage. The figure plotted covers only the cold start subsequent procedure.

\begin{figure*}[htbp]
\begin{minipage}{0.53\linewidth}
\centering
\includegraphics[width=1.02\linewidth]{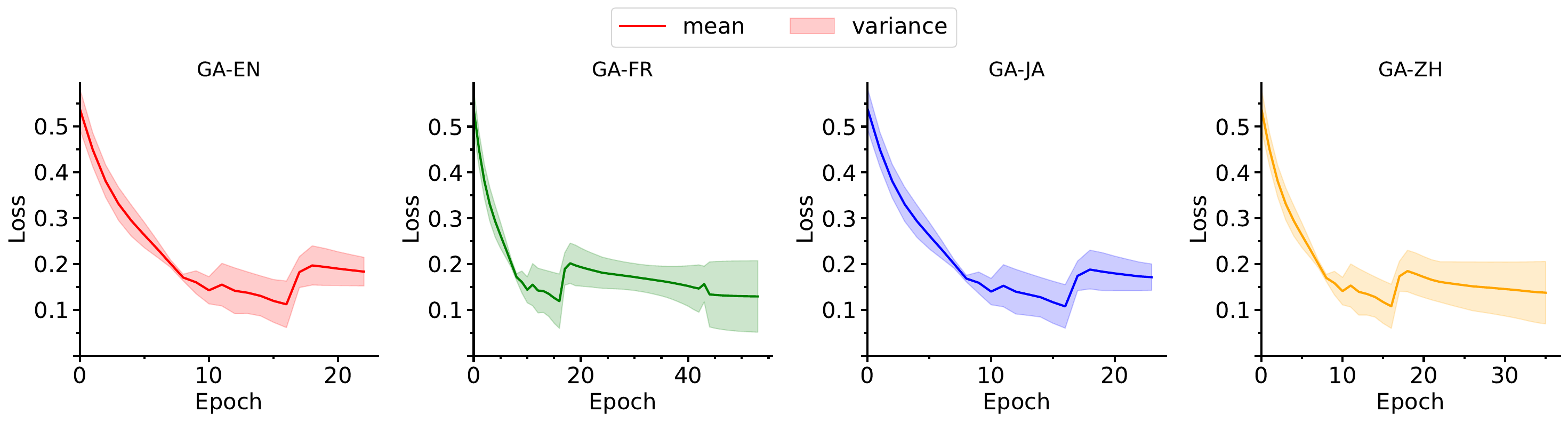}
\end{minipage}
\hfill
\begin{minipage}{0.46\linewidth}
\centering
\includegraphics[width=1\linewidth]{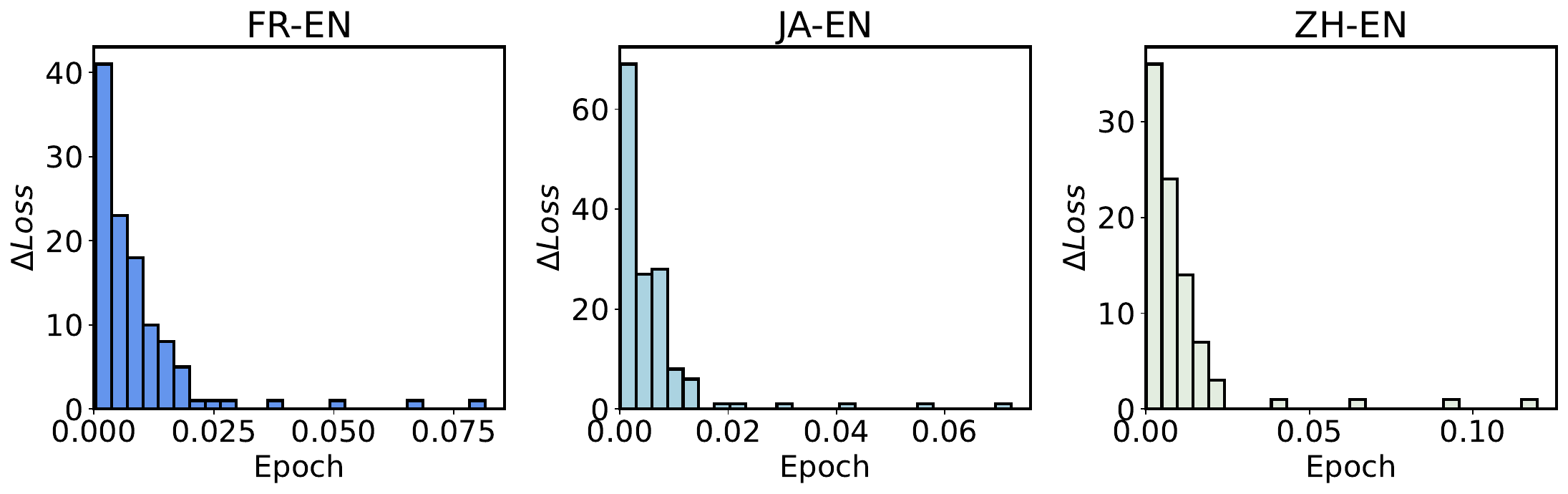}
\end{minipage}
\caption{Visualization of loss convergence on DBP2.0 and GA-DBP15K.}
\label{fig:visual_EM_PU_appendix}
\end{figure*}

\subsection{Embedding Dimension Selection (RQ3).}\label{embedding_select}
Although a higher embedding dimension may encode richer information, an overly high dimension leads to performance decline. We select the GNN dimension according to the principle of \cite{luo2021graph}. Let the dimension of embedding be $d$ and the number of entities is $N$. According to the feature entropy in \cite{luo2021graph}, it holds that $d>8.33\log N$ by the Johnson-Lindenstrauss lemma \cite{larsen2017optimality} that the vector dimension is at $\mathcal{O} (\log N)$ order. In most of our settings, $N$ is approximately $10^5$, and thus $d$ is set to 128.

\begin{table*}[htbp]
\centering
\renewcommand\arraystretch{0.9}
\resizebox{.99\textwidth}{!}{\small
{
	\small
	\setlength{\tabcolsep}{3pt}
	\begin{tabular}{llcccccccccccccccccc}
		\toprule
		\multicolumn{2}{c}{\multirow{2}{*}{Dimension}} &
		\multicolumn{3}{c}{ZH-EN} & \multicolumn{3}{c}{EN-ZH} & \multicolumn{3}{c}{JA-EN} & \multicolumn{3}{c}{EN-JA} &  \multicolumn{3}{c}{FR-EN} & \multicolumn{3}{c}{EN-FR}\\
		\cmidrule(lr){3-5} \cmidrule(lr){6-8} \cmidrule(lr){9-11} \cmidrule(lr){12-14} \cmidrule(lr){15-17} \cmidrule(lr){18-20}
		&& Prec. & Rec. & F1 & Prec. & Rec. & F1 & Prec. & Rec. & F1 & Prec. & Rec. & F1 & Prec. & Rec. & F1 & Prec. & Rec. & F1 \\ 
		\midrule
		\parbox[t]{2mm}{\multirow{3}{*}{\rotatebox[origin=c]{90}{{\small Ours}}}} 
		& 64  & .278 & .446 & .342 & .224 & .501 & .310 & .325 & .410 & .362 & .239 & .470 & .317 & .164 & .224 & .189 & .135 & .257 & .178 \\ 
		& 96  & \textbf{.281} & \textbf{.451} & \textbf{.346} & \textbf{.226} & \textbf{.505} & \textbf{.312} & .329 & .415 & .367 & .241 & .474 & .320 & .172 & .235 & .199 & .143 & .271 & .187 \\
		& 128 & .280 & .448 & .344 & .225 & .502 & .311 & \textbf{.330} & \textbf{.416} & \textbf{.368} & \textbf{.243} & \textbf{.477} & \textbf{.322} & \textbf{.177} & \textbf{.242} & \textbf{.204} & \textbf{.151} & \textbf{.287} & \textbf{.198} \\
		\bottomrule
\end{tabular}}}
\caption{The entity alignment performance over different embedding dimensions on DBP2.0.}
\label{tab:embedding selection}
\end{table*}

As shown in Tab.~\ref{tab:embedding selection}, due to the varying number of entities in datasets, the embedding dimension at the optimal performance varies. For example, the top performance is achieved on ZH-EN and EN-ZH when the embedding dimension is 96 but is obtained on JA-EN, EN-JA, FR-EN, and EN-FR with an embedding dimension of 128. As a compromise, we simplify the embedding representation of edges to enable FR-EN and EN-FR to run with an embedding dimension of 128 with limited memory (For more details, please check our open source code). A higher alignment performance can be achieved if no compromise is made. As we observe, the optimal performance is typically achieved at the theoretically chosen $d$. This also indicates our approach has a memory cost at the order of $\mathcal{O} (\log N)$.

\subsection{Efficiency (RQ4)}\label{efficiency}
The previous works concerning the dangling problem have not analyzed its efficiency in their experiments. Thus we only report the efficiency of our methods without baseline comparison. We evaluate the efficiency of our work on Tab.~\ref{tab:Efficiency} including alignment search time as `Inference Time', KEESA average training time as `Average Training Time', and GPU memory cost on three different datasets of DBP2.0. Data obtained from these three datasets with the top three node numbers is a robust indicator of the efficiency of our method. 

We gathered the mean value of 5 inference time costs for each dataset with the corresponding CPU and GPU memory consumption. Meanwhile, the average training time for each period from early to late is calculated. We enumerate the average training time of epochs 1-20, 21-25, 26-30, 31-35, 36-40, and 41-45.

Cause GPU is employed for not only model training but also inference, as shown on Tab.~\ref{tab:Efficiency}, the inference speed is still very impressive. Specifically, we split the large similarity matrix into multiple independent row blocks to perform the nearest searches within each block, which are well suited for GPU parallel processing. 

It's noteworthy that the average training time correspondingly increases as training progresses from early to late stages. More quasi-supervised information incorporated by us accounts for that. To be specific, as the training deepens, we repeatedly conduct preliminary alignment tests while we gather more and more entity pairs mutually closest under a given metric. The entity pairs serve as the pre-aligned anchor nodes, i.e., the quasi-supervisory information mentioned above.

Besides, we list the CPU and GPU memory consumption required for our work. Memory consumption is influenced by various factors such as complex allocation algorithms, model parameter scales, and hyperparameters. In this problem, we put more attention on triples which characterize one KG, revealing an approximate proportionality between the number of triples and memory consumption.

\begin{table*}[!h]
\centering
\renewcommand\arraystretch{1.0}
\resizebox{.99\linewidth}{!}
{
\small
\setlength{\tabcolsep}{3pt}
\begin{tabular}{llcc cccccc ccccccccccccccccccccc}
	\toprule
	\multicolumn{2}{c}{\multirow{2}{*}{Datasets}} &
	\multicolumn{1}{c}{\multirow{2}{*}{Triples}} &
	\multicolumn{1}{c}{\multirow{2}{*}{Inference Time}} &
	\multicolumn{6}{c}{Average Training Time (from 1 to 45 training epochs)} &
	\multicolumn{1}{c}{\multirow{2}{*}{CPU Memory}} &
	\multicolumn{1}{c}{\multirow{2}{*}{GPU Memory}} &\\
	\cmidrule(lr){5-10}
	& & & & 1-20 & 21-25 & 26-30 & 31-35 & 36-40 & 41-45 \\ 
	\midrule
	\multicolumn{2}{c}{$\rm{DBP2.0_{ZH-EN}}$} & 872,935 & 48.78s & 11.21s/it & 21.16s/it & 25.67s/it & 28.17s/it & 29.21s/it & 30.14s/it  & \multicolumn{1}{c}{10.8GB}  & \multicolumn{1}{c}{32.5GB} \\
	\multicolumn{2}{c}{$\rm{DBP2.0_{JA-EN}}$} & 1,015,545 & 120.76s & 28.14s/it & 53.99s/it & 63.43s/it & 68.27s/it & 70.61s/it & 72.80s/it  & \multicolumn{1}{c}{11.9GB} & \multicolumn{1}{c}{32.6GB} \\
	\multicolumn{2}{c}{$\rm{DBP2.0_{FR-EN}}$} & 2,089,909 & 382.48s & 90.18s/it & 158.18s/it & 190.65s/it & - & - & -  & \multicolumn{1}{c}{27.7GB} & \multicolumn{1}{c}{60.2GB} \\
	\bottomrule
\end{tabular}}
\caption{Efficiency performance of our work on DBP2.0. The measurement of average training time is `s/it', which indicates seconds per iteration. One iteration here represents one training epoch. `-' indicates the absence of data due to training termination.}
\label{tab:Efficiency}
\end{table*}

\subsection{Baseline Comparison Under Different Ratios of Pre-aligned Seeds (RQ5).}\label{pre-aligned_seeds}
Comparing the proposed method with strong baseline models under different ratios of pre-aligned seeds would better demonstrate Lambda's superiority. The experimental baseline includes MtransE w/ BR the SOTA method in previous works, which is also the only open-source method. The results are shown in the Table.~\ref{tab:pre_aligned_seeds}.

\begin{table*}[hbtp]
\centering
\renewcommand\arraystretch{0.8}
\resizebox{.99\textwidth}{!}{\small
{
	\small
	\setlength{\tabcolsep}{3pt}
	\begin{tabular}{llcccccccccccccccccc}
		\toprule
		\multicolumn{2}{c}{\multirow{2}{*}{Methods}} &
		\multicolumn{3}{c}{ZH-EN} & \multicolumn{3}{c}{EN-ZH} & \multicolumn{3}{c}{JA-EN} & \multicolumn{3}{c}{EN-JA} &  \multicolumn{3}{c}{FR-EN} & \multicolumn{3}{c}{EN-FR}\\
		\cmidrule(lr){3-5} \cmidrule(lr){6-8} \cmidrule(lr){9-11} \cmidrule(lr){12-14} \cmidrule(lr){15-17} \cmidrule(lr){18-20}
		& Ratios& Prec. & Rec. & F1 & Prec. & Rec. & F1 & Prec. & Rec. & F1 & Prec. & Rec. & F1 & Prec. & Rec. & F1 & Prec. & Rec. & F1 \\ 
		\midrule		
		\multicolumn{1}{c}{\multirow{3}{*}{MtransE w/ BR}} & 10\% & .161 & .141 & .148 & .105 & .127 & .114 & .102 & .102 & .128 & .102 & .128 & .113 & .133 & .085 & .102 & .096 & .072 & .083 \\
		& 20\% & \textbf{}{.264} & .267 & .265 & .186 & .251 & .213 & .179 & .180 & .251 & .180 & .251 & .210 & .215 & .151 & .179 & .167 & .138 & .150 \\
		& 30\% & \textbf{.312} & .362 & .335 & \textbf{.241} & .376 & .294 & .314 & .363 & .336 & \textbf{.251} & .358 & .295 & \textbf{.265} & .208 & .233 & \textbf{.231} & .213 & .222 \\
		\midrule
		\multicolumn{1}{c}{\multirow{3}{*}{Lambda}} & 10\% & \textbf{.236} & \textbf{.346} & \textbf{.280} & \textbf{.197} & \textbf{.385} & \textbf{.261} & \textbf{.262} & \textbf{.315} & \textbf{.286} & \textbf{.206} & \textbf{.360} & \textbf{.262} & \textbf{.179} & \textbf{.230} & \textbf{.201} & \textbf{.153} & \textbf{.260} & \textbf{.193} \\
		& 20\% & .262 & \textbf{.399} & \textbf{.316} & \textbf{.215} & \textbf{.446} & \textbf{.290} & \textbf{.300} & \textbf{.368} & \textbf{.330} & \textbf{.226} & \textbf{.417} & \textbf{.293} & \textbf{.217} & \textbf{.286} & \textbf{.247} & \textbf{.182} & \textbf{.324} & \textbf{.233} \\
		& 30\% & .279 & \textbf{.447} & \textbf{.344} & .219 & \textbf{.489} & \textbf{.303} & \textbf{.324} & \textbf{.409} & \textbf{.362} & .234 & \textbf{.460} & \textbf{.310} & .234 & \textbf{.320} & \textbf{.271} & .192 & \textbf{.363} & \textbf{.251} \\
		\bottomrule
\end{tabular}}}
\caption{Performance of Lambda and MtransE w/ BR under different ratios of pre-aligned of 10\%, 20\%, and 30\%. \textbf{Bold} indicates optimal performance.}
\label{tab:pre_aligned_seeds}
\end{table*}

\subsection{LightEA as Strong Baseline for Comparison (RQ6).}\label{lightea}
LightEA \cite{mao2022lightea} is recommended as a strong baseline for Lambda. We fixed LightEA's code to include dangling entities into the alignment candidates and evaluated its performance on DBP2.0. Hits@1 and Hits@10 are evaluated in a similar way to the dangling-unaware methods in our paper, as listed below. In comparison, Lambda still outperforms LightEA.

\begin{table*}[hbtp]
\centering
\renewcommand\arraystretch{0.5}
\resizebox{.55\textwidth}{!}{\small
{
	\small
	\setlength{\tabcolsep}{3pt}
	\begin{tabular}{llcccccccccccccccccc}
		\toprule
		\multicolumn{2}{c}{\multirow{2}{*}{Methods}} &
		\multicolumn{2}{c}{ZH-EN} & \multicolumn{2}{c}{JA-EN} & \multicolumn{2}{c}{FR-EN}\\
		\cmidrule(lr){3-4} \cmidrule(lr){5-6} \cmidrule(lr){7-8}
		&& H@1 & H@10 & H@1 & H@10 & H@1 & H@10 \\ 
		\midrule
		& LightEA &60.5\%	&82.9\%	&61.4\%	&\textbf{84.1\%}	&-	&-\\ 
		& Lambda  &\textbf{62.6\%}	 &\textbf{84.7\%}	 &\textbf{62.1\%}	 &84.0\%	 &\textbf{44.1\%}	 &\textbf{69.3\%} \\                
		\bottomrule
\end{tabular}}}
\caption{Comparison of Lambda and LightEA under relaxed setting. `-' indicates the absence of data due to out of time.}
\end{table*}

\section{Discussion}\label{sec:discussion}
\subsection{Alignment Direction}
As we found, the alignment problem with dangling cases has a deeper issue concerning the classification of imbalanced datasets. It originated from the observation that the alignment performance from the source to the target is different from the other direction. The work of \cite{sun2021knowing} has observed that on DBP2.0, choosing the alignment direction from a less populated KG (e.g., ZH, JA, FR) to a more populated KG (e.g., EN) enjoys a higher alignment accuracy but the other way around would lead to a noticeable performance drop. Meanwhile, the dangling entity detection on EN-XX has a higher F1 score than XX-EN, as shown in Tab.~\ref{tab: misclassification in DBP}.

\begin{table}[htbp]
\centering
\renewcommand\arraystretch{0.7}
\setlength{\tabcolsep}{5pt}
\small
\begin{tabular}{llccccccccccccccccccccccccccc}
\toprule
\multicolumn{2}{c}{\multirow{3}{*}{Datasets}} &
\multicolumn{6}{c}{Dangling Detection} & 
\multicolumn{3}{c}{Entity Alignment} &\\
\cmidrule(lr){3-8} \cmidrule(lr){9-12}
\multicolumn{2}{c}{} & \multicolumn{3}{c}{Our Work} & \multicolumn{3}{c}{Trivial} & \multicolumn{3}{c}{Our Work}\\
\cmidrule(lr){3-5} \cmidrule(lr){6-8} \cmidrule(lr){9-12}
&&  Prec. & Rec. & F1 & Prec. & Rec. & F1 & Prec. & Rec. & F1\\ 
\midrule
\multicolumn{2}{c}{ZH-EN} & .763 & \textbf{.925} & .836 & .583 & \textbf{1} & .736 & \textbf{.279} & .447 & \textbf{.344}\\
\multicolumn{2}{c}{EN-ZH} & \textbf{.844} & .909	& \textbf{.875} & \textbf{.609} & \textbf{1} & \textbf{.756} & .219 & \textbf{.489} & .303\\
\midrule
\multicolumn{2}{c}{JA-EN} & .807 & \textbf{.836} & .821 & .580 & \textbf{1} & .734 & \textbf{.324} & \textbf{.409} & \textbf{.362}\\
\multicolumn{2}{c}{EN-JA} & \textbf{.880} & .809	& \textbf{.843} & \textbf{.605} & \textbf{1} & \textbf{.753} & .234 & .320 & .271\\
\midrule
\multicolumn{2}{c}{FR-EN} & .615 & \textbf{.772} & .685 & .439 & \textbf{1} & .610 & \textbf{.234} & .320 & \textbf{.271}\\
\multicolumn{2}{c}{EN-FR} & \textbf{.732} & .749	& \textbf{.740} & \textbf{.554} & \textbf{1} & \textbf{.715} & .192 & \textbf{.363} & .251\\
\bottomrule
\end{tabular}
\caption{Dangling entities detection by our classifier v.s. a trivial one on DBP2.0.}
\label{tab: misclassification in DBP}
\end{table}

By analysis, we think it may be attributed to an improper indication of the dangling entity detection power on imbalanced datasets. This error in removing the predicted dangling entity would accumulate hurting the alignment task. To verify the point, we introduce a trivial classifier that makes a simple choice to classify all entities as dangling (positive) ones, and the detection results are reported in Tab.~\ref{tab: misclassification in DBP}. As all unlabeled entities are trivially classified as dangling ones, the detection metrics of the trivial classifier are all falsely high. The more populated source KG usually has more dangling entities (positives) and thus yields a higher precision in detection. Meanwhile, since the detection classifier actually is not working, more dangling entities participate in the alignment phase, resulting in poor alignment performance. This has explained why EN-XX has a higher dangling detection performance but a lower alignment accuracy compared to the other direction.  

The root of this issue is that matchable and dangling entities comprise imbalanced categories in the classification task, but the corresponding metric is inappropriate. Hence boosting the detection performance does not necessarily improve the alignment performance. We believe more practical indicators of imbalanced datasets should be introduced to the alignment problem.








\subsection{The Similarity between Dual-AMN and Lambda}
The differences between the proposed GNN and Dual-AMN include:

\textbf{Aggregation}: 

1.  The adaptive dangling indicator $r_{e_j}$ is included in Lambda for eliminating dangling pollution.

2.  The indicator $r_{e_j}$ is concatenated as a part of the entity feature.

\textbf{Attention}: 

1.  The attention is scaled by $r_{e_j}$ to filter dangling information.

2.  Relation $r_k$'s embedding $\bf{h}_{r_k}$ is linked to the adaptive dangling indicator of the associated entity $r_{e_j}$, and thus the attention in Eq.~(\ref{attention}) models the relationship between the relation and the entity.


\newpage
\section*{NeurIPS Paper Checklist}

\begin{enumerate}
	
	\item {\bf Claims}
	\item[] Question: Do the main claims made in the abstract and introduction accurately reflect the paper's contributions and scope?
	\item[] Answer: \answerYes{} 
	\item[] Justification: see \textbf{Abstract} and \textbf{Introduction.~\ref{Introduction}}.
	\item[] Guidelines:
	\begin{itemize}
		\item The answer NA means that the abstract and introduction do not include the claims made in the paper.
		\item The abstract and/or introduction should clearly state the claims made, including the contributions made in the paper and important assumptions and limitations. A No or NA answer to this question will not be perceived well by the reviewers. 
		\item The claims made should match theoretical and experimental results, and reflect how much the results can be expected to generalize to other settings. 
		\item It is fine to include aspirational goals as motivation as long as it is clear that these goals are not attained by the paper. 
	\end{itemize}
	
	\item {\bf Limitations}
	\item[] Question: Does the paper discuss the limitations of the work performed by the authors?
	\item[] Answer: \answerYes{} 
	\item[] Justification: We explained the trade-off of our method in \textbf{How does our method work?~\ref{sec:visual}} for a slightly inferior precision reported in Tab.~\ref{tab:our_dangling_ent_alignment} and Tab.~\ref{tab:alignment_ent_alignment}.
	\item[] Guidelines:
	\begin{itemize}
		\item The answer NA means that the paper has no limitation while the answer No means that the paper has limitations, but those are not discussed in the paper. 
		\item The authors are encouraged to create a separate "Limitations" section in their paper.
		\item The paper should point out any strong assumptions and how robust the results are to violations of these assumptions (e.g., independence assumptions, noiseless settings, model well-specification, asymptotic approximations only holding locally). The authors should reflect on how these assumptions might be violated in practice and what the implications would be.
		\item The authors should reflect on the scope of the claims made, e.g., if the approach was only tested on a few datasets or with a few runs. In general, empirical results often depend on implicit assumptions, which should be articulated.
		\item The authors should reflect on the factors that influence the performance of the approach. For example, a facial recognition algorithm may perform poorly when image resolution is low or images are taken in low lighting. Or a speech-to-text system might not be used reliably to provide closed captions for online lectures because it fails to handle technical jargon.
		\item The authors should discuss the computational efficiency of the proposed algorithms and how they scale with dataset size.
		\item If applicable, the authors should discuss possible limitations of their approach to address problems of privacy and fairness.
		\item While the authors might fear that complete honesty about limitations might be used by reviewers as grounds for rejection, a worse outcome might be that reviewers discover limitations that aren't acknowledged in the paper. The authors should use their best judgment and recognize that individual actions in favor of transparency play an important role in developing norms that preserve the integrity of the community. Reviewers will be specifically instructed to not penalize honesty concerning limitations.
	\end{itemize}
	
	\item {\bf Theory Assumptions and Proofs}
	\item[] Question: For each theoretical result, does the paper provide the full set of assumptions and a complete (and correct) proof?
	\item[] Answer: \answerYes{} 
	\item[] Justification: According to the order of appearance, we sort out and give the specific proof in the appendix. 
	\begin{itemize}
		\item The problem setting about PU learning as sec.~\ref{sec:PU_setting}
		\item Proof for Lemma~\ref{theorem_converge}. 
		\item Proof for Theorem~\ref{theorem_unbiased}.
		\item Proof for Theorem~\ref{deviation_bound}.
		\item Proof for Theorem~\ref{theorem_Q_Rpu}. 
		\item Proof for Lemma~\ref{spec_hard}. 
	\end{itemize}
	
	\item[] Guidelines:
	\begin{itemize}
		\item The answer NA means that the paper does not include theoretical results. 
		\item All the theorems, formulas, and proofs in the paper should be numbered and cross-referenced.
		\item All assumptions should be clearly stated or referenced in the statement of any theorems.
		\item The proofs can either appear in the main paper or the supplemental material, but if they appear in the supplemental material, the authors are encouraged to provide a short proof sketch to provide intuition. 
		\item Inversely, any informal proof provided in the core of the paper should be complemented by formal proofs provided in appendix or supplemental material.
		\item Theorems and Lemmas that the proof relies upon should be properly referenced. 
	\end{itemize}
	
	\item {\bf Experimental Result Reproducibility}
	\item[] Question: Does the paper fully disclose all the information needed to reproduce the main experimental results of the paper to the extent that it affects the main claims and/or conclusions of the paper (regardless of whether the code and data are provided or not)?
	\item[] Answer: \answerYes{} 
	\item[] Justification: We introduce the method proposed in this paper in detail in two sections, \textbf{Selective Aggregation with Spectral Contrastive Learning}~\ref{sec:KEESA} and \textbf{Iterative Positive-Unlabeled Learning for Dangling Detection}~\ref{sec:iPULE}, and use the Alg.~\ref{EM-PU} to describe the latter in pseudocode. Meanwhile, we gave implementation details at the beginning of the Experiment ~\ref{exp_begin}. 
	\item[] Guidelines:
	\begin{itemize}
		\item The answer NA means that the paper does not include experiments.
		\item If the paper includes experiments, a No answer to this question will not be perceived well by the reviewers: Making the paper reproducible is important, regardless of whether the code and data are provided or not.
		\item If the contribution is a dataset and/or model, the authors should describe the steps taken to make their results reproducible or verifiable. 
		\item Depending on the contribution, reproducibility can be accomplished in various ways. For example, if the contribution is a novel architecture, describing the architecture fully might suffice, or if the contribution is a specific model and empirical evaluation, it may be necessary to either make it possible for others to replicate the model with the same dataset, or provide access to the model. In general. releasing code and data is often one good way to accomplish this, but reproducibility can also be provided via detailed instructions for how to replicate the results, access to a hosted model (e.g., in the case of a large language model), releasing of a model checkpoint, or other means that are appropriate to the research performed.
		\item While NeurIPS does not require releasing code, the conference does require all submissions to provide some reasonable avenue for reproducibility, which may depend on the nature of the contribution. For example
		\begin{enumerate}
			\item If the contribution is primarily a new algorithm, the paper should make it clear how to reproduce that algorithm.
			\item If the contribution is primarily a new model architecture, the paper should describe the architecture clearly and fully.
			\item If the contribution is a new model (e.g., a large language model), then there should either be a way to access this model for reproducing the results or a way to reproduce the model (e.g., with an open-source dataset or instructions for how to construct the dataset).
			\item We recognize that reproducibility may be tricky in some cases, in which case authors are welcome to describe the particular way they provide for reproducibility. In the case of closed-source models, it may be that access to the model is limited in some way (e.g., to registered users), but it should be possible for other researchers to have some path to reproducing or verifying the results.
		\end{enumerate}
	\end{itemize}

	\item {\bf Open access to data and code}
	\item[] Question: Does the paper provide open access to the data and code, with sufficient instructions to faithfully reproduce the main experimental results, as described in supplemental material?
	\item[] Answer: \answerYes{} 
	\item[] Justification: We provide the code and data in supplemental material which is described in a documented readme.md file.
	\item[] Guidelines:
	\begin{itemize}
		\item The answer NA means that paper does not include experiments requiring code.
		\item Please see the NeurIPS code and data submission guidelines (\url{https://nips.cc/public/guides/CodeSubmissionPolicy}) for more details.
		\item While we encourage the release of code and data, we understand that this might not be possible, so “No” is an acceptable answer. Papers cannot be rejected simply for not including code, unless this is central to the contribution (e.g., for a new open-source benchmark).
		\item The instructions should contain the exact command and environment needed to run to reproduce the results. See the NeurIPS code and data submission guidelines (\url{https://nips.cc/public/guides/CodeSubmissionPolicy}) for more details.
		\item The authors should provide instructions on data access and preparation, including how to access the raw data, preprocessed data, intermediate data, and generated data, etc.
		\item The authors should provide scripts to reproduce all experimental results for the new proposed method and baselines. If only a subset of experiments are reproducible, they should state which ones are omitted from the script and why.
		\item At submission time, to preserve anonymity, the authors should release anonymized versions (if applicable).
		\item Providing as much information as possible in supplemental material (appended to the paper) is recommended, but including URLs to data and code is permitted.
	\end{itemize}

	\item {\bf Experimental Setting/Details}
	\item[] Question: Does the paper specify all the training and test details (e.g., data splits, hyperparameters, how they were chosen, type of optimizer, etc.) necessary to understand the results?
	\item[] Answer: \answerYes{} 
	\item[] Justification: We gave the main implementation details at the beginning of the Experiment ~\ref{exp_begin}. \textit{Statistics of the experimental dataset and baselines} in appendix~\ref{app_7} and \textit{additional experiment} in appendix~\ref{app_8} also cover that including dataset construction details and hyperparameter selection criteria. 
	\item[] Guidelines:
	\begin{itemize}
		\item The answer NA means that the paper does not include experiments.
		\item The experimental setting should be presented in the core of the paper to a level of detail that is necessary to appreciate the results and make sense of them.
		\item The full details can be provided either with the code, in appendix, or as supplemental material.
	\end{itemize}
	
	\item {\bf Experiment Statistical Significance}
	\item[] Question: Does the paper report error bars suitably and correctly defined or other appropriate information about the statistical significance of the experiments?
	\item[] Answer: \answerYes{} 
	\item[] Justification: We provide the corresponding mean and standard deviation curves in Fig.~\ref{fig:visual_EM_PU_appendix} by calculating the loss function of different alignment ratios $0.25, 0.2, 0.15, 0.1$, and the corresponding mean and standard deviation are drawn. Other experimental data have also been measured many times to take the mean value.
	\item[] Guidelines:
	\begin{itemize}
		\item The answer NA means that the paper does not include experiments.
		\item The authors should answer "Yes" if the results are accompanied by error bars, confidence intervals, or statistical significance tests, at least for the experiments that support the main claims of the paper.
		\item The factors of variability that the error bars are capturing should be clearly stated (for example, train/test split, initialization, random drawing of some parameter, or overall run with given experimental conditions).
		\item The method for calculating the error bars should be explained (closed form formula, call to a library function, bootstrap, etc.)
		\item The assumptions made should be given (e.g., Normally distributed errors).
		\item It should be clear whether the error bar is the standard deviation or the standard error of the mean.
		\item It is OK to report 1-sigma error bars, but one should state it. The authors should preferably report a 2-sigma error bar than state that they have a 96\% CI, if the hypothesis of Normality of errors is not verified.
		\item For asymmetric distributions, the authors should be careful not to show in tables or figures symmetric error bars that would yield results that are out of range (e.g. negative error rates).
		\item If error bars are reported in tables or plots, The authors should explain in the text how they were calculated and reference the corresponding figures or tables in the text.
	\end{itemize}
	
	\item {\bf Experiments Compute Resources}
	\item[] Question: For each experiment, does the paper provide sufficient information on the computer resources (type of compute workers, memory, time of execution) needed to reproduce the experiments?
	\item[] Answer: \answerYes{} 
	\item[] Justification: We gave GPU and CPU resources needed for the experiment in \textbf{Implementation Detail} part at the beginning of the Experiment ~\ref{exp_begin}. Additionally, time of execution such as training \& inference time is provided in Efficiency.~\ref{efficiency}.
	\item[] Guidelines:
	\begin{itemize}
		\item The answer NA means that the paper does not include experiments.
		\item The paper should indicate the type of compute workers CPU or GPU, internal cluster, or cloud provider, including relevant memory and storage.
		\item The paper should provide the amount of compute required for each of the individual experimental runs as well as estimate the total compute. 
		\item The paper should disclose whether the full research project required more compute than the experiments reported in the paper (e.g., preliminary or failed experiments that didn't make it into the paper). 
	\end{itemize}
	
	\item {\bf Code Of Ethics}
	\item[] Question: Does the research conducted in the paper conform, in every respect, with the NeurIPS Code of Ethics \url{https://neurips.cc/public/EthicsGuidelines}?
	\item[] Answer: \answerYes{} 
	\item[] Justification: The dataset construction and usage do not contain any information that endangers personal privacy, and it is licensed.
	\item[] Guidelines: 
	\begin{itemize}
		\item The answer NA means that the authors have not reviewed the NeurIPS Code of Ethics.
		\item If the authors answer No, they should explain the special circumstances that require a deviation from the Code of Ethics.
		\item The authors should make sure to preserve anonymity (e.g., if there is a special consideration due to laws or regulations in their jurisdiction).
	\end{itemize}

	\item {\bf Broader Impacts}
	\item[] Question: Does the paper discuss both potential positive societal impacts and negative societal impacts of the work performed?
	\item[] Answer: \answerNA{} 
	\item[] Justification: There is no societal impact of the work performed.
	\item[] Guidelines:
	\begin{itemize}
		\item The answer NA means that there is no societal impact of the work performed.
		\item If the authors answer NA or No, they should explain why their work has no societal impact or why the paper does not address societal impact.
		\item Examples of negative societal impacts include potential malicious or unintended uses (e.g., disinformation, generating fake profiles, surveillance), fairness considerations (e.g., deployment of technologies that could make decisions that unfairly impact specific groups), privacy considerations, and security considerations.
		\item The conference expects that many papers will be foundational research and not tied to particular applications, let alone deployments. However, if there is a direct path to any negative applications, the authors should point it out. For example, it is legitimate to point out that an improvement in the quality of generative models could be used to generate deepfakes for disinformation. On the other hand, it is not needed to point out that a generic algorithm for optimizing neural networks could enable people to train models that generate Deepfakes faster.
		\item The authors should consider possible harms that could arise when the technology is being used as intended and functioning correctly, harms that could arise when the technology is being used as intended but gives incorrect results, and harms following from (intentional or unintentional) misuse of the technology.
		\item If there are negative societal impacts, the authors could also discuss possible mitigation strategies (e.g., gated release of models, providing defenses in addition to attacks, mechanisms for monitoring misuse, mechanisms to monitor how a system learns from feedback over time, improving the efficiency and accessibility of ML).
	\end{itemize}
	
	\item {\bf Safeguards}
	\item[] Question: Does the paper describe safeguards that have been put in place for responsible release of data or models that have a high risk for misuse (e.g., pretrained language models, image generators, or scraped datasets)?
	\item[] Answer: \answerNA{} 
	\item[] Justification: The paper poses no such risks.
	\item[] Guidelines:
	\begin{itemize}
		\item The answer NA means that the paper poses no such risks.
		\item Released models that have a high risk for misuse or dual-use should be released with necessary safeguards to allow for controlled use of the model, for example by requiring that users adhere to usage guidelines or restrictions to access the model or implementing safety filters. 
		\item Datasets that have been scraped from the Internet could pose safety risks. The authors should describe how they avoided releasing unsafe images.
		\item We recognize that providing effective safeguards is challenging, and many papers do not require this, but we encourage authors to take this into account and make a best faith effort.
	\end{itemize}
	
	\item {\bf Licenses for existing assets}
	\item[] Question: Are the creators or original owners of assets (e.g., code, data, models), used in the paper, properly credited and are the license and terms of use explicitly mentioned and properly respected?
	\item[] Answer: \answerYes{} 
	\item[] Justification: In this paper, we give all the sufficient reference materials. We provide the code and data in the supplemental material and describe them in a documented readme.md file, where more required information is clarified. 
	\item[] Guidelines:
	\begin{itemize}
		\item The answer NA means that the paper does not use existing assets.
		\item The authors should cite the original paper that produced the code package or dataset.
		\item The authors should state which version of the asset is used and, if possible, include a URL.
		\item The name of the license (e.g., CC-BY 4.0) should be included for each asset.
		\item For scraped data from a particular source (e.g., website), the copyright and terms of service of that source should be provided.
		\item If assets are released, the license, copyright information, and terms of use in the package should be provided. For popular datasets, \url{paperswithcode.com/datasets} has curated licenses for some datasets. Their licensing guide can help determine the license of a dataset.
		\item For existing datasets that are re-packaged, both the original license and the license of the derived asset (if it has changed) should be provided.
		\item If this information is not available online, the authors are encouraged to reach out to the asset's creators.
	\end{itemize}
	
	\item {\bf New Assets}
	\item[] Question: Are new assets introduced in the paper well documented and is the documentation provided alongside the assets?
	\item[] Answer: \answerYes{} 
	\item[] Justification: We introduce the dataset GA16K, GA-DBP15K and DBP2.0-minus \& -plus in detail in the appendix~\ref{app_7}. We provide the code and data in the supplemental material and describe them in a documented readme.md file, where more required information is clarified.
	\item[] Guidelines:
	\begin{itemize}
		\item The answer NA means that the paper does not release new assets.
		\item Researchers should communicate the details of the dataset/code/model as part of their submissions via structured templates. This includes details about training, license, limitations, etc. 
		\item The paper should discuss whether and how consent was obtained from people whose asset is used.
		\item At submission time, remember to anonymize your assets (if applicable). You can either create an anonymized URL or include an anonymized zip file.
	\end{itemize}
	
	\item {\bf Crowdsourcing and Research with Human Subjects}
	\item[] Question: For crowdsourcing experiments and research with human subjects, does the paper include the full text of instructions given to participants and screenshots, if applicable, as well as details about compensation (if any)? 
	\item[] Answer: \answerNA{} 
	\item[] Justification: The paper does not involve crowdsourcing nor research with human subjects.
	\item[] Guidelines:
	\begin{itemize}
		\item The answer NA means that the paper does not involve crowdsourcing nor research with human subjects.
		\item Including this information in the supplemental material is fine, but if the main contribution of the paper involves human subjects, then as much detail as possible should be included in the main paper. 
		\item According to the NeurIPS Code of Ethics, workers involved in data collection, curation, or other labor should be paid at least the minimum wage in the country of the data collector. 
	\end{itemize}
	
	\item {\bf Institutional Review Board (IRB) Approvals or Equivalent for Research with Human Subjects}
	\item[] Question: Does the paper describe potential risks incurred by study participants, whether such risks were disclosed to the subjects, and whether Institutional Review Board (IRB) approvals (or an equivalent approval/review based on the requirements of your country or institution) were obtained?
	\item[] Answer: \answerNA{} 
	\item[] Justification: The paper does not involve crowdsourcing nor research with human subjects.
	\item[] Guidelines:
	\begin{itemize}
		\item The answer NA means that the paper does not involve crowdsourcing nor research with human subjects.
		\item Depending on the country in which research is conducted, IRB approval (or equivalent) may be required for any human subjects research. If you obtained IRB approval, you should clearly state this in the paper. 
		\item We recognize that the procedures for this may vary significantly between institutions and locations, and we expect authors to adhere to the NeurIPS Code of Ethics and the guidelines for their institution. 
		\item For initial submissions, do not include any information that would break anonymity (if applicable), such as the institution conducting the review.
	\end{itemize}
\end{enumerate}

\end{document}